\documentclass{article} % For LaTeX2e
\usepackage{iclr2027_conference,times}
\usepackage{asymptote}
\usepackage{booktabs}
\usepackage{colortbl}
\definecolor{lightblue}{RGB}{230,245,255}
\usepackage{dsfont}
\usepackage{fontawesome}
\usepackage{wrapfig}
\usepackage{algorithm}
\usepackage{multirow}

\usepackage{graphicx}
\usepackage{booktabs}
\usepackage{caption}
\usepackage{adjustbox}
\usepackage{pifont}
\usepackage{epstopdf}
\usepackage{hyperref}
\hypersetup{  
    colorlinks=true,  
    citecolor=cyan,
}

\usepackage{algorithm}
\usepackage{algorithmicx}
\usepackage{algpseudocode}  % 更现代的伪代码环境，支持 For/If 等

\usepackage[most]{tcolorbox} % 导言区引入
\usepackage{tcolorbox} % 导言区引入

\usepackage[a4paper,margin=1in]{geometry}

\usepackage{amsmath,amssymb,amsthm,bm}
\usepackage{mathtools}          % \DeclarePairedDelimiter, split, etc.
\allowdisplaybreaks             % let long display blocks break across pages

\usepackage{booktabs}
\usepackage{tabularx}
\usepackage{microtype}          % subtle spacing; helps avoid overfull lines
\usepackage{enumitem}
\newtheorem{assumption}{Assumption}[section]
\newtheorem{definition}{Definition}[section]
\newtheorem{lemma}{Lemma}[section]
\newtheorem{proposition}{Proposition}[section]
\newtheorem{theorem}{Theorem}[section]
\newtheorem{corollary}{Corollary}[section]
\newtheorem{remark}{Remark}[section]

\DeclareMathOperator{\KL}{KL}
\DeclareMathOperator{\SNR}{SNR}
\DeclareMathOperator{\CV}{CV}
\DeclareMathOperator{\tr}{tr}
\DeclareMathOperator{\Var}{Var}

\DeclareMathOperator{\softmax}{softmax}
\DeclareMathOperator{\diag}{diag}
\DeclareMathOperator*{\argmin}{arg\,min}
\newcommand{\R}{\mathbb{R}}
\newcommand{\E}{\mathbb{E}}
\newcommand{\bh}{\mathbf{h}}
\newcommand{\bz}{\mathbf{z}}
\newcommand{\bt}{\mathbf{t}}
\newcommand{\bs}{\mathbf{s}}
\newcommand{\bu}{\mathbf{u}}
\newcommand{\bv}{\mathbf{v}}
\newcommand{\bw}{\mathbf{w}}
\newcommand{\br}{\mathbf{r}}
\newcommand{\bp}{\mathbf{p}}
\newcommand{\bq}{\mathbf{q}}
\newcommand{\bc}{\mathbf{c}}

\newcommand{\bdelta}{\bm{\delta}}
\newcommand{\bDelta}{\bm{\Delta}}
\newcommand{\bmu}{\bm{\mu}}
\newcommand{\bxi}{\bm{\xi}}
\newcommand{\Dh}{\bDelta_{\bh}}      % first-order activation shift from a weight update
\newcommand{\PW}{P_W}                 % projector onto the weight principals
\newcommand{\Gnorm}[1]{\lVert #1\rVert_G}

\usepackage{amsmath,amsfonts,bm}

\def\eqref#1{equation~\ref{#1}}
\def\1{\bm{1}}

\DeclareMathAlphabet{\mathsfit}{\encodingdefault}{\sfdefault}{m}{sl}
\SetMathAlphabet{\mathsfit}{bold}{\encodingdefault}{\sfdefault}{bx}{n}

\providecommand{\E}{\mathbb{E}}

\providecommand{\R}{\mathbb{R}}

\providecommand{\softmax}{\mathrm{softmax}}

\providecommand{\KL}{D_{\mathrm{KL}}}
\providecommand{\Var}{\mathrm{Var}}

\ifdefined\argmin\else
\DeclareMathOperator*{\argmin}{arg\,min}
\fi

\tcbset{
    takeaway/.style={
        colback=blue!5,      % 背景色（浅蓝）
        colframe=black,      % 边框颜色
        sharp corners,       % 直角边框（或用rounded corners圆角）
        boxrule=0.5pt,       % 边框粗细
        coltitle=white,      % 标题字体颜色
        fonttitle=\bfseries, % 标题字体加粗
        colbacktitle=black,  % 标题背景色
        enhanced,            % 开启高级功能
        attach boxed title to top left={yshift=-2mm, xshift=2mm}, % 标题位置
        boxed title style={
            sharp corners,
            boxrule=0pt,
            colframe=black,
        },
    }
}

\usepackage{url}

\title{Learning to Steer, Steering to See: Unveiling the Geometry of RLVR in Large Language Models via Trainable Vectors}

\author{
\textbf{Yuchen Cai}$^{1,2}$\thanks{This work was done during an internship at Tencent.},
\textbf{Ding Cao}$^{1}$,
\textbf{Qixiang Yin}$^{3}$,
\textbf{Xin Xu}$^{2}$,
\textbf{Kai Yang}$^{2}$,
\textbf{Siye Wu}$^{2}$,
\textbf{Pengyuan Wang}$^{2}$, \\
\textbf{Jiaxuan Wang}$^{2}$,
\textbf{Weijie Liu}$^{2}$,
\textbf{Saiyong Yang}$^{2}$,
\textbf{Guangzhong Sun}$^{1}$,
\textbf{Guiquan Liu}$^{1}$\thanks{Corresponding author: \texttt{gqliu@ustc.edu.cn, fjf@mail.ustc.edu.cn}},
\textbf{Junfeng Fang}$^{1}$\footnotemark[3]
\\
$^{1}$USTC, $^{2}$Tencent Hunyuan, $^{3}$BUPT \\
\texttt{\{caiyuchen,caoding\}@mail.ustc.edu.cn}
}

\iclrfinalcopy % Uncomment for camera-ready version, but NOT for submission.
\begin{document}

\maketitle

\begin{abstract}
Reinforcement learning (RL) has emerged as a key paradigm for enhancing the
reasoning capabilities of large language models. However, the high
dimensionality of parameter updates makes RL training dynamics difficult to
analyze, obscuring the mechanisms underlying these gains. In this work, we study reinforcement learning with verifiable rewards (RLVR) and use vector steering as an analytical tool to identify a low-dimensional effective manifold in activation space that is closely associated with RL-induced performance gains. We further uncover two key geometric properties of this
manifold. \textbf{(1) Effective Manifold Capacity:} The capacity required to reproduce RL-induced gains can be very small, but it is not infinitely compressible. When the capacity is compressed to an extremely low level, the intervention dimensionality and the ability to express input-dependent corrections become
key constraints, and this requirement further varies with injection depth.
\textbf{(2) Control Manifold Separation:} Effective control directions lie
predominantly in the low-variance complement of the activation principal
subspace. Within the same task and base model, the learned geometry remains
largely consistent across training configurations; across tasks, geometric
alignment correlates with capability transfer. We conduct experiments on 5 LLMs and 6 tasks with verifiable rewards, and the results support the above properties. Based on these findings, we propose \textbf{Alpha-Stabler}, a plug-and-play training framework
comprising two modules: a \textbf{Predictor} that monitors principal-subspace
intrusion to provide early warnings of training collapse, and a
\textbf{Controller} that removes the principal-subspace component of activation gradients during backpropagation while preserving their components in the orthogonal complement of the principal subspace.
Experiments show that Alpha-Stabler can stabilize training for 2,000 steps and consistently enhance the gains brought by reinforcement learning. This work advances the understanding of RL from an activation-manifold perspective and offers practical insights for more robust post-training. Our code is available at: \href{https://github.com/caiyuchen-ustc/On_Policy_Vector_Training}{https://github.com/caiyuchen-ustc/On\_Policy\_Vector\_Training}.
\end{abstract}

\begin{flushright}
\small\itshape
``Mille viae ducunt homines per saecula Romam.''\\
--- Alain de Lille
\end{flushright}

\section{Introduction}
Reinforcement learning (RL) has become central to post-training for large
language models (LLMs), yielding substantial gains in reasoning performance
\citep{openai2025, xiao2026mimo, xu2026deepseek}. Yet how RL produces these
improvements internally remains incompletely understood. High-dimensional
parameter updates, optimization noise, and sensitivity to training
configurations make it difficult to distinguish behaviorally meaningful changes
from incidental variation. Prior work has examined sampling efficiency
\citep{yue2025doesreinforcementlearningreally}, entropy dynamics
\citep{cui2025entropymechanismreinforcementlearning}, scaling behavior
\citep{tan2026scalingbehaviorsllmreinforcement}, and update geometry
\citep{cai2025predictability, cai2026learning, zhu2025path}. A complementary
question remains unanswered: what structure do RL-induced behavioral changes
exhibit in activation space, and how does this structure relate to capability
transfer and training stability?

We view RL post-training as an intervention problem in activation space. Concretely, we freeze the base model and learn input-invariant shared vectors at selected layers to distill its RL-tuned counterpart, thereby testing whether RL-induced gains can be recovered by low-dimensional activation interventions. This directly probes whether RL gains admit a low-dimensional description and when this description breaks down. We thereby identify a low-dimensional effective manifold in activation space tied to RL-induced gains, and characterize two geometric properties of this manifold: \textbf{Effective Manifold Capacity} and \textbf{Control Manifold Separation}.

Our first finding concerns the effective capacity of this manifold. With an RL-trained model as the teacher, a single input-invariant vector per controlled layer recovers over 85\% of the gain achieved by full fine-tuning on most evaluated tasks, as shown in Fig.~\ref{fig1} (a) left. However, this compressibility has clear limits. The first arises from the teacher--student distribution gap: when the two distributions are close, a single-vector intervention suffices, whereas a widening gap requires more flexible parameterizations. The second arises from insertion depth: fixed-vector interventions work well at low-to-middle layers but degrade near the output, as shown in Fig.~\ref{fig1}(a) right. Through nonlinear analysis, we further find that this degradation does not stem from weak optimization signals at deeper layers, but from the expressiveness bottleneck of low-dimensional interventions at high layers. We summarize these findings as \textbf{Property 1 (Effective Manifold Capacity)}: the capacity required to reproduce RL-induced gains can be very small, but it is not infinitely compressible; this capacity depends on how well the intervention form matches the required correction.

\begin{figure}[t]
    \includegraphics[width=1\textwidth]{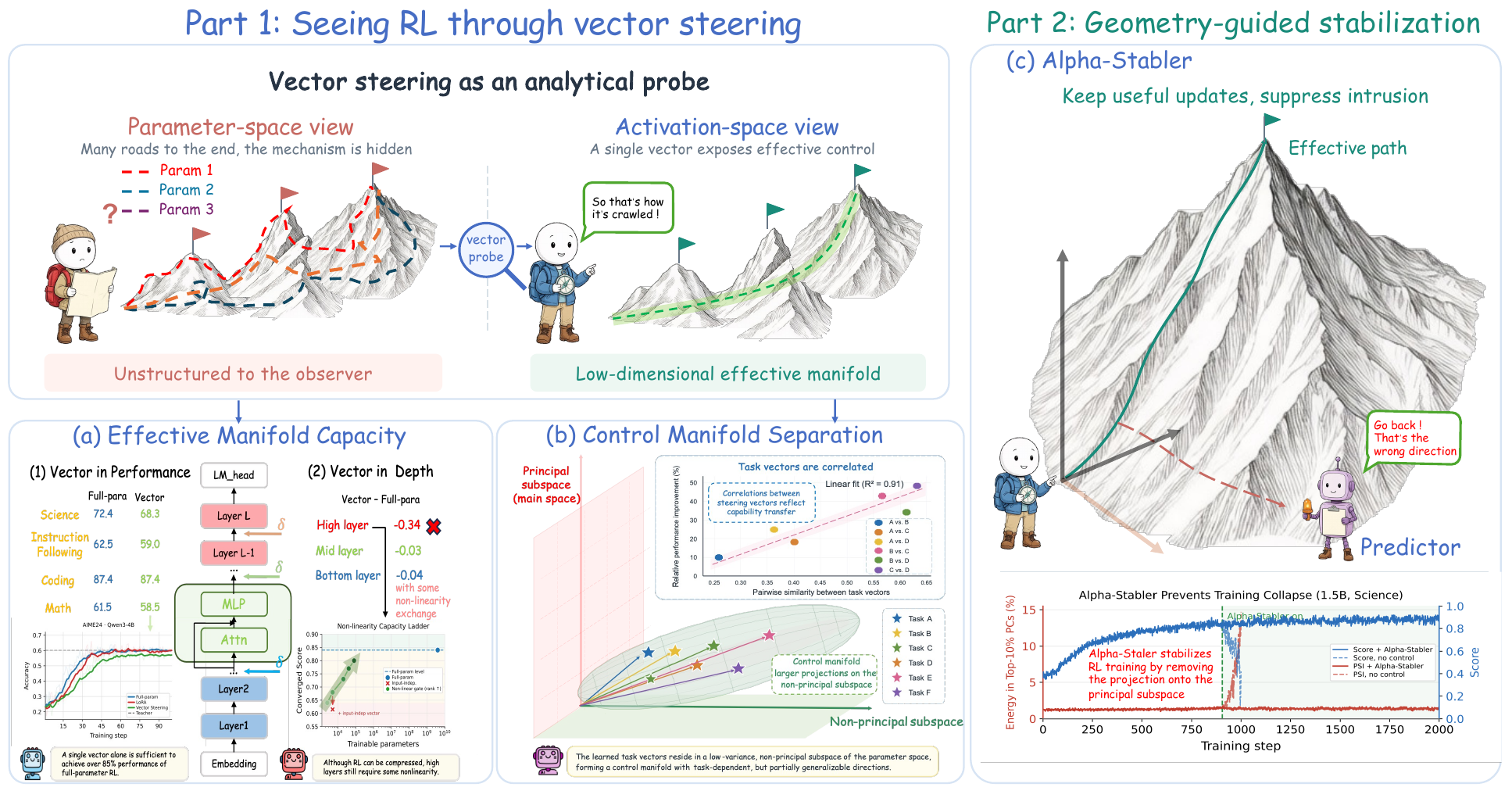}
    \caption{The single steering vector reveals that RLVR-induced changes form a low-dimensional, depth-sensitive control geometry, characterized by \textbf{(a)} effective manifold capacity and \textbf{(b)} control-manifold separation. Building on these findings, \textbf{(c)} Alpha-Stabler stabilizes training via principal-subspace intrusion monitoring and backward gradient projection.}
    \label{fig1}
    \vspace{-0.2cm}
\end{figure}

We next examine the geometry of the effective manifold. By extending the single-vector probe to multiple mutually orthogonal vectors per layer, we find that although the standalone performance of later directions declines with extraction order, they can still independently support the target behavior, indicating that the leading direction is dominant but not uniquely effective. While RL's training parameters differ substantially across configurations in parameter space, they consistently converge, after compression, to a set of reproducible task-related directions. Across tasks, the alignment between direction vectors strongly reflects transfer gains, and these directions consistently concentrate in the low-variance complement of the activation principal subspace, as shown in Fig.~\ref{fig1} (b). We characterize this organization as \textbf{Property 2 (Control Manifold Separation)}: RL-induced control converges to a set of reproducible task-related directions that are largely separated from the variance-dominant representation subspace.

These two properties indicate that RL-induced behavioral changes admit a structured, low-dimensional description in activation space. During successful RL training, activation shifts remain predominantly in the low-variance complement of the principal subspace, whereas principal-subspace intrusion accompanies performance degradation and collapse. To this end, we propose \textbf{Alpha-Stabler}, a lightweight, plug-and-play framework for stabilizing RL training, as shown in Fig.~\ref{fig1} (c). It introduces no additional trainable parameters and requires no modification to the optimizer, reward function, or model architecture. Alpha-Stabler consists of a \textbf{Predictor} that monitors principal-subspace intrusion (PSI) against a fixed principal subspace from the frozen base model and warns before collapse, and a \textbf{Controller} that removes the principal-subspace component of activation gradients during backpropagation while preserving their orthogonal complements. Experiments show that, with no significant additional computation and zero additional trainable parameters, Alpha-Stabler achieves more stable reward growth over thousands of training steps than gradient clipping and other gradient projection methods.

In summary, this work identifies two geometric properties of RLVR-induced changes: \textbf{Effective Manifold Capacity} and \textbf{Control Manifold Separation}. These properties reveal that the seemingly complex dynamics of RL post-training are governed by a surprisingly simple structure: a low-dimensional manifold confined to the low-variance complement of the activation principal subspace. Building on these properties, we introduce \textbf{Alpha-Stabler}, a plug-and-play framework that operates with negligible overhead and provides a practical tool for stabilizing large-scale RL training.

\begin{figure}[t]
    \includegraphics[width=1\textwidth]{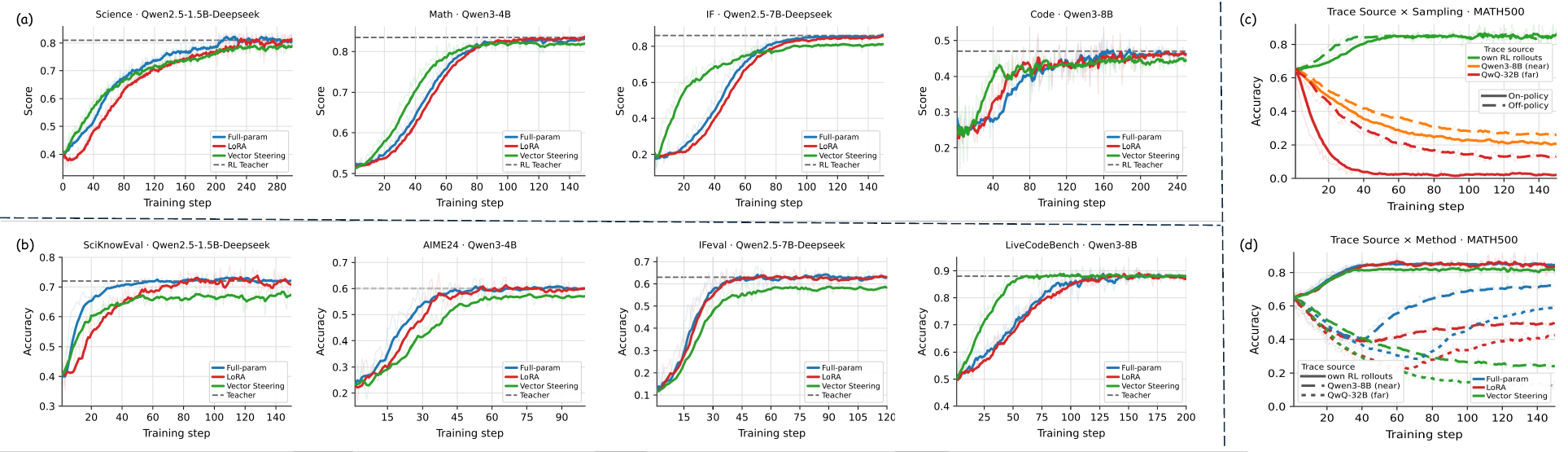}
    \caption{\textbf{(a)} On-policy distillation scores on the training set. \textbf{(b)} Off-policy distillation scores on the evaluation set. \textbf{(c)} Scores of vector steering under varying teacher types. \textbf{(d)} Scores of full-parameter training, LoRA, and vector
steering under off-policy distillation at three distribution gaps.}
    \label{fig2}
\end{figure}

\section{Effective Manifold Capacity}

\subsection{Experimental Setting}
\label{Experimental Setting}

\textbf{Models and Datasets.}
We evaluate DeepSeek-R1-Distill-Qwen-1.5B/7B \citep{guo2025deepseek} and Qwen3-4B/8B/14B \citep{qwen3} on four task families: mathematical reasoning, scientific reasoning, code generation, and instruction following. For each task, we train a teacher with GRPO \citep{guo2025deepseek} or DAPO \citep{yu2025dapoopensourcellmreinforcement} and perform on- or off-policy distillation \citep{li2026rethinking}, comparing three update parameterizations: full-parameter training, LoRA \citep{hu2021loralowrankadaptationlarge}, and vector steering. Details are provided in Appendix~\ref{Experimental Setup}.

\textbf{Vector Steering.}
Consider a frozen base model with $L$ decoder layers and hidden dimension $d$. For a set of controlled layers $\mathcal{S}\subseteq\{1,\ldots,L\}$, we add a trainable vector to each layer's output:
\begin{equation}
\mathbf{h}_\ell' = \mathbf{h}_\ell + \boldsymbol{\delta}_\ell, \qquad \ell\in\mathcal{S},
\end{equation}
where $\mathbf{h}_\ell\in\mathbb{R}^{d}$ is the hidden state at a given token position, and $\boldsymbol{\delta}_\ell\in\mathbb{R}^{d}$ is a layer-specific vector shared across all inputs and token positions. Only $\{\boldsymbol{\delta}_\ell\}_{\ell\in\mathcal{S}}$ are optimized under the corresponding distillation objective. We then examine the compressibility of RL updates in Section~\ref{Compressibility of RL Learning Signals} and further analyze its properties in Section~\ref{Layer-wise Compressibility of RL Updates}.

\subsection{Compressibility of RL Learning Signals}
\label{Compressibility of RL Learning Signals}

\textbf{Compressibility across Distillation Regimes.}
We examine how much of the capability gain obtained by RL can be retained after distillation under restricted update parameterizations. As shown in Fig.~\ref{fig2} (a), under on-policy distillation, LoRA and full-parameter training perform comparably, while a single steering vector per controlled layer still recovers over $85\%$ of the RL gain. To test whether this compressibility depends on student-generated trajectories \citep{YanHeLiuJin26}, we also evaluate off-policy distillation with teacher-generated trajectories under the same update parameterizations. As shown in Fig.~\ref{fig2} (b), steering retains a comparably high recovery rate on evaluation datasets. Together, these results indicate that RL-induced gains can be highly compressed, and that fixed layer-wise activation offsets alone suffice to retain most of them.

\textbf{Effect of Teacher Type and Update Capacity.}
These recovery rates, however, have limits. We fix the student and vary the teacher under both distillation regimes to examine the sensitivity of vector steering to the teacher--student distribution gap. As shown in Fig.~\ref{fig2} (c), vector steering effectively transfers the capabilities of RL-trained teachers, whereas steering performance degrades substantially with the tested non-RL teachers, with some settings collapsing as the gap widens. This indicates that teacher--student distribution matching is a key factor governing the effectiveness of fixed activation offsets.

A natural explanation is that the required corrections become more input-dependent as the gap widens, exceeding what an input-invariant fixed offset can express. If so, parameterizations with different expressive capacities should exhibit a consistent ordering. As shown in Fig.~\ref{fig2} (d), under off-policy distillation, the three parameterizations perform comparably when the gap is small, whereas a larger gap yields a clear ordering: full-parameter training performs best, followed by LoRA, and vector steering worst. This ordering matches their expressive capacities: full-parameter training allows model-wide adjustments, LoRA retains input-dependent corrections under low-rank constraints, and vector steering only applies fixed offsets. Therefore, a larger gap raises the input dependence required of the correction, and fixed additive corrections are the first to reach their capacity limit \citep{cai2026learning}.

\textbf{Implications for RL Compressibility.}
Taken together, these results show that, although vector steering cannot capture arbitrary teacher-induced corrections, it serves as a targeted probe for analyzing the mechanisms underlying RL-specific capability gains.
Building on this, we next examine how RL-induced gains vary with
injection depth and how effective the low-dimensional representation is
across layers.

\begin{figure}[t]
    \centering
    \includegraphics[width=\textwidth]{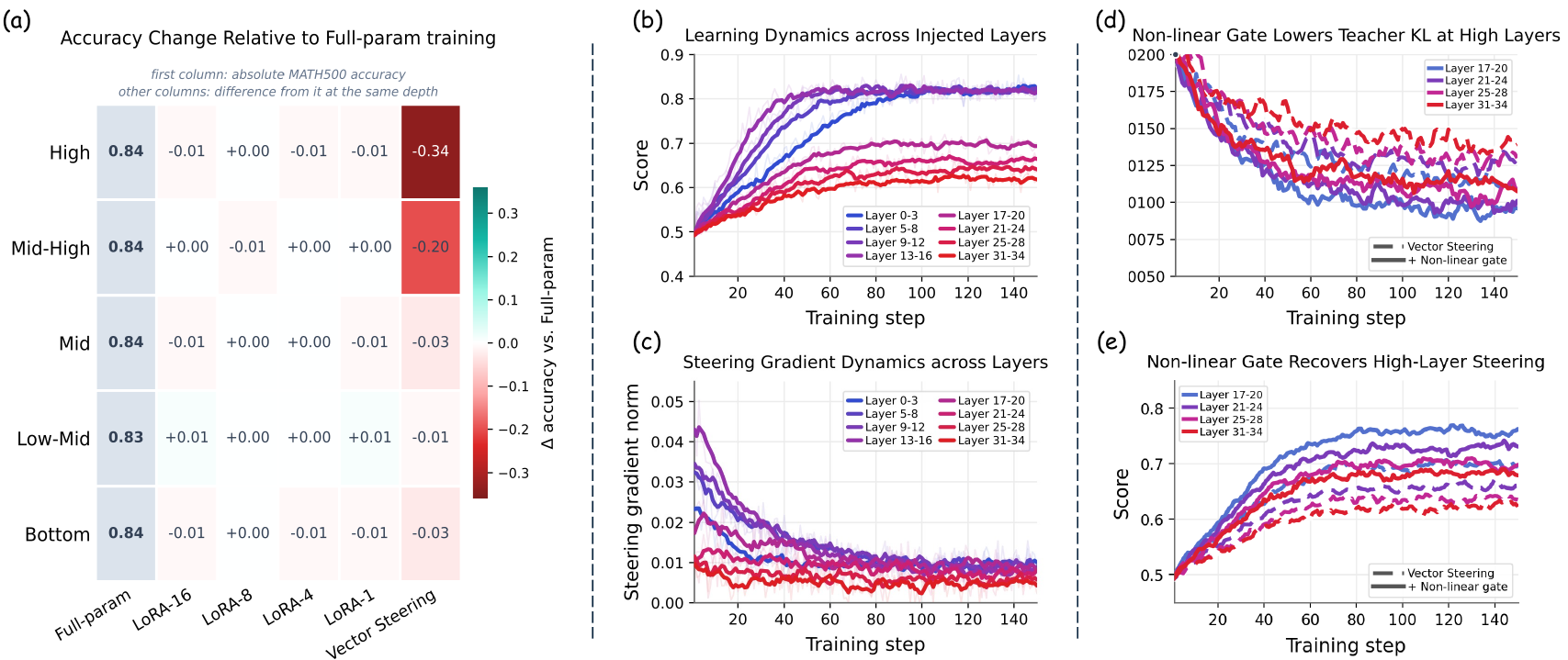}
    \caption{
    Layer-wise performance.
    \textbf{(a)} Converged performance of steering vectors across injection layers.
    \textbf{(b)} Task performance during training for steering vectors.
    \textbf{(c)} Steering gradient norms at different layers.
    \textbf{(d)} KL divergence with the teacher under fixed versus gated steering.
    \textbf{(e)} Converged performance of fixed versus gated injection.
    }
    \label{fig3}
\end{figure}

\subsection{Layer-wise Compressibility of RL Updates}
\label{Layer-wise Compressibility of RL Updates}

\textbf{Injection Depth and Optimization Dynamics.}
Section~\ref{Compressibility of RL Learning Signals} shows that RL-induced gains can be compressed into fixed activation offsets. This raises a complementary question: where should such a low-capacity intervention be applied? We therefore compare full-parameter updating, LoRA, and fixed-vector steering at representative network depths, restricting each intervention to the selected layer and freezing the remaining backbone. As shown in Fig.~\ref{fig3} (a), selected-layer full-parameter updating and LoRA remain effective across depths, whereas fixed-vector steering performs well at lower and intermediate layers but deteriorates near the output. This contrast indicates that low-capacity interventions depend on both the update parameterization and the injection depth.

We next examine the learning curves across injection sites. Fig.~\ref{fig3} (b) shows that shallow-layer steering gradually approaches intermediate-layer performance during training, whereas high-layer steering plateaus early, and extended training does not noticeably close the gap. To test whether this stagnation stems from weaker gradient signals, we inspect gradient norms. Fig.~\ref{fig3} (c) further shows that shallow and high-layer injections receive gradients of comparable magnitude yet yield substantially different final performance, indicating that gradient magnitude alone does not explain high-layer degradation. One explanation is that high-layer injections have fewer downstream nonlinear transformations, which may limit the ability of a fixed offset to produce context-dependent effects, whereas LoRA and selected-layer full-parameter updates can directly produce input-dependent activation corrections.

\textbf{Input-Dependent Corrections and Capacity Controls.}
To test whether explicit token-dependent modulation alleviates high-layer degradation, we replace the fixed offset with a low-rank nonlinear correction:
\begin{equation}
\mathbf{h}_\ell' = \mathbf{h}_\ell + \boldsymbol{\delta}_\ell(\mathbf{h}_\ell), \qquad
\boldsymbol{\delta}_\ell(\mathbf{h}_\ell)
= \mathbf{B}_\ell\sigma\!\left(\mathbf{A}_\ell\mathbf{h}_\ell\right),
\end{equation}
where $\mathbf{A}_\ell\in\mathbb{R}^{r\times d}$,
$\mathbf{B}_\ell\in\mathbb{R}^{d\times r}$, $r\ll d$, and $\sigma$ is an element-wise gating nonlinearity. Here $\mathbf{B}_\ell$ defines shared correction directions, while $\sigma(\mathbf{A}_\ell\mathbf{h}_\ell)$ assigns token-dependent coefficients. We zero-initialize $\mathbf{B}_\ell$ so that the intervention initially leaves the model unchanged. Fig.~\ref{fig3} (d,e) shows that Rank-1 gating reduces teacher KL divergence and improves converged performance at all four tested high-layer injection sites. At Rank-1, the correction still follows a single shared direction per layer, but its strength can vary across tokens. To separate this flexibility from parameter count, we compare against a parameter-matched input-independent control that replaces $\sigma(\mathbf{A}_\ell\mathbf{h}_\ell)$ with a constant coefficient. Fig.~\ref{fig4} (a) shows that the static control does not reproduce the improvement of Rank-1 gating, supporting the role of token-dependent modulation. Increasing the gating rank yields further gains toward the full-parameter reference at $r=16$. Together, these results suggest that high-layer recovery depends on both the available correction directions and their input-dependent modulation.

\begin{figure}[t]
\includegraphics[width=\textwidth]{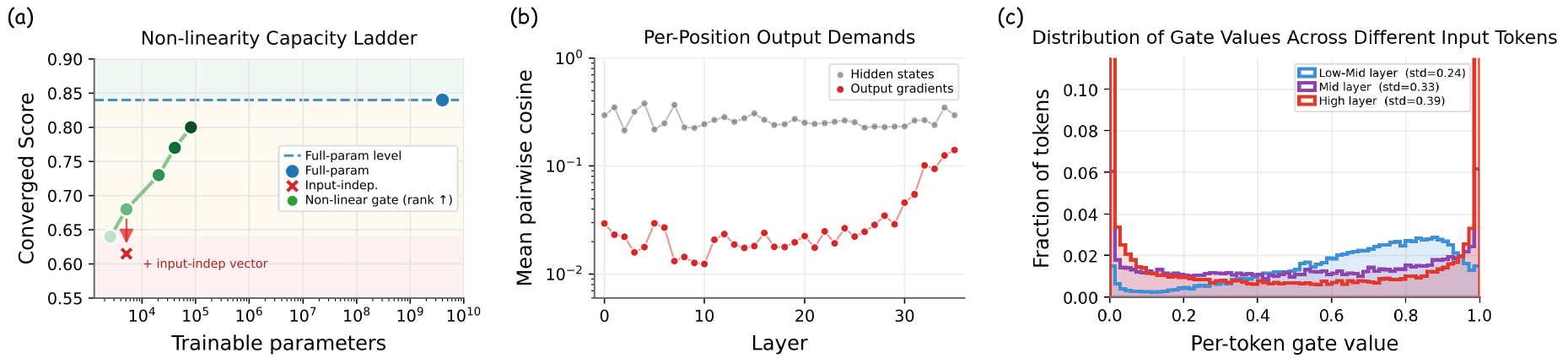}
\caption{Diagnosing and mitigating high-layer degradation.
\textbf{(a)} Converged performance of vector steering under different nonlinear parameters.
\textbf{(b)} Layer-wise mean pairwise cosine similarity for hidden states and output gradients.
\textbf{(c)} Distribution of gating activations at different layers.}
\label{fig4}
\end{figure}

\textbf{Directional Structure and Effective Manifold Capacity.}
We further examine how gradient alignment and learned modulation vary across depth. At each layer, we compute the mean pairwise cosine similarity between per-token gradient contributions to the steering vector,
$\mathrm{Sim}_{\ell}^{\mathrm{grad}} = \frac{1}{|\mathcal{P}|} \sum_{(i,j)\in\mathcal{P}} \cos(\mathbf{g}_{\ell,i},\mathbf{g}_{\ell,j})$,
where $\mathbf{g}_{\ell,i}$ denotes the normalized contribution of token $i$ to the gradient of the loss with respect to the steering vector at layer $\ell$, and $\mathcal{P}$ is the set of token pairs. We compute the same statistic for hidden states as a reference. Fig.~\ref{fig4} (b) shows that mean gradient similarity rises from about $0.019$ at lower and intermediate layers to about $0.14$ near the output. If directional alignment alone determined steering effectiveness, this trend would favor shared-direction interventions at higher layers. Yet fixed-vector steering degrades precisely there, showing that average directional alignment alone does not explain the depth-dependent performance gap. We further analyze the distribution of the Rank-1 gating coefficient $\sigma(\mathbf{A}_\ell\mathbf{h}_\ell)$ in Fig.~\ref{fig4} (c). At lower and intermediate layers, coefficients spread broadly over $[0,1]$, consistent with graded modulation; at higher layers, they concentrate near $0$ and $1$, consistent with more selective use of the shared direction. Together with the gated-versus-static comparison in Fig.~\ref{fig4} (a), this indicates that explicit token-dependent modulation improves the effectiveness of high-layer interventions: when the intervention can adjust its strength according to each token's hidden state, the limited downstream nonlinear transformations at high layers no longer constitute a major bottleneck.

\textbf{Summary.}
We summarize these findings as \textbf{Property 1 (Effective Manifold Capacity)}: reproducing RL-induced gains can require only a very low-dimensional intervention, but its effective capacity depends on how well the intervention form matches the required correction. Appendix~\ref{sec:recovery} provides a conditional account of fixed-vector recovery in terms of the balance between shared teacher-induced shifts and the context-dependent fluctuations around them.

\section{Control Manifold Separation}
\label{Control Manifold Separation}

Having established strong recovery with one steering vector per controlled layer, we now examine the diversity, reproducibility, and principal-subspace localization of effective steering directions.

\subsection{Probing Residual Effective Directions with Multi-Vector Steering}
\label{sec:multi_vector}
\paragraph{Sequential orthogonal steering.}
To test whether the leading direction is uniquely effective, we
sequentially learn $K$ vectors
$\mathcal V_\ell=\{\mathbf v_\ell^{(k)}\}_{k=0}^{K-1}$ at each controlled
layer $\ell\in\mathcal S$, enforcing pairwise orthogonality against
earlier vectors,
$\langle\mathbf v_\ell^{(k)},\mathbf v_\ell^{(j)}\rangle=0$ for $j<k$.
The first stage is trained to convergence and frozen; each subsequent
stage initializes new vectors in the remaining orthogonal complement,
optimizes them to convergence, and reprojects them after every optimizer
step:
\begin{equation}
\mathbf{v}_{\ell}^{(k)}
\leftarrow
\mathbf{v}_{\ell}^{(k)}
-
\sum_{j=0}^{k-1}
\frac{
\left\langle \mathbf{v}_{\ell}^{(k)}, \mathbf{v}_{\ell}^{(j)} \right\rangle
}{
\left\| \mathbf{v}_{\ell}^{(j)} \right\|_2^2
}
\mathbf{v}_{\ell}^{(j)}.
\end{equation}
At each stage, only the current stage’s vectors are injected and evaluated, jointly defining a multi-layer intervention; earlier vectors remain frozen and serve solely to define the orthogonality constraints.

\begin{figure}[t]
    \centering
    \includegraphics[width=\textwidth]{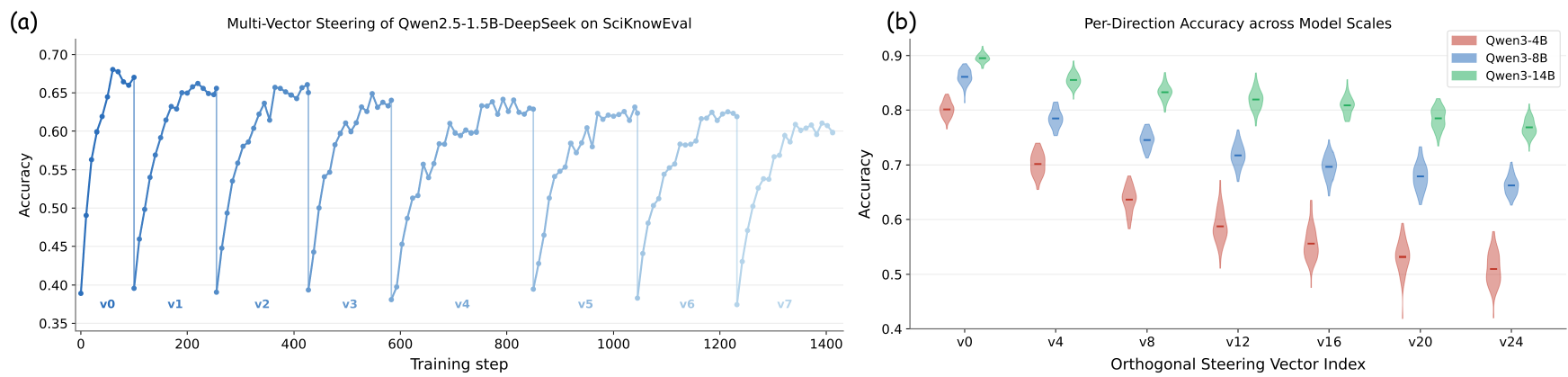}
    \caption{Multi-vector steering with sequential orthogonalization.
    \textbf{(a)} Training trajectories of sequentially learned orthogonal steering vectors on SciKnowEval with DeepSeek-R1-Distill-Qwen-1.5B.
    \textbf{(b)} Standalone accuracy of the extracted directions across model scales.
    Each direction is trained and evaluated independently, with preceding directions used only to define the orthogonality constraint.}
    \label{fig5}
\end{figure}

\paragraph{Residual effectiveness and model scale.}
As shown in Fig.~\ref{fig5} (a), the first direction achieves $67.5\%$
standalone accuracy on SciKnowEval with DeepSeek-R1-Distill-Qwen-1.5B, while the
eighth direction, $\mathbf v^{(7)}$, achieves $62.0\%$, only $5.5$
percentage points lower despite being orthogonal to the preceding seven
directions at every controlled layer.
This modest reduction indicates that excluding previously learned
directions does not exhaust the available steering solutions: the
remaining orthogonal space still admits interventions with substantial
standalone effectiveness.
Training results across all four task domains are shown in
Appendix Fig.~\ref{appendix3}.

We further examine how this phenomenon varies with model scale.
As shown in Fig.~\ref{fig5} (b), the leading direction consistently
performs best, while standalone accuracy generally declines with
extraction order; the decline is sharper in smaller models, whereas
larger models retain higher-order directions whose performance remains
closer to that of the leading direction.
This indicates that model scale affects not only the performance of the
leading direction, but also the extent to which higher-order directions
are retained relative to it.

\textbf{Summary.} These results show that the leading steering direction is effective but not unique: multiple layer-wise orthogonal interventions can independently support the target behavior, although their standalone effectiveness generally declines with extraction order. We next examine whether comparable directions are recovered across teacher parameterizations and distillation regimes.

\subsection{Task-Dependent Geometry and Principal-Subspace Separation}
\label{sec:task_geometry}

\paragraph{Consistency across training configurations.}
We next ask whether the extracted steering directions are reproducible across training configurations. We compare directions along two axes: teacher parameterizations (full-parameter fine-tuning vs.\ LoRA with different ranks) and distillation regimes (on- vs.\ off-policy). For each axis, we extract multi-vector directions using the same procedure and measure their layer-averaged cosine similarity,
$\mathrm{sim}(\{\mathbf{v}_a^{(i)}\},\{\mathbf{v}_b^{(j)}\}) = \frac{1}{|\mathcal{S}|}\sum_{\ell\in\mathcal{S}} \cos(\mathbf{v}_{a,\ell}^{(i)}, \mathbf{v}_{b,\ell}^{(j)})$,
where $\mathcal{S}$ is the controlled layer set and $i,j$ denote extraction indices. Fig.~\ref{fig6} (a) shows stronger same-index than cross-index alignment across the tested teacher parameterizations, while Fig.~\ref{fig6} (b) reports matching-index alignment between on- and off-policy directions. Together, these comparisons indicate a broadly consistent geometric structure across the evaluated configurations and distillation regimes within the same task.

\paragraph{Cross-topic alignment and capability transfer.}
Based on this, we further examine whether this geometric structure also reflects functional relationships across tasks. We partition Science into physics, chemistry, materials, and biology. Keeping the base model fixed, we train one RL teacher per topic and distill it using the same procedure to extract its multi-vector directions. For source topic $s$ and target topic $t$, we define directional alignment as the sum of the first four same-index cosine similarities, averaged across controlled layers:
$$
A(s,t)
=
\frac{1}{|\mathcal{S}|}
\sum_{\ell\in\mathcal{S}}\sum_{i=0}^{3}
\cos\!\left(\mathbf{v}_{s,\ell}^{(i)},\mathbf{v}_{t,\ell}^{(i)}\right).
$$
We normalize the source-to-target transfer gain by the gain obtained from training directly on the target topic:
\begin{equation}
\Delta_{\mathrm{rel}}\operatorname{Acc}(s\rightarrow t)
=
\frac{\operatorname{Acc}(M_s,t)-\operatorname{Acc}(M_{\mathrm{base}},t)}
{\operatorname{Acc}(M_t,t)-\operatorname{Acc}(M_{\mathrm{base}},t)}
\times 100\%,
\end{equation}
where $M_s$, $M_t$, and $M_{\mathrm{base}}$ denote the source-trained, target-trained, and base models, respectively. We evaluate six ordered source--target pairs among the four scientific topics, excluding self-pairs, and report the corresponding directional alignments and normalized transfer gains. As shown in Fig.~\ref{fig6} (c), stronger directional alignment is associated with larger normalized transfer gains across these pairs, with a linear fit yielding $R^2=0.91$. This indicates that the geometry of the learned steering directions is associated with capability transfer across the evaluated topic pairs, with stronger similarity corresponding to better transferability across tasks.

\paragraph{Effective steering directions avoid the principal subspace.}
We further examine where these directions are located in activation space. Let $U_{\ell,r}\in\mathbb R^{d\times r}$ contain the leading orthonormal eigenvectors of the frozen base model's token-level activation covariance at layer $\ell$, with $r$ corresponding to $10\%$ of the hidden dimension, and define $P_\ell=U_{\ell,r}U_{\ell,r}^{\top}$. For a layer-wise direction $\mathbf z_\ell$, its principal-subspace energy fraction is
\begin{equation}
e_{\mathrm{PC}}(\mathbf z_\ell)
=
\frac{\|P_\ell\mathbf z_\ell\|_2^2}
     {\|\mathbf z_\ell\|_2^2+\epsilon}.
\end{equation}
For multi-layer directions, we average $e_{\mathrm{PC}}$ across controlled layers. As shown in Fig.~\ref{fig7} (a), across four scientific topics, unconstrained steering places only about $1\%$--$2\%$ of its energy in the Top-$10\%$ principal subspace, well below the $10\%$ isotropic reference, with this pattern persisting across the displayed directions $\mathbf v^{(0)}$ through $\mathbf v^{(8)}$. Low principal-subspace energy is thus a shared geometric property of the extracted interventions.

\begin{figure}[t]
    \centering
    \includegraphics[width=\textwidth]{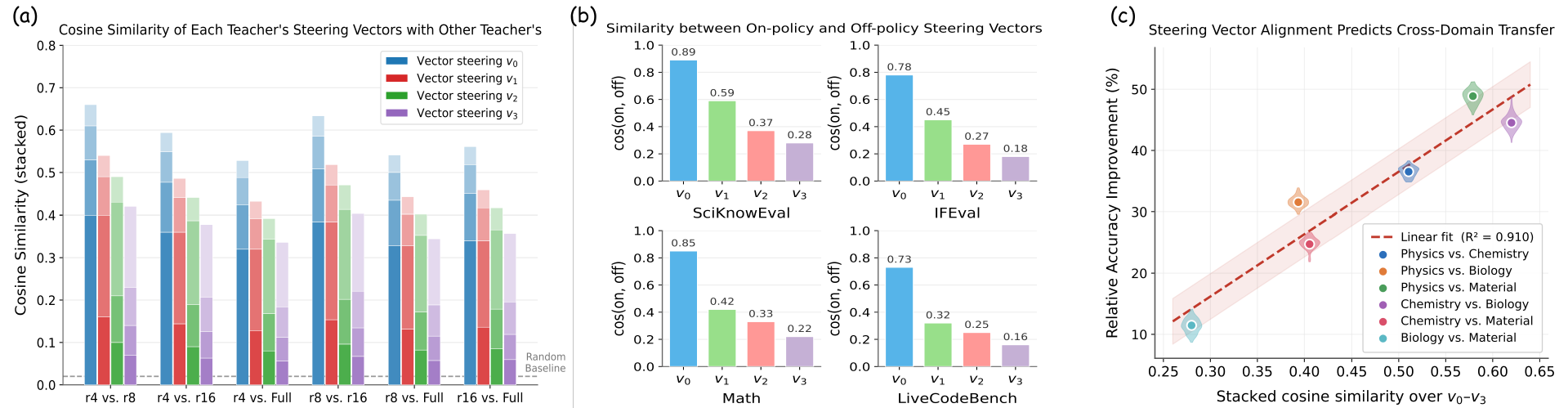}
    \caption{Geometric consistency and transfer of steering directions.
    \textbf{(a)} Layer-averaged cosine similarities between steering vectors under different teacher parameterizations, shown as stacked bars for each reference vector $\mathbf{v}_{a}^{(i)}$, with segments ordered from $\mathbf{v}^{(0)}$ to $\mathbf{v}^{(3)}$ indicating similarities with $\{\mathbf{v}_{b}^{(j)}\}_{j=0}^{3}$.
    \textbf{(b)} Layer-averaged cosine similarities between on- and off-policy steering vectors with matching extraction indices, per task.
    \textbf{(c)} Cross-topic directional alignment versus normalized source-to-target transfer gains, with a fitted regression line.}
    \label{fig6}
\end{figure}

\paragraph{Restricting steering to the principal subspace does not recover the gain.}
To test the functional importance of the low-variance complement, we remove each steering vector's projection onto the bottom-$70\%$ activation-PC subspace after every optimizer step, restricting it to the span of the Top-$30\%$ components: $\mathbf v_\ell^{(k)}\leftarrow P_{\ell,m}\mathbf v_\ell^{(k)}$, where \(m\) is the number of retained components and \(P_{\ell,m}=U_{\ell,m}U_{\ell,m}^{\top}\). As shown in Fig.~\ref{fig7} (b), the Top-$10\%$ energy fraction rises to about $15\%$, but accuracy does not improve meaningfully. The gain therefore does not come from alignment with the principal subspace.

\paragraph{Principal-subspace intrusion tracks training collapse.}
We next examine whether actual RL-induced activation shifts exhibit increased intrusion into the base model's Top-$10\%$ principal subspace as training becomes unstable. At step $t$, we run the current policy and the frozen base model on the same token sequences, defining the activation shift $\boldsymbol\Delta_{\ell,b,i}^{(t)}=\mathbf h_{\ell,b,i}^{(t)}-\mathbf h_{\ell,b,i}^{\mathrm{base}}(t)$ for sample $b$, position $i$, and layer $\ell$. Here \(\mathbf h_{\ell,b,i}^{\mathrm{base}}(t)\) is the frozen base model's activation on the token sequence used at update \(t\); \(t\) indexes the sequence, not a change in base-model parameters. We stack all $N$ valid token-level shifts row-wise into $X_\ell^{(t)}\in\mathbb R^{N\times d}$ to avoid cancellation before projection, and compute
\begin{equation}
\label{eq6}
\operatorname{PSI}_\ell^{(t)}
=
\frac{\|X_\ell^{(t)}U_{\ell,r}\|_F^2}
     {\|X_\ell^{(t)}\|_F^2+\epsilon},
\end{equation}
where \(U_{\ell,r}\) is the fixed Top-\(10\%\) principal-subspace basis of the base model. As shown in Fig.~\ref{fig7}(c), collapse-prone runs show a clear pattern: during stable training, PSI remains low, well below the $10\%$ isotropic reference; as training proceeds, PSI departs from this range and becomes more volatile before any visible score degradation; as collapse progresses, PSI keeps rising while the score falls more sharply.

\textbf{Summary.} These results support \textbf{Property 2 (Control Manifold Separation)}: in the evaluated settings, effective steering directions lie mainly in the low-variance complement of the activation principal subspace, their ordered geometry stays broadly consistent across teacher parameterizations and distillation regimes, and cross-topic alignment correlates with transfer gains. Appendices~\ref{sec:multi} and~\ref{sec:complement} further discuss constrained vector fitting and the link between principal-subspace energy and output sensitivity.

\begin{figure}[t]
    \centering
    \includegraphics[width=\textwidth]{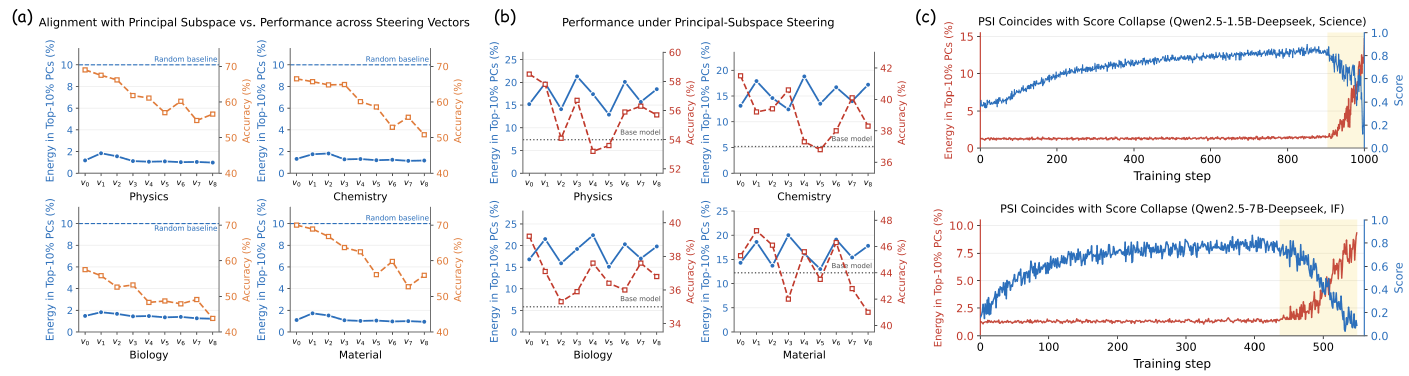}
    \caption{Steering geometry and principal-subspace intrusion. Energy fractions in \textbf{(a)} and \textbf{(b)} use the Top-$10\%$ principal subspace.
    \textbf{(a)} Standalone accuracy and principal-subspace energy fractions of sequentially extracted directions across four scientific topics.
    \textbf{(b)} The same quantities when steering vectors are restricted to the Top-$30\%$ activation-PC subspace.
    \textbf{(c)} Task score and PSI during RL training, comparing stable and collapse-prone runs.}
    \label{fig7}
\end{figure}

\section{Alpha-Stabler: Geometry-Guided RL Stabilization}
Experiments show that effective steering directions lie in the low-variance complement, while principal-subspace intrusion accompanies training collapse. We therefore propose Alpha-Stabler to monitor whether the RL-induced activation shift departs from this regime. It consists of a Predictor, which detects sustained increases in principal-subspace intrusion (PSI), and a Controller, which removes the principal-subspace component of activation gradients during backpropagation while preserving each incoming gradient’s component in the orthogonal complement of the principal subspace.

\subsection{Online Monitoring and Control}
At parameter-update step $t$, we evaluate the actor and the frozen initial base model on identical token sequences. We stack their valid token-level differences, $\Delta_{\ell,b,i}^{(t)} = \mathbf{h}_{\ell,b,i}^{(t)} - \mathbf{h}_{\ell,b,i}^{\mathrm{base}}(t)$, into $X_{\ell}^{(t)} \in \mathbb{R}^{N \times d}$. During the first $T_{\mathrm{warm}} = 50$ updates, actor-generated rollouts are evaluated by the frozen base model to estimate the base model's token-level activation covariance at each monitored layer. At the end of warm-up, we extract its leading orthonormal eigenvectors $U_{\ell,r}$, with $r = \lceil 0.10d \rceil$, and keep these bases fixed thereafter. Using the layer-wise PSI defined in Eqn. \ref{eq6}, we additionally aggregate across monitored layers:
\begin{equation}
\mathrm{PSI}^{(t)} = \frac{\sum_{\ell \in \mathcal{S}}\|X_{\ell}^{(t)}U_{\ell,r}\|_{F}^{2}}{\sum_{\ell \in \mathcal{S}}\|X_{\ell}^{(t)}\|_{F}^{2} + \epsilon}.
\end{equation}
Layer-wise PSI determines where control is activated; the aggregate summarizes intrusion across monitored layers according to their shift energies. Both sum token-level energies before normalization, avoiding cancellation between differently oriented shifts. PSI is computed from detached activations, so it does not affect gradients, while the actor's computation graph is retained for backpropagation.

\paragraph{Predictor: detecting sustained intrusion.}
During an initially stable warm-up, the actor trains without projection, and valid activation-shift matrices are retained every \(M\) updates. Once the principal bases are fixed, we compute PSI for these retained shifts and set fixed warning and lower release thresholds from layer-wise medians and scaled median absolute deviations. Subsequent valid observations update an exponential moving average initialized at the warm-up median. Control activates after \(p\) consecutive valid checks with the smoothed PSI above the warning threshold, and releases when it falls below the lower threshold. Algorithm~\ref{alpha_stabler} specifies the full procedure.

\paragraph{Controller: projecting activation gradients.}
The Controller modifies backward gradients without changing forward activations. Let \(G_{\ell}^{(t)} \in \mathbb{R}^{N \times d}\) denote the incoming gradient at layer \(\ell\), including any downstream control effects. With the Predictor's flag \(a_{\ell}^{(t)} \in \{0, 1\}\) fixed during backpropagation and the optimizer step, we apply
\begin{equation}
\tilde{G}_{\ell}^{(t)} = G_{\ell}^{(t)} - a_{\ell}^{(t)}\left(G_{\ell}^{(t)}U_{\ell,r}\right)U_{\ell,r}^{\top}.
\label{eq:controller-projection}
\end{equation}
When active, this removes the component of each token’s gradient in the principal subspace while preserving the component orthogonal to that subspace. It integrates directly into standard RL training: rollout generation, likelihood ratios, rewards, and the GRPO or DAPO loss remain unchanged, and the optimizer updates parameters using only the modified backward signal. Warm-up requires estimating reference statistics and temporarily storing calibration shifts; after calibration, reference activations are needed only at monitoring checks and can be reused from a compatible pass. Each active hook performs two low-rank matrix multiplications, and inference uses the trained actor alone, so the additional computation is minimal. Algorithm~\ref{alpha_stabler} summarizes the full training loop.

\begin{figure}[t]
    \centering
    \includegraphics[width=\textwidth]{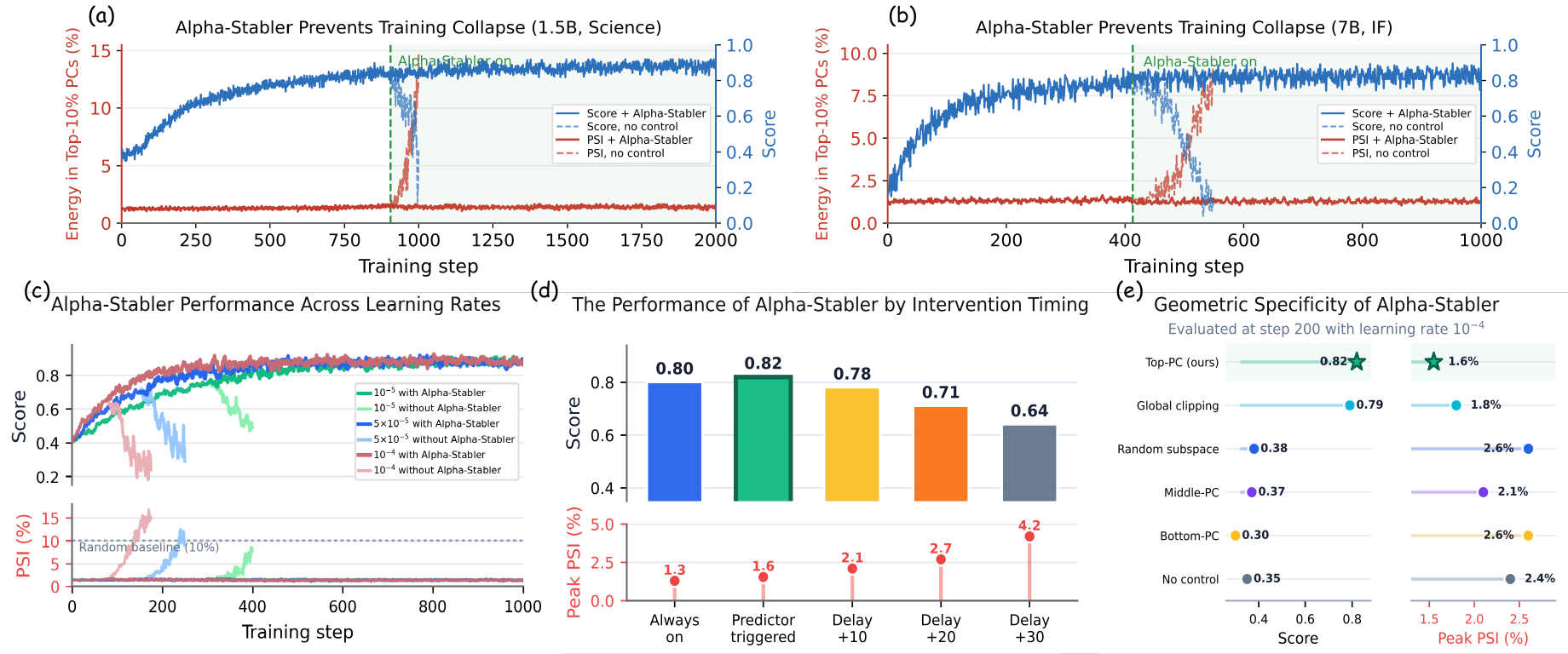}
    \caption{\textbf{(a, b)} Training dynamics with and without Alpha-Stabler.
    \textbf{(c)} Training dynamics across learning rates.
    \textbf{(d)} Task score and peak PSI under different intervention timings.
    \textbf{(e)} Performance under control of different subspaces.}
    \label{fig8}
\end{figure}

\subsection{Experimental Evaluation}

\paragraph{Training stabilization.}
Fig.~\ref{fig8} (a) and (b) compares training trajectories with and without Alpha-Stabler.. In uncontrolled runs, a pronounced increase in PSI coincides with sharp score deterioration, indicating that performance loss accompanies a departure from the earlier activation geometry. Under control, PSI remains near its early-training range while task scores continue to improve, yielding sustained learning alongside limited principal-subspace intrusion rather than reduced PSI at the expense of task performance.

\paragraph{Ablation studies.}
We then conduct ablation studies of Alpha-Stabler along three dimensions: learning rate, intervention timing, and controlled subspace. \textbf{First}, we find that stabilization holds across learning rates, as shown in Fig.~\ref{fig8} (c): at each tested learning rate, uncontrolled training eventually collapses, whereas controlled training keeps peak PSI at a low single-digit percentage and reaches comparable final scores across settings, indicating robustness to the choice of learning rate.  \textbf{Second}, triggered control is preferable to delayed control, as shown in Fig.~\ref{fig8} (d). Triggered control matches always-on control in both score and peak PSI, indicating that continuous intervention is unnecessary, whereas delaying intervention progressively degrades the score and raises peak PSI. \textbf{Third}, Top-PC projection outperforms alternative subspaces, as shown in Fig.~\ref{fig8} (e). Among the compared gradient-control methods, removal of Top-PC components achieves the highest score, while random-subspace, Middle-PC, and Bottom-PC control fail to match it and remain close to the uncontrolled baseline; Top-PC also compares favorably with global gradient clipping. This indicates that the improvement does not arise from an arbitrary gradient constraint, but from removing gradient components in a specific subspace, namely the principal subspace. These results support geometrically targeted control rather than indiscriminate gradient suppression. Appendix~\ref{Geometric Analysis of Alpha-Stabler} shows that gradient projection minimally removes the principal-subspace component of the incoming gradient and bounds principal-subspace activation leakage for a local, single-site projected step. Fig.~\ref{appendix4} further shows nearly identical score trajectories with and without Alpha-Stabler under the same measured wall-clock time, indicating almost no additional time overhead.

\section{Limitations and Future Work}
Despite revealing two geometric properties of RLVR's effect on LLMs, our study has several limitations. First, the theoretical framework rests on empirically motivated assumptions not yet derived from first principles. Second, our analysis and experiments are restricted to RLVR on tasks such as mathematical reasoning and code generation; we do not validate the proposed geometry on RLHF, multi-turn agentic decision-making, or open-ended generation, where whether the same properties hold remains open. Future work can relax these assumptions via neuron attribution and causal tracing, extend the framework to RLHF and agentic settings, and exploit off-principal localization as a broader design principle, e.g., by adding complement-space regularizers or using PSI as an adaptive signal for learning rate and intervention strength, enabling low-rank control without layer monitoring.

\section{Conclusion}
In this work, we identify two geometric properties of RLVR’s effect on LLMs. Effective Manifold Capacity characterizes how a very low-dimensional intervention can reproduce RLVR-induced gains, with the required capacity depending on how well the intervention form matches the required correction. Control Manifold Separation describes the concentration of reproducible, task-related control directions in the low-variance complement of the activation principal subspace. Based on these properties, Alpha-Stabler monitors principal-subspace intrusion as an early-warning signal and projects away the intrusive component of activation gradients during backpropagation, stabilizing training with almost no additional computational overhead. Overall, this work recasts RLVR from black-box parameter optimization into an interpretable geometric process, opening new avenues for theoretically grounded and practically efficient LLM post-training.

\section*{AI use statement}

Generative AI tools were used to assist with language editing, manuscript organization, and formatting, with the aim of improving clarity, readability, and presentation. All AI-assisted revisions were critically reviewed and revised by the authors to ensure technical accuracy, consistency, and fidelity to the underlying research. The authors take full responsibility for the final manuscript, including its methodology, theoretical arguments, experimental results, and conclusions.

\section*{Reproducibility Statement}

We document the experimental procedures and theoretical assumptions underlying our results. Appendix~B specifies the training objectives, model configurations, datasets, task-specific verifiers, evaluation protocols, prompts, optimization hyperparameters, and hardware, together with example commands for vector-steering experiments. Appendix~C provides additional analyses of the learned steering vectors. Appendix~D states the assumptions and proofs supporting our geometric analysis and details the principal-subspace diagnostic and gradient-projection mechanism of Alpha-Stabler. Algorithm~\ref{alpha_stabler} specifies its warm-up calibration, monitoring, and control procedures. The source code is provided in the supplementary materials.

\bibliography{iclr2027_conference}
\bibliographystyle{iclr2027_conference}

\appendix

\clearpage
\section{Related Work}

\textbf{Reinforcement Learning for Large Language Models.}
Prior to reasoning-capable models like OpenAI's o1, RL was primarily used in RLHF for instruction-following and human preference alignment \citep{ouyang2022traininglanguagemodelsfollow}. Recently, Reinforcement Learning with Verifiable Rewards (RLVR) has proven effective for reasoning in verifiable domains such as mathematics and programming \citep{lambert2025tulu3pushingfrontiers}. OpenAI's o1 inspired a wave of open reasoning models, including DeepSeek-V4 \citep{xu2026deepseek}, Kimi-K3 \citep{team2026kimi}, Qwen3.8 \citep{qwen3.8-max}, and GLM-5.3 \citep{glm-5.3}. Beyond single-turn reasoning, RL is increasingly applied to agentic settings involving planning, tool invocation \citep{li2026discovery}, and multi-turn environment interaction \citep{ding2026empowering}, with notable efforts on search-augmented reasoning \citep{jin2025search}, multi-turn credit assignment \citep{zhou2025sweet}, and asynchronous large-scale agentic RL \citep{fu2026areal}.

\textbf{Emergent Behaviors of On-Policy Training.}
Recent studies have investigated intrinsic properties of on-policy RL training. \citet{yue2025doesreinforcementlearningreally} find that RL primarily improves pass@1 sampling efficiency without expanding the model's reasoning frontier. \citet{cui2025entropymechanismreinforcementlearning} identify early-stage entropy collapse that prematurely degrades exploration. \citet{tan2026scalingbehaviorsllmreinforcement} reveal a power-law relationship between test loss, computation, and data volume, showing larger models exhibit superior learning efficiency. From a parameter-dynamics perspective, \citet{cai2025predictability} uncover Rank-1 dominance and Rank-1 linear dynamics, enabling 2.5$\times$ acceleration via AlphaRL. \citet{cai2026learning} attribute on-policy distillation's efficiency to modular redundancy suppression and early directional stabilization, proposing EffOPD with 3$\times$ speedup \citep{song2026survey}.

\textbf{Steering Vectors and Representation Engineering.}
While most adaptation methods rely on parameter updates (full fine-tuning or LoRA), an alternative line of work intervenes directly on internal activations via vector steering. Representation Engineering (RepE) provides a unifying framework, treating population-level representations as the analytical unit for monitoring and manipulating cognitive phenomena \citep{zou2023representation}. High-level behaviors often exhibit low-dimensional structure: refusal is mediated by a single direction \citep{arditi2024refusal} or multi-dimensional concept cones \citep{wollschlager2025geometry}. Task vectors \citep{ilharco2022editing} and function vectors \citep{todd2024function} enable behavioral editing through vector arithmetic, and steering is increasingly viewed as a paradigm complementary to fine-tuning and prompting \citep{ostermann2026weights}.

\clearpage
\section{Preliminaries and Experimental Setup}
\label{sec:preliminaries-and-experimental-setup}

\subsection{Preliminaries}
\label{Preliminaries}
We consider three post-training paradigms: Reinforcement Learning (RL), On-Policy Distillation (OPD), and Off-Policy Distillation. Let $\pi_{\theta}$ denote the policy or student model, $\pi^*$ the fixed teacher model, and $\pi_{\mathrm{ref}}$ the fixed reference model used for regularization. The teacher provides distillation targets, whereas the reference model anchors RL regularization; these roles are distinct.

\paragraph{Reinforcement Learning (RL).}
The RL objective is:
\begin{equation}
\max_{\theta} J_{\mathrm{RL}}(\theta)
=
\mathbb{E}_{x \sim \mathcal{D}}
\left[
\mathbb{E}_{y \sim \pi_{\theta}(\cdot \mid x)}
\left[r(x,y)\right]
-
\beta
D_{\mathrm{KL}}
\left(
\pi_{\theta}(\cdot \mid x)
\parallel
\pi_{\mathrm{ref}}(\cdot \mid x)
\right)
\right],
\label{eq:rl_objective}
\end{equation}
where $y=(y_1,\ldots,y_T)$ is sampled from the current policy, $r(x,y)$ is the reward assigned to response $y$ for query $x$, and $\beta$ controls the strength of KL regularization.

In Reinforcement Learning from Verifiable Rewards (RLVR), $r(x,y)$ is determined by verifiable outcomes, such as answer correctness or unit-test execution, without requiring a learned reward model.

Separating the task-reward term from reference-policy regularization gives the policy-gradient form:
\begin{align}
\nabla_{\theta}J_{\mathrm{RL}}(\theta)
&=
\mathbb{E}_{x \sim \mathcal{D},\;
y \sim \pi_{\theta}(\cdot \mid x)}
\left[
\sum_{t=1}^{T}
A_t^{r}
\nabla_{\theta}
\log \pi_{\theta}(y_t \mid x,y_{<t})
\right]
\nonumber\\
&\quad -
\beta\nabla_{\theta}
\mathbb{E}_{x \sim \mathcal{D}}
\left[
D_{\mathrm{KL}}
\left(
\pi_{\theta}(\cdot \mid x)
\parallel
\pi_{\mathrm{ref}}(\cdot \mid x)
\right)
\right],
\label{eq:rl_gradient}
\end{align}
where $A_t^{r}$ denotes the token-level advantage associated with the task reward. In RLVR, this reward is typically observed only after the complete response has been generated. Equation~\ref{eq:rl_gradient} describes the underlying regularized objective; practical GRPO and DAPO updates use algorithm-specific advantage estimates and clipped surrogate objectives.

\paragraph{On-Policy Distillation (OPD).}
OPD samples trajectories from the student policy and minimizes the reverse KL divergence from the student to the teacher:
\begin{align}
\min_{\theta}J_{\mathrm{OPD}}(\theta)
&=
\mathbb{E}_{x \sim \mathcal{D}}
\left[
D_{\mathrm{KL}}
\left(
\pi_{\theta}(\cdot \mid x)
\parallel
\pi^*(\cdot \mid x)
\right)
\right]
\nonumber\\
&=
\mathbb{E}_{x \sim \mathcal{D},\;
y \sim \pi_{\theta}(\cdot \mid x)}
\left[
\sum_{t=1}^{T}
\left(
\log \pi_{\theta}(y_t \mid x,y_{<t})
-
\log \pi^*(y_t \mid x,y_{<t})
\right)
\right].
\label{eq:opd_objective}
\end{align}

The corresponding sequence-level gradient can be written as:
\begin{align}
\nabla_{\theta}J_{\mathrm{OPD}}(\theta)
=
\mathbb{E}_{x \sim \mathcal{D},\;
y \sim \pi_{\theta}(\cdot \mid x)}
\Bigg[
\sum_{t=1}^{T}
\sum_{t'=t}^{T}
&
\left(
\log \pi_{\theta}(y_{t'} \mid x,y_{<t'})
-
\log \pi^*(y_{t'} \mid x,y_{<t'})
\right)
\nonumber\\
&\cdot
\nabla_{\theta}
\log \pi_{\theta}(y_t \mid x,y_{<t})
\Bigg].
\label{eq:opd_gradient_full}
\end{align}

For optimization, we use an immediate token-level approximation that retains only the contribution at $t'=t$:
\begin{align}
\nabla_{\theta}J_{\mathrm{OPD}}(\theta)
\approx
\mathbb{E}_{x \sim \mathcal{D},\;
y \sim \pi_{\theta}(\cdot \mid x)}
\Bigg[
\sum_{t=1}^{T}
&
\left(
\log \pi_{\theta}(y_t \mid x,y_{<t})
-
\log \pi^*(y_t \mid x,y_{<t})
\right)
\nonumber\\
&\cdot
\nabla_{\theta}
\log \pi_{\theta}(y_t \mid x,y_{<t})
\Bigg].
\label{eq:opd_gradient_approx}
\end{align}

This approximation provides token-level supervision along student-generated trajectories, but omits the future-token contributions in Equation~\ref{eq:opd_gradient_full}. When implemented as an advantage-weighted surrogate, the log-probability difference is treated as a stop-gradient weight, with gradients taken through the student log-probability that it multiplies.

\paragraph{Off-Policy Distillation (Off-PD).}
Off-policy distillation instead samples trajectories from the teacher policy. Its objective is the forward KL divergence from the teacher to the student:
\begin{align}
\min_{\theta}J_{\mathrm{Off\text{-}PD}}(\theta)
&=
\mathbb{E}_{x \sim \mathcal{D}}
\left[
D_{\mathrm{KL}}
\left(
\pi^*(\cdot \mid x)
\parallel
\pi_{\theta}(\cdot \mid x)
\right)
\right]
\nonumber\\
&=
\mathbb{E}_{x \sim \mathcal{D},\;
y \sim \pi^*(\cdot \mid x)}
\left[
\sum_{t=1}^{T}
\left(
\log \pi^*(y_t \mid x,y_{<t})
-
\log \pi_{\theta}(y_t \mid x,y_{<t})
\right)
\right].
\label{eq:off_policy_distillation_objective}
\end{align}

Since the teacher-dependent term is constant with respect to $\theta$, this objective is equivalent to minimizing the negative log-likelihood of teacher-generated trajectories:
\begin{equation}
\mathcal{L}_{\mathrm{Off\text{-}PD}}(\theta)
=
-
\mathbb{E}_{x \sim \mathcal{D},\;
y \sim \pi^*(\cdot \mid x)}
\left[
\sum_{t=1}^{T}
\log \pi_{\theta}(y_t \mid x,y_{<t})
\right].
\label{eq:off_policy_distillation_nll}
\end{equation}

The corresponding gradient is:
\begin{equation}
\nabla_{\theta}\mathcal{L}_{\mathrm{Off\text{-}PD}}(\theta)
=
-
\mathbb{E}_{x \sim \mathcal{D},\;
y \sim \pi^*(\cdot \mid x)}
\left[
\sum_{t=1}^{T}
\nabla_{\theta}\log \pi_{\theta}(y_t \mid x,y_{<t})
\right].
\label{eq:off_policy_distillation_gradient}
\end{equation}

Thus, OPD and off-policy distillation differ in both trajectory sampling and the objective being optimized. OPD uses student-generated trajectories with a reverse-KL objective, whereas off-policy distillation uses teacher-generated trajectories with a forward-KL objective, implemented as sequence negative log-likelihood. The latter trains the student on teacher-generated prefixes, while the former evaluates the teacher on prefixes generated by the student.

\subsection{Experimental Setup}
\label{Experimental Setup}

\paragraph{Overview.}
Each experiment in this paper is defined by a training paradigm and an update configuration. The three paradigms are on-policy RL with verifiable rewards (Eqn.~\ref{eq:rl_objective}), on-policy distillation (Eqn.~\ref{eq:opd_objective}), and off-policy distillation (Eqn.~\ref{eq:off_policy_distillation_nll}), as formalized in Section~\ref{Preliminaries}. For RL, we consider full fine-tuning and LoRA, whereas for both on-policy and off-policy distillation, we additionally evaluate vector steering. These configurations differ in which parameters receive gradients and, consequently, in the dimensionality and expressiveness of the resulting updates. Within each matched comparison, we hold the base model, prompt set, teacher model when applicable, sampling recipe, and evaluation protocol fixed across the applicable update configurations. Parameterization-specific optimization settings are reported in Table~\ref{tab:hparams}, and execution differences are described below. In on-policy experiments, matching the sampling recipe does not imply identical trajectories, since the samples depend on the current policy. This setup allows us to compare update parameterizations under matched modeling and evaluation protocols while making their optimization differences explicit.

\paragraph{Models.}
To ensure the generality of our findings, we conduct experiments across model scales ranging from 1.5B to 14B parameters, spanning two model families: Qwen2.5/DeepSeek-R1-Distill (1.5B and 7B) and Qwen3 (4B, 8B, and 14B), and two RL algorithms (GRPO and DAPO). All RL models are trained locally using the \texttt{verl} framework \citep{Sheng_2025}. For all distillation students, the default teacher is the RL-tuned version of the student's own base model, ensuring that the teacher--student gap corresponds exactly to the RL update itself.

\paragraph{Datasets and Evaluation.}
The training data spans four topics, each with its own verifier and held-out evaluation
set (Table~\ref{tab:data}). Rewards are always \emph{verifiable} in the sense of
\S\ref{Preliminaries}: rule-based answer matching for mathematical and scientific
reasoning, unit-test execution for code, and programmatic constraint checking for
instruction following. For the off-policy distillation experiments in Fig.~\ref{fig2} (d), we additionally compare two sources of supervision: standard SFT on Math-CoT-44K \citep{ren2026rethinking_sft_generalization}, which contains approximately 44K mathematical queries and 32 responses sampled from Qwen3-32B for each query, yielding more than 1.4M reasoning traces; and off-policy distillation on OpenMathReasoning \citep{moshkov2025aimo2}, which contains 540K mathematical problems and 3.2M long chain-of-thought solutions generated by DeepSeek-R1 and QwQ-32B.

\begin{table}[h]
\centering
\small
\caption{Training and evaluation datasets and reward functions for each domain.}
\label{tab:data}
\setlength{\tabcolsep}{4pt}
\begin{tabular}{@{}llll@{}}
\toprule
\textbf{Domain}
& \textbf{Training Set}
& \textbf{Evaluation Set}
& \textbf{Reward} \\
\midrule

\multirow{2}{*}{Math}
& \multirow{2}{*}{DeepMath-103K
  \citep{he2025deepmath103klargescalechallengingdecontaminated}}
& AIME 2024 \citep{ye2025limoreasoning}
& \multirow{2}{*}{Answer matching} \\

&
& MATH500 \citep{lightman2023lets}
& \\

Science
& SciKnowEval-Training \citep{feng2024sciknoweval}
& SciKnowEval-Eval \citep{feng2024sciknoweval}
& Answer matching \\

Code
& Eurus \citep{cui2025process}
& LiveCodeBench-v5 \citep{jain2024livecodebench}
& Unit tests \\

Instruction following
& IFEval-Training \citep{zhao2024wildchat1mchatgptinteraction}
& IFEval-Eval \citep{zhao2024wildchat1mchatgptinteraction}
& Constraint checking \\

\bottomrule
\end{tabular}
\end{table}

\paragraph{Instruction-following split.}
For instruction following, we partition the dataset at the prompt level into an 80\% training split and a 20\% held-out evaluation split, denoted as IFEval-Training and IFEval-Eval, respectively. These names refer to the partitions used in this study. All responses associated with the same prompt remain in the same partition, and evaluation prompts are excluded from training. The split is kept fixed across all compared configurations.

\paragraph{Evaluation protocol.}
We report \texttt{mean@4} on the held-out evaluation sets. For each evaluation prompt, we independently sample four responses using the same decoding configuration. Each response is scored using the corresponding task-specific evaluation criterion. The reported result is the arithmetic mean of the four single-sample evaluation scores computed on the same held-out set:
\begin{equation}
\operatorname{mean@4}
=
\frac{1}{4}\sum_{j=1}^{4}
\operatorname{Score}
\left(\{(x_i,y_i^{(j)})\}_{i=1}^{N}\right),
\label{eq:mean_at_4}
\end{equation}
where $N$ is the number of evaluation prompts, $y_i^{(j)}$ is the $j$-th sampled response to prompt $x_i$, and $\operatorname{Score}$ is the task-specific evaluation metric. For mean per-response correctness, this is equivalently the average over all $4N$ scored responses. We do not select the best of the four responses or apply majority voting; \texttt{mean@4} is therefore distinct from \texttt{pass@4}.

\paragraph{Update configurations.}
We compare the following trainable parameterizations under the matched modeling and evaluation protocols described above, using the configuration-specific learning rates in Table~\ref{tab:hparams}.

\begin{itemize}[leftmargin=1.4em,itemsep=2pt]
\item \textbf{Full fine-tuning} updates all model parameters by default and serves as the unrestricted reference configuration. In principle, it can express a substantially broader class of parameter updates than the other configurations.

\item \textbf{LoRA} \citep{hu2021loralowrankadaptationlarge} injects low-rank adapters on \texttt{all-linear} target modules while keeping the backbone frozen. This restricts the update to a low-rank form, though each input can still produce a different correction (i.e., it remains context-dependent). The rank varies across experiments; Table~\ref{tab:hparams} reports the default rank-$8$ configuration.

\item \textbf{Vector steering} freezes the entire backbone and trains context-invariant bias vectors added to the residual stream. For a controlled layer set $\mathcal S$, the single-vector intervention is
\begin{equation}
\mathbf{h}_\ell \;\mapsto\; \mathbf{h}_\ell + \mathbf{v}_\ell,
\qquad \mathbf{v}_\ell \in \mathbb{R}^{d},\quad \ell \in \mathcal S,
\label{eq:vector_steering}
\end{equation}
with $|\mathcal S|d$ trainable parameters. For the default single-vector configuration on Qwen3-4B ($d=2560$) with $\mathcal S=\{5,\dots,20\}$ and $|\mathcal S|=16$, this amounts to $40{,}960$ scalars, approximately $4.1\times10^4$, or roughly $1.0\times10^{-5}$ of the backbone. Crucially, $\mathbf{v}_\ell$ does not vary with the input: the \emph{same} vector is added regardless of context or decoding position.
\end{itemize}

Unless otherwise stated, vector steering uses the controlled layer set $\mathcal S=\{5,\dots,20\}$ and a single vector per controlled layer ($K=1$). In the layer-wise experiments of Section~\ref{Layer-wise Compressibility of RL Updates}, all three configurations are restricted to the selected layer: full-parameter updating trains only that layer, LoRA adapts its linear modules, and vector steering adds an offset to its output. The remainder of the backbone is frozen. These selected-layer experiments are distinct from the default model-wide full-fine-tuning and LoRA configurations.

For multi-vector probes ($K>1$), we learn the ordered set $\{\mathbf{v}^{(0)}_\ell,\ldots,\mathbf{v}^{(K-1)}_\ell\}$ sequentially at each controlled layer. The first vector is trained to convergence and then retained as a fixed reference. At each subsequent stage $k$, \emph{only the current vector} $\mathbf{v}^{(k)}_\ell$ is injected and optimized; previously learned vectors remain inactive in the forward pass and are used solely to define the orthogonality constraint. After each optimizer step, the current vector is reprojected onto the orthogonal complement of the previously learned directions. Training advances to the next vector once the current-stage loss plateaus. Each direction is also evaluated independently, with only the corresponding vector active at each controlled layer. Consequently, each stage optimizes $|\mathcal S|d$ parameters, while the complete collection contains $K|\mathcal S|d$ scalars; increasing $K$ probes additional independently effective directions rather than simultaneously summing constant offsets.

Vector steering requires a substantially larger learning rate than the other two configurations; the values we use are reported in Table~\ref{tab:hparams}.

\begin{table}[h]
\centering\small
\caption{Learning rates and paradigm-specific hyperparameters.}
\label{tab:hparams}
\setlength{\tabcolsep}{5pt}
\begin{tabular}{@{}lccc@{}}
\toprule
& \textbf{RL} & \textbf{OPD} & \textbf{Off-PD} \\
\midrule
LR, full fine-tuning & $1\times10^{-6}$ & $1\times10^{-5}$ & $5\times10^{-5}$ \\
LR, LoRA ($r{=}8$)   & $3\times10^{-5}$ & $5\times10^{-6}$ & $1\times10^{-5}$ \\
LR, vector steering  & --- & $1\times10^{-1}$ & $5\times10^{-2}$ \\
\midrule
Max prompt / response & $2{,}048$ / $20{,}480$ & $3{,}072$ / $16{,}384$ & $3{,}072$ / $16{,}384$ \\
Prompt batch & $128$ & $1{,}024$ & $1{,}024$ \\
Mini-batch (per step) & $32$ & $1{,}024$ & $1{,}024$ \\
Training samples per prompt $n$ & $16$ & $1$ & $1$ \\
Evaluation samples per prompt & $4$ & $4$ & $4$ \\
Evaluation metric & \texttt{mean@4} & \texttt{mean@4} & \texttt{mean@4} \\
Rollout source & student & student & frozen teacher \\
Rollout IS correction & --- & token, threshold $5.0$ & off \\
Auxiliary reference-KL coeff. & $0.001$ (GRPO) / $0$ (DAPO) & $0$ & $0$ \\
Epochs & $1000$ & $1000$ & $1000$ \\
Precision & \texttt{bfloat16} & \texttt{bfloat16} & \texttt{bfloat16} \\
\bottomrule
\end{tabular}
\end{table}

The auxiliary reference-KL coefficient in Table~\ref{tab:hparams} refers to additional regularization toward $\pi_{\mathrm{ref}}$, not to the main distillation objective. In particular, a zero auxiliary coefficient does not remove the reverse-KL teacher supervision in OPD or the teacher-trajectory negative log-likelihood in Off-PD.

\paragraph{Prompts.}
All Qwen3 models are run and trained in \emph{non-thinking} mode
(\texttt{enable\_thinking=False} passed to the chat template) so that the response
distribution is directly comparable between the base model, the RL teacher, and every
steered student; DeepSeek-R1-Distill students keep thinking mode enabled, as their base
model is trained that way. We always use each model's own built-in chat template rather
than a hand-written wrapper, and do not manually prepend an additional system message.
Each domain uses its own answer-format instruction, which is also what the corresponding
verifier parses. The examples below show \emph{user-message content only}, with
\texttt{\{question\}} denoting the raw problem statement. Role delimiters and model-specific
thinking prefixes are supplied by the native chat template rather than manually inserted
into the user message.

For \textbf{mathematical reasoning}, the answer-extraction instruction is appended to the
question:
\begin{verbatim}
{question}
Please reason step by step, and put your final answer within \boxed{}.
\end{verbatim}

\noindent
For \textbf{scientific reasoning}, the instruction is placed first and the question is a
multiple-choice item whose options are enumerated A--J, so that the verifier can match a
single letter:
\begin{verbatim}
Solve the following problem. Make sure to put the answer (and only answer)
inside \boxed{}.

{question}

A: {option_A}
B: {option_B}
...
J: {option_J}
\end{verbatim}

\noindent
For \textbf{code generation}, the instruction requires a single fenced Python block, which
the sandbox extracts verbatim and executes against the hidden unit tests:
\begin{verbatim}
{question}

Write Python code to solve the problem. Present the code in
```python
Your code
```
at the end.
You need to think first then write the Python code.
\end{verbatim}

\noindent
For \textbf{instruction following}, the constraints are stated in natural language at the
head of the prompt and no separate reasoning instruction is added, since the verifiable
constraints concern the response's surface form:
\begin{verbatim}
Please follow these instructions in your response: {constraint_1} and
{constraint_2} and ... {constraint_n}. {question}
\end{verbatim}

\noindent
Teacher rollouts for off-policy distillation are generated with the \emph{same}
model-specific prompt template as the corresponding on-policy runs. Prompt construction
is therefore matched, while the two paradigms retain the distinct trajectory-sampling
distributions and distillation objectives defined in Section~\ref{Preliminaries}.

\paragraph{Infrastructure.}
All training runs use $8\times$ or $32\times$ H20 96GB GPUs with FSDP for the actor and
vLLM for generation. Vector-steering runs disable CUDA graph capture
(\texttt{enforce\_eager=true}) because the steering hook mutates hidden states inside the
vLLM forward pass; LoRA and full-parameter runs leave CUDA graphs enabled. Rollout tensor
parallelism is $4$ for RL and $2$ for distillation. When a fixed model is used for scoring,
its parameters are offloaded to CPU between forward passes; this model serves as the
reference policy for RL regularization or the teacher for distillation supervision,
depending on the paradigm. The two commands below give the on-policy and off-policy
distillation invocations for vector steering. The full-parameter and LoRA variants use
the corresponding update parameterizations and hyperparameters in Table~\ref{tab:hparams},
with the configuration-specific generation-engine settings described above.

\begin{tcolorbox}[
    colback=gray!5, colframe=gray!70, arc=3pt,
    left=2pt, right=2pt, top=2pt, bottom=2pt, boxrule=0.5pt,
    breakable, fontupper=\small, listing only,
    listing options={language=bash},
    title=On-Policy Distillation Command (vector steering), title style={color=black},
    label=cmd:opd
]
\begin{verbatim}
# examples/representation/4bopd_single.sh -> 4bopd.sh
python3 -m verl.trainer.main_ppo \
    algorithm.adv_estimator=grpo \
    algorithm.use_kl_in_reward=False \
    algorithm.rollout_correction.rollout_is=token \
    algorithm.rollout_correction.rollout_is_threshold=5.0 \
    algorithm.rollout_correction.rollout_rs=null \
    algorithm.rollout_correction.bypass_mode=false \
    data.train_files=/path/to/DeepMath-103K/train_filtered_level6.parquet \
    data.val_files="['/path/to/AIME2024/test.parquet']" \
    data.train_batch_size=1024 \
    data.max_prompt_length=3072 \
    data.max_response_length=16384 \
    data.filter_overlong_prompts=True \
    data.truncation='error' \
    data.shuffle=True \
    data.seed=42 \
    data.return_raw_chat=True \
    +data.apply_chat_template_kwargs.enable_thinking=False \
    actor_rollout_ref.model.path=$MODEL_PATH \
    +actor_rollout_ref.model.base_model_path=$MODEL_PATH \
    +actor_rollout_ref.ref.model.path=$TEACHER_MODEL_PATH \
    actor_rollout_ref.model.lora_rank=0 \
    actor_rollout_ref.model.enable_trainable_token_vector=true \
    actor_rollout_ref.model.trainable_token_vector_mode=single \
    actor_rollout_ref.model.trainable_token_vector_num=1 \
    actor_rollout_ref.model.trainable_token_vector_layer_start=5 \
    actor_rollout_ref.model.trainable_token_vector_layer_end=20 \
    actor_rollout_ref.model.trainable_token_vector_sampling_method=hypersphere \
    actor_rollout_ref.model.trainable_token_vector_scale=0.1 \
    actor_rollout_ref.model.trainable_token_vector_learnable_alpha=false \
    actor_rollout_ref.model.trainable_token_vector_alpha_init=0.0 \
    actor_rollout_ref.model.trainable_token_vector_curriculum=none \
    actor_rollout_ref.model.trainable_token_vector_force_all_tokens=true \
    actor_rollout_ref.model.use_remove_padding=True \
    actor_rollout_ref.model.enable_gradient_checkpointing=True \
    actor_rollout_ref.actor.optim.lr=1e-1 \
    actor_rollout_ref.actor.optim.lr_warmup_steps_ratio=0.0 \
    actor_rollout_ref.actor.policy_loss.only_reverse_kl_advantages=True \
    actor_rollout_ref.actor.ppo_mini_batch_size=1024 \
    actor_rollout_ref.actor.ppo_micro_batch_size_per_gpu=1 \
    actor_rollout_ref.actor.use_kl_loss=True \
    actor_rollout_ref.actor.kl_loss_coef=0 \
    actor_rollout_ref.actor.kl_loss_type=low_var_kl \
    actor_rollout_ref.actor.entropy_coeff=0 \
    actor_rollout_ref.actor.ppo_max_token_len_per_gpu=32768 \
    actor_rollout_ref.actor.fsdp_config.dtype=bfloat16 \
    actor_rollout_ref.actor.fsdp_config.model_dtype=fp32 \
    actor_rollout_ref.actor.fsdp_config.use_orig_params=True \
    actor_rollout_ref.rollout.name=vllm \
    actor_rollout_ref.rollout.calculate_log_probs=true \
    actor_rollout_ref.rollout.n=1 \
    actor_rollout_ref.rollout.temperature=1.0 \
    actor_rollout_ref.rollout.top_p=1.0 \
    actor_rollout_ref.rollout.tensor_model_parallel_size=2 \
    actor_rollout_ref.rollout.gpu_memory_utilization=0.8 \
    actor_rollout_ref.rollout.enforce_eager=true \
    actor_rollout_ref.rollout.free_cache_engine=True \
    actor_rollout_ref.rollout.max_num_batched_tokens=32768 \
    actor_rollout_ref.rollout.val_kwargs.do_sample=True \
    actor_rollout_ref.rollout.val_kwargs.temperature=1.0 \
    actor_rollout_ref.rollout.val_kwargs.top_p=1.0 \
    actor_rollout_ref.rollout.val_kwargs.n=4 \
    actor_rollout_ref.ref.log_prob_micro_batch_size_per_gpu=1 \
    actor_rollout_ref.ref.fsdp_config.param_offload=True \
    reward_model.reward_manager=naive \
    trainer.save_vector=true \
    trainer.save_vector_dir=/path/to/trainable_vectors \
    trainer.logger='["console","wandb"]' \
    trainer.n_gpus_per_node=8 \
    trainer.nnodes=1 \
    trainer.test_freq=10 \
    trainer.total_epochs=1000 $@
\end{verbatim}
\end{tcolorbox}

\begin{tcolorbox}[
    colback=gray!5, colframe=gray!70, arc=3pt,
    left=2pt, right=2pt, top=2pt, bottom=2pt, boxrule=0.5pt,
    breakable, fontupper=\small, listing only,
    listing options={language=bash},
    title=Off-Policy Distillation Command (vector steering), title style={color=black},
    label=cmd:offpd
]
\begin{verbatim}
# Stage 1: dump teacher rollouts once
python3 -m verl.trainer.main_generation \
    model.path=$TEACHER_MODEL_PATH \
    data.path=/path/to/DeepMath-103K/train_filtered_level6.parquet \
    data.prompt_key=prompt \
    data.n_samples=1 \
    data.batch_size=1024 \
    rollout.temperature=1.0 \
    rollout.top_p=1.0 \
    rollout.prompt_length=3072 \
    rollout.response_length=16384 \
    rollout.max_num_batched_tokens=32768 \
    rollout.tensor_model_parallel_size=2 \
    data.output_path=$OUTPUT_BASE/round_${i}.parquet

python3 examples/representation/merge_rollouts_to_sft.py \
    --inputs $OUTPUT_BASE/round_*.parquet \
    --output $OUTPUT_BASE/teacher_sft_all.parquet \
    --prompt_key prompt --response_key response

# Stage 2: distill off-policy on the frozen corpus
python3 -m verl.trainer.main_ppo \
    algorithm.adv_estimator=grpo \
    algorithm.rollout_correction.rollout_is=null \
    algorithm.use_kl_in_reward=False \
    data.train_files=$OUTPUT_BASE/teacher_sft_all.parquet \
    data.val_files="['/path/to/AIME2024/test.parquet']" \
    data.train_batch_size=1024 \
    data.max_prompt_length=3072 \
    data.max_response_length=16384 \
    data.return_raw_chat=True \
    +data.apply_chat_template_kwargs.enable_thinking=False \
    actor_rollout_ref.model.path=$MODEL_PATH \
    +actor_rollout_ref.ref.model.path=$TEACHER_MODEL_PATH \
    actor_rollout_ref.model.lora_rank=0 \
    actor_rollout_ref.model.enable_trainable_token_vector=true \
    actor_rollout_ref.model.trainable_token_vector_mode=single \
    actor_rollout_ref.model.trainable_token_vector_num=1 \
    actor_rollout_ref.model.trainable_token_vector_layer_start=5 \
    actor_rollout_ref.model.trainable_token_vector_layer_end=20 \
    actor_rollout_ref.model.trainable_token_vector_sampling_method=hypersphere \
    actor_rollout_ref.model.trainable_token_vector_scale=0.1 \
    actor_rollout_ref.model.trainable_token_vector_learnable_alpha=false \
    actor_rollout_ref.model.trainable_token_vector_alpha_init=0.0 \
    actor_rollout_ref.model.trainable_token_vector_curriculum=none \
    actor_rollout_ref.model.trainable_token_vector_force_all_tokens=true \
    actor_rollout_ref.actor.optim.lr=5e-2 \
    actor_rollout_ref.actor.policy_loss.only_reverse_kl_advantages=true \
    actor_rollout_ref.actor.ppo_mini_batch_size=1024 \
    actor_rollout_ref.actor.ppo_micro_batch_size_per_gpu=1 \
    actor_rollout_ref.actor.ppo_max_token_len_per_gpu=32768 \
    actor_rollout_ref.actor.use_kl_loss=True \
    actor_rollout_ref.actor.kl_loss_coef=0 \
    actor_rollout_ref.actor.entropy_coeff=0 \
    actor_rollout_ref.rollout.offline_teacher_rollout=true \
    actor_rollout_ref.rollout.offline_teacher_data_path=$./teacher_sft_all.parquet \
    actor_rollout_ref.rollout.offline_teacher_response_key=response \
    actor_rollout_ref.rollout.name=vllm \
    actor_rollout_ref.rollout.n=1 \
    actor_rollout_ref.rollout.tensor_model_parallel_size=2 \
    actor_rollout_ref.rollout.gpu_memory_utilization=0.5 \
    actor_rollout_ref.rollout.enforce_eager=true \
    actor_rollout_ref.ref.fsdp_config.param_offload=True \
    reward_model.reward_manager=naive \
    trainer.save_vector=true \
    trainer.logger='["console","wandb"]' \
    trainer.n_gpus_per_node=8 \
    trainer.nnodes=1 \
    trainer.test_freq=5 \
    trainer.total_epochs=1000 $@
\end{verbatim}
\end{tcolorbox}

\clearpage

\begin{figure}[t]
    \centering
    \includegraphics[width=\textwidth]{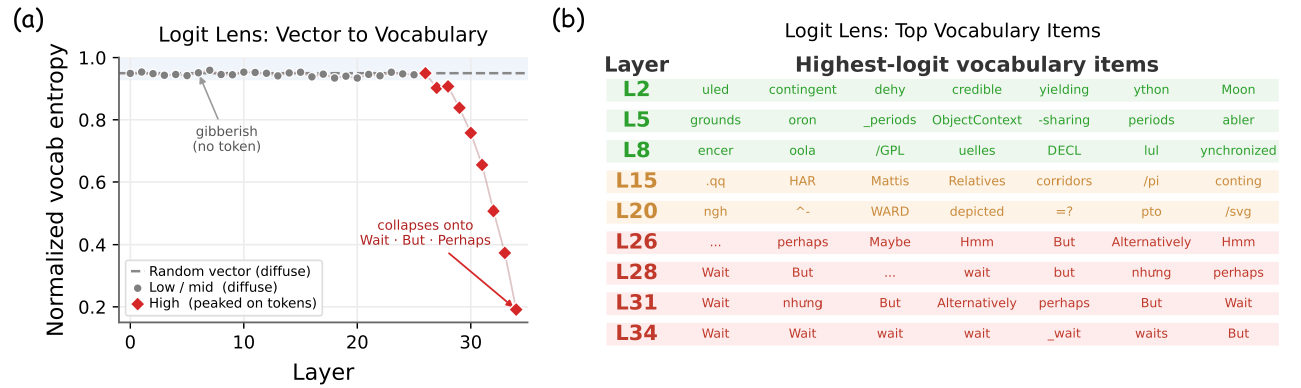}
    \caption{
    Vocabulary-space readout of steering vectors across network depth.
    (a) Normalized entropy of
    \(\operatorname{softmax}(
    \operatorname{RMSNorm}(\mathbf{v}_L)\mathbf{W}_U^\top)\).
    The dashed line at \(0.9498\) denotes the mean entropy of normalized
    Gaussian directions passed through the same readout pipeline.
    (b) Highest-logit vocabulary items for steering vectors learned at
    representative layers. All vectors are normalized before readout.
    }
    \label{fig:logit_lens}
\end{figure}

\section{Additional Experiments for Effective Manifold Capacity}
\label{Additional Experiments for Effective Manifold Capacity}

\subsection{Vocabulary-Space Readout}
\label{sec:vocabulary_readout}

The layer-wise experiments in the main text show that fixed-vector
steering recovers most of the RL-induced performance gain at lower and
intermediate layers, but becomes substantially less effective near the
output. To better understand this difference, we examine the
vocabulary-space readout of steering vectors learned at different
depths. This analysis characterizes how the learned directions align
with output vocabulary items and how this alignment changes with
injection depth.

For a steering vector \(\mathbf{v}_L \in \mathbb{R}^{H}\) learned at
layer \(L\), we bypass subsequent Transformer layers and directly apply
RMS normalization followed by the unembedding matrix \citep{geva2022transformerfeedforwardlayersbuild}:
\begin{equation}
\boldsymbol{\ell}(\mathbf{v}_L)
=
\mathbf{W}_U\operatorname{RMSNorm}(\mathbf{v}_L),
\qquad
\operatorname{RMSNorm}(\mathbf{v})
=
\frac{\mathbf{v}}
{\sqrt{\tfrac{1}{H}\sum_{j=1}^{H}v_j^2+\varepsilon}},
\qquad
\varepsilon=10^{-6},
\label{eq:vocabulary_readout}
\end{equation}
where \(\mathbf{W}_U \in \mathbb{R}^{V \times H}\) is the unembedding
matrix, and \(H\) and \(V\) denote the hidden dimension and vocabulary
size, respectively. The resulting readout distribution is
\begin{equation}
\mathbf{p}_L
=
\operatorname{softmax}
\left(
\boldsymbol{\ell}(\mathbf{v}_L)
\right).
\label{eq:vocabulary_distribution}
\end{equation}
The learned vector norms vary across injection depths.
RMS normalization reduces the influence of these magnitude differences,
allowing the readout to primarily characterize the vocabulary alignment
of each steering direction.

We quantify the concentration of the readout distribution using
normalized entropy:
\begin{equation}
\widetilde{\mathcal{H}}(\mathbf{p}_L)
=
-\frac{1}{\log V}
\sum_{t=1}^{V}p_{L,t}\log p_{L,t}.
\label{eq:vocabulary_entropy}
\end{equation}
Values closer to one indicate a more diffuse distribution, whereas
lower values indicate greater concentration on a smaller set of
vocabulary items.

To establish a random-direction baseline, we sample isotropic Gaussian
vectors with the same dimensionality as the steering vectors,
normalize them to unit \(L_2\) norm, and pass them through the same
RMS normalization, unembedding, and softmax operations:
\begin{equation}
\mathbf{z}^{(m)}
\sim
\mathcal{N}(\mathbf{0},\mathbf{I}_H),
\qquad
\bar{\mathbf{z}}^{(m)}
=
\frac{\mathbf{z}^{(m)}}{\|\mathbf{z}^{(m)}\|_2},
\end{equation}
\begin{equation}
\mathbf{p}_{\mathrm{null}}^{(m)}
=
\operatorname{softmax}
\left(
\mathbf{W}_U
\operatorname{RMSNorm}(\bar{\mathbf{z}}^{(m)})
\right).
\end{equation}
Averaging the normalized entropy over the \(M\) sampled directions
gives
\begin{equation}
\widetilde{\mathcal{H}}_{\mathrm{null}}
=
\frac{1}{M}
\sum_{m=1}^{M}
\widetilde{\mathcal{H}}
\left(
\mathbf{p}_{\mathrm{null}}^{(m)}
\right)
=
0.9498.
\label{eq:vocabulary_null}
\end{equation}

As shown in Fig.~\ref{fig:logit_lens} (a), the normalized entropy
remains close to the random baseline at lower and intermediate layers,
indicating a diffuse vocabulary readout.
The highest-logit items from L2 to L20 in
Fig.~\ref{fig:logit_lens}(b) likewise consist mainly of heterogeneous
subword fragments and symbols, without an apparent common semantic
theme. Combined with the strong performance of steering at these
depths, this observation shows that effective intermediate-layer
interventions need not exhibit a concentrated direct preference for
particular output tokens. This pattern is consistent with the role
of downstream computation: as a shared additive correction propagates
through the remaining layers, its interaction with different hidden
states can produce context-dependent output adjustments.

Near the output, the readout becomes increasingly concentrated.
Beginning around L26, normalized entropy falls below the random
baseline, while the highest-logit items increasingly include
\texttt{Wait}, \texttt{But}, \texttt{perhaps}, and
\texttt{Alternatively}.
These tokens are associated with contrast, qualification, and
reconsideration, suggesting that high-layer steering directions have
more explicit vocabulary-level associations.
The readout thus shifts from a diffuse pattern at lower and
intermediate layers toward a concentrated set of discourse-related
items near the output.

This increase in vocabulary interpretability does not coincide with
better steering performance. Instead, the main-text depth sweep shows
that fixed-vector steering deteriorates at these higher layers.
A direction can therefore have a clear vocabulary-level association
while remaining insufficient for effective task adaptation.
The issue is not simply whether a direction aligns with meaningful
output tokens, but how that direction is applied across token
positions. Expressions of contrast, qualification, or reconsideration
are useful in different contexts, whereas a fixed steering vector
applies the same additive correction at every position, without
explicitly adjusting its strength to the current representation.

\textbf{Summary.} The vocabulary readout provides a complementary view of the
layer-wise results.
Lower- and intermediate-layer steering remains effective despite
its diffuse direct readout, whereas high-layer directions exhibit
more concentrated vocabulary associations alongside weaker
fixed-vector performance.
This pattern suggests that direct vocabulary alignment alone does
not account for steering effectiveness.
One possible explanation is that the remaining layers allow earlier
interventions to acquire context-dependent effects, while later
interventions have less opportunity for such transformation.
The gains from high-layer gating are consistent with this account,
supporting input dependence as a useful factor in understanding
the depth sensitivity of fixed-vector steering.

\begin{figure*}[t]
    \centering
    \includegraphics[width=\textwidth]{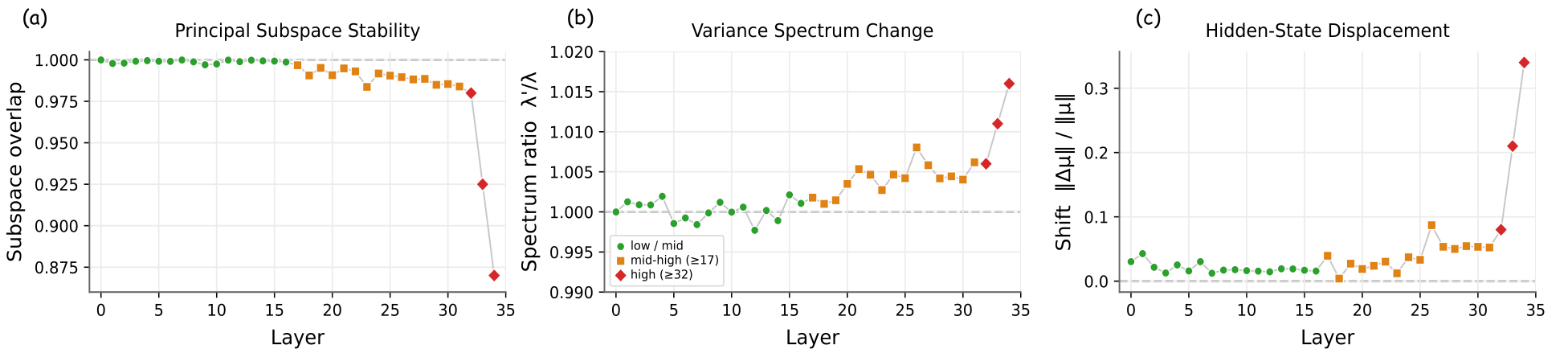}
    \caption{
Effect of single-vector steering on hidden-state geometry in Qwen3-4B
(\(k=20\), \(H=2560\)), measured at layer \(L+1\) for injection at layer \(L\).
(a) Mean principal-subspace overlap between clean and steered activations.
(b) Ratio of the top-\(k\) explained variance after and before steering.
(c) Relative displacement of the activation centroid.
Dashed lines denote the corresponding reference values.
}
    \label{fig:pca-null}
\end{figure*}

\subsection{Local Representation Geometry under Single-Vector Steering}
\label{app:local-representation-geometry}

Section~\ref{Layer-wise Compressibility of RL Updates} shows that
single-vector steering recovers most of the RL-induced performance
gain at lower and intermediate layers, but becomes less effective
near the output. The training dynamics and gating results suggest
that the expressiveness of a fixed additive correction contributes
to this depth dependence. Here, we complement the performance
analysis by examining changes in hidden-state geometry across
injection depths, focusing on the leading principal subspaces,
their associated variance, and the activation means.

\paragraph{Experimental Setup.}
For each injection layer \(L\), we collect hidden states from the
base and steered models on the same held-out inputs. The steering
vector is added to the output of layer \(L\), and representations
are measured at the output of layer \(L+1\). This captures the
local effect of steering after one subsequent Transformer layer.
Aggregating all valid token positions gives
\begin{equation}
\mathbf{X}_{L}\in\mathbb{R}^{N\times H},
\qquad
\mathbf{X}'_{L}\in\mathbb{R}^{N\times H},
\end{equation}
where \(N\) is the number of token positions and \(H\) is the hidden
dimension. The subscript \(L\) indexes the injection layer; both
matrices contain activations measured at layer \(L+1\).
Their respective means are
\begin{equation}
\boldsymbol{\mu}_{L}
=
\frac{1}{N}
\sum_{i=1}^{N}\mathbf{x}_{L,i},
\qquad
\boldsymbol{\mu}'_{L}
=
\frac{1}{N}
\sum_{i=1}^{N}\mathbf{x}'_{L,i}.
\end{equation}

\begin{figure}[t]
    \includegraphics[width=1\textwidth]{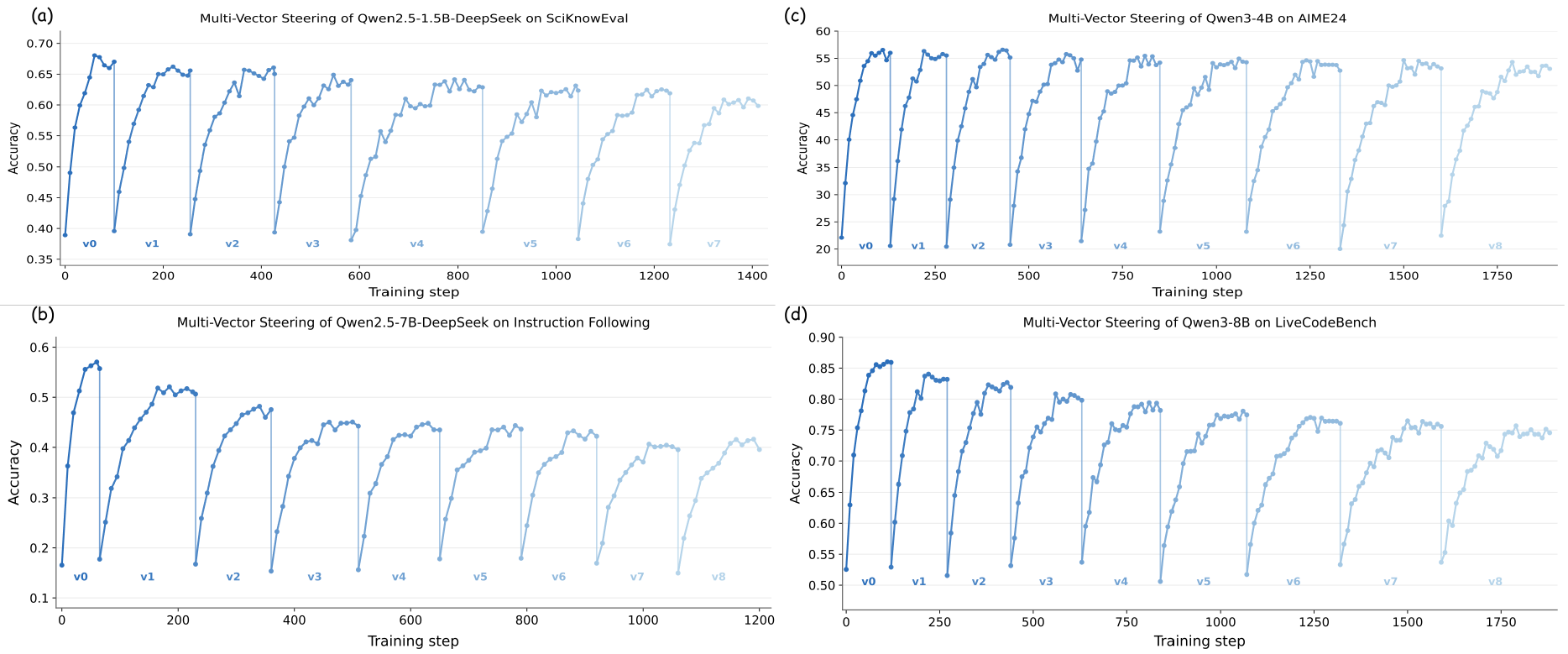}
    \caption{
Multi-vector steering across all four task domains and model scales. The first vector is consistently the most effective, with subsequent directions showing gradually decreasing standalone performance; the decay is slower for larger models.
}
    \label{appendix3}
\end{figure}

We center each activation matrix using its own mean and independently
compute low-rank singular value decompositions:
\begin{equation}
\mathbf{X}_{L,c}
\approx
\mathbf{U}_{L}
\boldsymbol{\Sigma}_{L}
\mathbf{V}_{L}^{\top},
\qquad
\mathbf{X}'_{L,c}
\approx
\mathbf{U}'_{L}
\boldsymbol{\Sigma}'_{L}
{\mathbf{V}'_{L}}^{\top}.
\end{equation}
The first \(k\) right singular vectors form the orthonormal
principal-component bases
\(\mathbf{V}_{L,k},\mathbf{V}'_{L,k}\in\mathbb{R}^{H\times k}\).
Writing \(\Sigma_{L,j}\) and \(\Sigma'_{L,j}\) for the corresponding
singular values, the covariance eigenvalues are
\begin{equation}
\lambda_{L,j}
=
\frac{\Sigma_{L,j}^{2}}{N-1},
\qquad
\lambda'_{L,j}
=
\frac{{\Sigma'_{L,j}}^{2}}{N-1}.
\end{equation}
For Qwen3-4B, we use \(k=20\) and \(H=2560\).

\paragraph{Principal-Subspace Alignment.}
We measure the alignment between the clean and steered top-\(k\)
principal subspaces as
\begin{equation}
\operatorname{Overlap}_{L,k}
=
\frac{1}{k}
\sum_{j=1}^{k}
\sigma_j
\left(
\mathbf{V}_{L,k}^{\top}
\mathbf{V}'_{L,k}
\right),
\end{equation}
where \(\sigma_j(\cdot)\) denotes the \(j\)-th singular value.
These singular values are the cosines of the principal angles
between the two subspaces, so an overlap close to one indicates
similar subspace orientations.

As shown in Fig.~\ref{fig:pca-null}(a), the overlap remains close
to one for lower- and intermediate-layer injections. At these
depths, the leading directions of activation variation therefore
remain largely aligned after one layer of propagation.
The overlap decreases toward the output, with the largest
departures occurring at the latest injection sites.
This indicates greater changes in the orientation of the leading
subspace for these high-layer interventions.

\paragraph{Variance Captured by the Leading Components.}
We next compare the total variance captured by the top-\(k\)
components of the steered and clean activations:
\begin{equation}
\rho_{L,k}
=
\frac{
\sum_{j=1}^{k}\lambda'_{L,j}
}{
\sum_{j=1}^{k}\lambda_{L,j}
}.
\end{equation}
The numerator and denominator use the leading eigenvalues of their
respective covariance matrices. A ratio close to one indicates
similar amounts of variance captured by the two principal subspaces.

Fig.~\ref{fig:pca-null}(b) shows that this ratio stays close to one
at lower and intermediate injection depths. Along with the high
subspace overlap, this suggests that steering at these depths
introduces relatively small changes in both the orientation and
the total variance of the leading components.
The ratio departs further from one near the output, although the
changes in captured variance remain modest.
Thus, the reduced subspace alignment at the latest injection sites
is accompanied by a comparatively small change in the variance
captured by the leading components.

\paragraph{Displacement of the Activation Centroid.}
To characterize changes in the mean representation, we measure
the relative centroid displacement:
\begin{equation}
D_{L}
=
\frac{
\left\|
\boldsymbol{\mu}'_{L}
-
\boldsymbol{\mu}_{L}
\right\|_2
}{
\left\|
\boldsymbol{\mu}_{L}
\right\|_2
+
10^{-9}
}.
\end{equation}
This statistic expresses the change in the activation mean relative
to the norm of the clean mean.

As shown in Fig.~\ref{fig:pca-null} (c), relative centroid
displacement is small for lower- and intermediate-layer injections
and increases near the output, particularly at the latest
injection sites.
The strong performance of earlier-layer steering therefore
coincides with only a small relative change in the mean
representation at the next layer.
High-layer steering produces a larger relative mean shift,
despite its weaker task performance.

\paragraph{Relation to the Layer-Wise Expressiveness Bottleneck.}
Effective steering at lower and intermediate layers is accompanied
by limited changes in the measured local geometry. Near the output,
reduced subspace alignment and larger centroid shifts coexist with
weaker task performance. These observations are consistent with the
expressiveness analysis in
Section~\ref{Layer-wise Compressibility of RL Updates}:
high-layer degradation occurs despite measurable changes in the
representations, and the gains from gating suggest that
input-dependent modulation helps make these interventions
more effective.

\clearpage
\begin{figure*}[t]
    \centering
    \includegraphics[width=\textwidth]{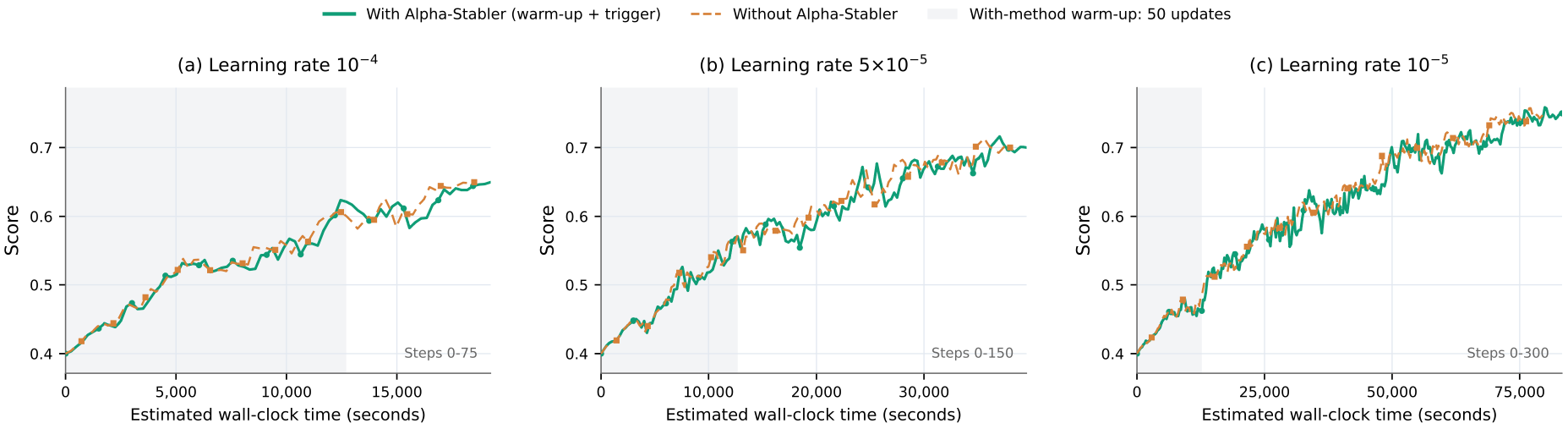}
    \caption{
    Comparison of pre-collapse score trajectories with and without Alpha-Stabler at learning rates of \textbf{(a)} \(10^{-4}\), \textbf{(b)} \(5\times10^{-5}\), and \textbf{(c)} \(10^{-5}\). Shaded regions denote the first \(50\) warm-up updates of Alpha-Stabler. Controlled and uncontrolled runs exhibit essentially the same score growth before collapse, indicating that Alpha-Stabler's stabilization introduces almost no additional computational overhead.
    }
    \label{appendix4}
\end{figure*}

\begin{algorithm}[t]
\caption{Alpha-Stabler: Warm-Up Calibration and PSI-Triggered Gradient Projection}
\label{alpha_stabler}
\begin{algorithmic}[1]
\Require Actor $\theta$; frozen base $\theta_0$; monitored layers $\mathcal S$ at $1/4$, $1/2$, $3/4$ depth
\Require Principal fraction $q=0.10$; warm-up $T_{\mathrm{warm}}=50$; interval $M=3$; EMA $\beta=0.95$; persistence $p=3$; thresholds $c_{\mathrm{on}}=3$, $c_{\mathrm{safe}}=2$
\Require RL optimizer and LR $\eta_\theta$; stabilizer $\epsilon>0$; min.\ avg.\ squared token shift $\eta_{\min}>0$
\Ensure Trained actor $\theta$ (inference uses the actor alone)
\State Run $T_{\mathrm{warm}}$ unprojected RL updates; accumulate base activation covariance on actor rollouts
\State Every $M$ updates, retain detached shifts in $\mathcal X_\ell$ if $N>0$ and $\|X_\ell\|_F^2/N>\eta_{\min}$
\State Estimate $U_{\ell,r}$ from the reference covariance, $r=\lceil qd\rceil$; freeze bases
\For{each $\ell\in\mathcal S$}
    \State $\mathcal W_\ell\gets\{\|XU_{\ell,r}\|_F^2/(\|X\|_F^2+\epsilon):X\in\mathcal X_\ell\}$
    \If{$\mathcal W_\ell$ is empty} \State \textbf{Stop and recalibrate} \EndIf
    \State $m_\ell\gets\operatorname{median}\mathcal W_\ell$; $s_\ell\gets1.4826\,\operatorname{MAD}(\mathcal W_\ell)+\epsilon$
    \State $\tau_\ell^{\mathrm{on}}\gets m_\ell+c_{\mathrm{on}}s_\ell$; $\tau_\ell^{\mathrm{safe}}\gets m_\ell+c_{\mathrm{safe}}s_\ell$
    \If{$0<\tau_\ell^{\mathrm{safe}}<\tau_\ell^{\mathrm{on}}<1$ fails} \State \textbf{Stop and recalibrate} \EndIf
    \State $a_\ell\gets0$; $n_\ell\gets0$; $\bar\psi_\ell\gets m_\ell$
\EndFor
\State Discard calibration records; $t\gets T_{\mathrm{warm}}$
\While{training not complete}
    \State Sample rollouts; compute rewards, advantages, and behavior log-probs
    \For{each minibatch $\mathcal B$ (one optimizer update)}
        \State $t\gets t+1$; clear parameter gradients
        \State Actor forward for $\mathcal L_{\mathrm{RL}}$; retain graph
        \If{$t\bmod M=0$}
            \State Frozen-base activations on the same tokens; $N\gets$ valid positions
            \For{each $\ell\in\mathcal S$}
                \State $X_\ell\gets\operatorname{sg}(H_\ell(\theta;\mathcal B)-H_\ell(\theta_0;\mathcal B))$
                \If{$N>0$ and $\|X_\ell\|_F^2/N>\eta_{\min}$}
                    \State $\psi_\ell\gets\|X_\ell U_{\ell,r}\|_F^2/(\|X_\ell\|_F^2+\epsilon)$
                    \State $\bar\psi_\ell\gets\beta\bar\psi_\ell+(1-\beta)\psi_\ell$
                    \State $n_\ell\gets n_\ell+1$ if $\bar\psi_\ell>\tau_\ell^{\mathrm{on}}$, else $0$
                    \State $a_\ell\gets1$ if $n_\ell\geq p$; $a_\ell\gets0$ if $\bar\psi_\ell<\tau_\ell^{\mathrm{safe}}$; else keep $a_\ell$
                \Else
                    \State $n_\ell\gets0$ \Comment{keep $\bar\psi_\ell$, $a_\ell$}
                \EndIf
            \EndFor
        \EndIf
        \State Fix flags during backward and optimizer step
        \State Backprop $\mathcal L_{\mathrm{RL}}$ with $\widetilde G_\ell^{(t)}=G_\ell^{(t)}-a_\ell^{(t)}(G_\ell^{(t)}U_{\ell,r})U_{\ell,r}^{\top}$
        \State Update $\theta$ with LR $\eta_\theta$
    \EndFor
\EndWhile
\State \Return $\theta$
\Statex \textit{Note:} $H_\ell(\theta;\mathcal B)$ stacks layer-$\ell$ activations at valid tokens in $\mathcal B$; $\operatorname{sg}$ is stop-gradient.
\end{algorithmic}
\end{algorithm}

\clearpage
% =====================================================================
\section{Theory of Low-Dimensional RLVR Steering}
\label{app:theory}

We develop a conditional theoretical account of when RLVR-induced output changes admit a compact activation-space intervention. We study recoverability within a specified intervention class, rather than the intrinsic dimension of the full RL parameter or activation trajectory.
\textbf{Part~I} formulates local distillation as Fisher-weighted least squares in Section~\ref{sec:reduction}, gives sufficient conditions for shared teacher-induced shifts in Section~\ref{sec:tilt}, and derives a recovery identity and a three-constant lower bound in Section~\ref{sec:recovery}.
\textbf{Part~II} characterizes sequentially constrained constant-vector fitting in Section~\ref{sec:multi} and distinguishes activation variance from output sensitivity in Section~\ref{sec:complement}.
\textbf{Part~III} analyzes Alpha-Stabler's fixed-reference PSI statistic and backward gradient projection in Section~\ref{sec:predicter}.
Section~\ref{sec:summary} summarizes the results and their theoretical or empirical scope.
Two relations organize the recovery analysis:
\begin{equation*}
\rho=\frac{\SNR}{1+\SNR}
\qquad\text{and}\qquad
\rho\ge\frac{\kappa}{(1+\eta)(1+\varepsilon)}.
\end{equation*}
Here $\varepsilon$ bounds sensitivity-weighted principal energy relative to complement energy, $\eta$ measures residual energy relative to a shared complement component, and $\kappa$ measures consistency of its amplitude. The quantity $\rho$ is the fraction of locally representable target energy explained by an optimal constant intervention, as defined in Definition~\ref{def:rho}; it is not a task-score gain recovery rate.

\subsection*{Informal preview}
Distillation into a frozen student locally amounts to fitting a hidden-state correction to the teacher's output distribution. Logit linearization and a quadratic KL expansion yield a weighted least-squares problem with context-dependent target $\bDelta(x)$ (Proposition~\ref{prop:wls}). An idealized KL-regularized RL optimum provides a separate account of how shared structure may arise: exponential reward tilting (Lemma~\ref{lem:tilt}) determines next-token probability ratios through continuation-success probabilities (Lemma~\ref{lem:next}). Shared success profiles, a small relative logit residual, and a consistent representation pullback then give a shared correction with controlled residual (Lemma~\ref{lem:profile}, Corollary~\ref{cor:fisherperp}, and Lemma~\ref{lem:pullback}). These are additional structural conditions, not consequences of reward binarity alone.
Under a constant metric, the optimal fixed vector is the mean target shift, and its recovery is high when that mean dominates context-dependent fluctuations (Theorem~\ref{thm:snr}). Table~\ref{tab:recovery_constants} summarizes the quantities entering the sufficient lower bound.

\begin{table}[h]
\centering\small
\caption{Geometric quantities entering the sufficient recovery bound. Their empirical counterparts require matching shift definitions, distributions, and metrics.}
\label{tab:recovery_constants}
\renewcommand{\arraystretch}{1.3}
\begin{tabularx}{\textwidth}{@{}clXX@{}}
\toprule
\textbf{Const.} & \textbf{Name} & \textbf{Meaning} & \textbf{Related analysis} \\
\midrule
$\varepsilon$ & principal intrusion & Upper bound on target principal-to-complement energy in the sensitivity metric & PC-energy measurements under compatible metrics and target definitions \\
$\eta$ & complement residual & Residual-to-shared energy within the complement & Context dependence and fixed-vector approximation \\
$\kappa$ & amplitude consistency & $1/(1+\CV(a)^2)$ & Consistency of the shared target amplitude \\
\bottomrule
\end{tabularx}
\end{table}

\noindent
Theorem~\ref{thm:bound} gives $\rho\ge\kappa/((1+\eta)(1+\varepsilon))$. The bound quantifies recovery under the stated localization and shared-component conditions. These conditions are motivated by the empirical findings; their validity and quantitative values for a particular teacher require a separate justification.

\subsection*{Part I.\quad Single-Vector Steering: Conditions for High Recovery}
\noindent\emph{Section~\ref{sec:setup} introduces the local setting. Sections~\ref{sec:reduction} and \ref{sec:tilt} derive the fitting problem and sufficient conditions for shared teacher structure. Section~\ref{sec:recovery} quantifies the optimal constant-vector fit.}

\subsection{Setup and notation}
\label{sec:setup}
We consider a frozen pretrained language model with decoder layers $\{f_\ell\}_{\ell=1}^{L}$, hidden width $d$, vocabulary size $V$, and unembedding matrix $U\in\R^{V\times d}$, which may be tied or untied. A \emph{context} $x\sim\mathcal D$ consists of a token position and its prefix under a specified distillation distribution $\mathcal D$. The base model has hidden states and logits
\begin{equation*}
\bh_\ell(x)\in\R^d,
\qquad
\bz_0(x)=U\bh_L(x)\in\R^V,
\end{equation*}
with $p_0(\cdot\mid x)=\softmax(\bz_0(x))$. Here $\bh_L(x)$ is the representation supplied to the unembedding, including final normalization. For an intervention before normalization, the downstream Jacobian includes that operation.
The teacher induces logits $\bz_T(x)$ and distribution $p_T(\cdot\mid x)$.
Vector steering adds a context-invariant offset $\bdelta_\ell\in\R^d$ at each controlled layer $\ell\in\mathcal S\subseteq\{1,\ldots,L\}$:
$\bh_\ell(x)\mapsto\bh_\ell(x)+\bdelta_\ell$, with all backbone weights frozen.
We first analyze a single injection site and extend to multiple sites in Remark~\ref{rem:sites}.

For a positive-semidefinite matrix $A$, write $\|v\|_A^2=v^\top Av$ and $\langle u,v\rangle_A=u^\top Av$. The notation $\|\cdot\|_2$ denotes the Euclidean vector norm or matrix spectral norm; $\lambda_i(\cdot)$ and $\sigma_i(\cdot)$ denote eigenvalues and singular values in descending order. All logarithms are natural and all vectors are columns. Bold symbols denote named vectors, while unbold subscripts such as $s_v$ denote scalar coordinates; matrices such as $J_x$, $F_x$, $G_x$, and $B_x$ use uppercase notation without requiring boldface. The symbol $\mathbf{1}_V$ denotes the all-ones vector in $\R^V$. Generic arguments such as $u$ and $v$ are understood from their stated spaces.
We analyze the expected forward-KL objective
\begin{equation}
\mathcal L(\bdelta)
=\E_{x\sim\mathcal D}
\KL\!\big(p_T(\cdot\mid x)\,\big\|\,p_{\bdelta}(\cdot\mid x)\big).
\label{eq:obj}
\end{equation}
The context distribution is fixed for each fitting problem. All expected quadratic energies used below are finite.

Let $\mathcal E\subseteq\R^d$ be a fixed working subspace with orthonormal basis $T\in\R^{d\times h}$. Interventions and target corrections lie in $\mathcal E$. Inverses are always taken for the compressed matrices, such as $T^\top G_xT$, assumed positive definite; positive definiteness on $\mathcal E$ does not require $G_x$ to be invertible on all of $\R^d$. Setting $T=I_d$ gives the full-space formulas when their inverses exist. For singular metrics, use a common identifiable subspace with a consistent identification of targets across contexts.
For results that decompose vectors using $P$ and $Q$, additionally assume that $\mathcal E$ is invariant under these projectors. This ensures that projected targets, means, and corrections remain in the space on which the stated metric bounds apply.

\paragraph{Objects and their dependence on context.}
\begin{itemize}[leftmargin=1.4em,itemsep=2pt]
\item \textbf{Context-dependent values and operators} include $\bh_\ell(x)$, $\bz_T(x)$, $\bt(x)$, $\bDelta(x)$, $a(x)$, $\bxi(x)$, and $J_x,F_x,G_x,B_x$.
\item \textbf{Context-invariant objects} include trainable vectors $\bdelta_\ell$ and $\bv_\ell^{(k)}$, shared directions $\bs$ and $\bw$, averaged metric $G$, mean target shift $\bmu$, projectors $P$, $Q$, and $\Pi_{\mathrm{vocab}}$, and the constants in the bounds. A trainable vector changes during optimization but is shared across contexts at each step.
\item \textbf{Distinct shift objects} are the learned intervention $\bdelta$, the teacher-induced fitting target $\bDelta(x)=B_x\bt(x)$, and the actual actor-to-base activation differences collected in $X_\ell^{(t)}$ in Part~III. They are not identified with one another.
\item \textbf{Sequence and training indices} are introduced separately: Section~\ref{sec:tilt} distinguishes prompts $q$ and complete responses $Y$ from contexts $x=(q,y_{<t})$; Part~III uses $t$ for a parameter-update index.
\end{itemize}
Thus, the local fitting problem approximates a context-dependent target family $\{\bDelta(x)\}$ by one constant vector in an output-sensitivity metric.

\begin{table}[h]
\caption{Recurring symbols for the distillation analysis. Context dependence is indicated by $(x)$ or a subscript $x$; Part~III separately introduces batched training quantities.}
\label{tab:notation}
\centering\small
\renewcommand{\arraystretch}{1.25}
\begin{tabularx}{\textwidth}{@{}llX@{}}
\toprule
\textbf{Symbol} & \textbf{Type} & \textbf{Meaning} \\
\midrule
$x\sim\mathcal D$ & context & Token position and prefix, $x=(q,y_{<t})$ \\
$q$, $Y$ & sequence level & Prompt and complete response, including termination \\
$\bh_\ell(x)\in\R^d$ & ctx.-dep. & Layer-$\ell$ hidden state \\
$\bz_0(x),\bz_T(x)\in\R^V$ & ctx.-dep. & Base and teacher logits \\
$\bt(x)=\bz_T(x)-\bz_0(x)$ & ctx.-dep. & Teacher logit shift modulo $\mathbf{1}_V$; Remark~\ref{rem:gauge} \\
$F_x\in\R^{V\times V}$ & ctx.-dep. & Fisher information at teacher logits; Equation~(\ref{eq:fisher}) \\
$J_x\in\R^{V\times d}$ & ctx.-dep. & Downstream logit Jacobian at the base injection state \\
$G_x=J_x^\top F_xJ_x$ & ctx.-dep. & Pullback sensitivity metric \\
$B_x=T(T^\top G_xT)^{-1}T^\top J_x^\top F_x$ & ctx.-dep. & Least-squares pullback into $\mathcal E$ \\
$\bDelta(x)=B_x\bt(x)$ & ctx.-dep. & Locally best target representation correction \\
$a(x)\in\R_{\ge0}$ & ctx.-dep. & Amplitude in the specified shared-component decomposition \\
$\bxi(x)\in\R^d$ & ctx.-dep. & Representation residual within the complement \\
$\nu_0(x)\in[0,1]$ & ctx.-dep. & Conditional success probability; Equation~(\ref{eq:ZS}) \\
$Z_\beta(x)$ & ctx.-dep. & Conditional reward-tilt normalizer \\
$b(x),a_0(x),\mathbf e(x)$ & ctx.-dep. & Baseline, modulation, and residual of a success profile \\
\midrule
$\bdelta_\ell,\bv_\ell^{(k)}\in\R^d$ & invariant & Learned constant steering vectors \\
$\bs\in\R^V$, $\bw\in\R^d$ & invariant & Shared logit and representation directions \\
$G=\E_x[G_x]$ & invariant & Averaged sensitivity metric \\
$\bmu=\E_x[\bDelta(x)]$ & invariant & Mean target; optimal constant vector when $G_x=G$ \\
$C,P,Q=I-P$ & invariant & Reference activation covariance and its spectral projectors \\
$\Pi_{\mathrm{vocab}}=I-\mathbf{1}_V\mathbf{1}_V^\top/V$ & invariant & Vocabulary-centering projector, distinct from $P$ \\
$c_\beta=e^{1/\beta}-1$ & invariant & Tilt strength for binary rewards \\
$\varepsilon,\eta,\kappa,\rho$ & invariant & Intrusion bound, residual ratio, amplitude consistency, and energy recovery \\
$\eta_{\mathrm{logit}},\theta,\zeta_B,\zeta_R$ & invariant & Relative logit-residual and pullback bounds \\
\bottomrule
\end{tabularx}
\end{table}

\paragraph{Assumptions.}
The local reduction uses Assumption~\ref{as:local}. Constant-metric identities use the exact case of Assumption~\ref{as:homog}. Assumption~\ref{as:reward} specifies shared teacher structure, while Assumptions~\ref{as:offprin}--\ref{as:shared} supply the additional geometry for the three-constant bound. Part~III's projection statements require only their own explicitly stated conditions.

\begin{assumption}[Local / trust-region regime]
\label{as:local}
Fix a teacher, base model, and context distribution. The KL approximation is used only for interventions in a specified neighborhood of the base state. Write
\begin{equation*}
J_x:=\left.\frac{\partial\bz(x)}{\partial\bh_\ell(x)}\right|_{\mathrm{base}}\in\R^{V\times d},
\qquad
\bz_{\bdelta}(x)=\bz_0(x)+J_x\bdelta+\mathbf e_z(x,\bdelta).
\end{equation*}
The Jacobian includes all downstream operations. Assume differentiability with
$\|\mathbf e_z(x,\bdelta)\|_2=o(\|\bdelta\|_2)$, and a bounded third derivative of the log-partition function along the segments between the teacher and intervened logits.
For $\mathbf d_\delta(x)=J_x\bdelta-\bt(x)$, also assume that the combined network-linearization and KL-expansion error is
$o(\|\mathbf d_\delta(x)\|_2^2)$ in the local regime under consideration.
A sufficient condition is $\mathbf d_\delta(x)\to0$ and
$\|\mathbf e_z(x,\bdelta)\|_2=o(\|\mathbf d_\delta(x)\|_2)$, with uniform or integrable bounds that justify taking expectations of the remainder.
Little-$o$ statements concern this joint local regime; differentiability in $\bdelta$ alone does not imply small relative error when $J_x\bdelta$ nearly cancels $\bt(x)$.
\end{assumption}

\begin{assumption}[Shared logit component]
\label{as:reward}
The teacher logit shift $\bt(x)=\bz_T(x)-\bz_0(x)\in\R^V$ admits
\begin{equation}
\bt(x)=\underbrace{a(x)\bs}_{\text{shared direction}}+\br_\perp(x),
\label{eq:as-reward}
\end{equation}
where $\bs$ is fixed, $\|\bs\|_2=1$, $\mathbf{1}_V^\top\bs=0$, and $a(x)\ge0$.
The residual obeys $\langle\bs,\br_\perp(x)\rangle_{F_x}=0$ at every context and
\begin{equation}
\E_x\|\br_\perp(x)\|_{F_x}^2
\le\eta_{\mathrm{logit}}\,\E_x\|a(x)\bs\|_{F_x}^2,
\qquad \eta_{\mathrm{logit}}\ll1.
\label{eq:etalogit}
\end{equation}
Remark~\ref{rem:gauge} justifies the centered gauge. Section~\ref{sec:tilt} provides sufficient success-profile conditions for this decomposition; it is not assumed to hold merely because rewards are binary.
\end{assumption}

\begin{assumption}[Metric homogeneity]
\label{as:homog}
Let $\bar G=\E_{x\sim\mathcal D}[G_x]$ and abbreviate $G=\bar G$, with $T^\top GT$ positive definite. Exact homogeneous-model identities assume $G_x=G$ as quadratic forms on $\mathcal E$.
When approximation is invoked, its quantitative form is
$(1-\zeta)G\preceq G_x\preceq(1+\zeta)G$ on $\mathcal E$, uniformly over the considered contexts, for $0\le\zeta<1$.
Remark~\ref{rem:approx} treats this perturbation for a fixed target family.
Consistency of $B_x\bs$ is a separate condition, quantified by $\zeta_B$ in Lemma~\ref{lem:pullback}; it does not follow from metric homogeneity.
\end{assumption}

\begin{assumption}[Off-principal localization]
\label{as:offprin}
Let $P$ be the Euclidean orthogonal projector onto the top-$m$ principal subspace of the reference activation covariance $C$ in Definition~\ref{def:CG}, and let $Q=I-P$. Here $m$ is a general retained rank; the monitoring construction in Section~\ref{sec:predicter} uses the particular rank $r=\lceil0.10d\rceil$.
Assume compatibility with the sensitivity metric, $P^\top GQ=0$ on the working subspace. Then
\begin{equation}
\E_x\Gnorm{\bDelta(x)}^2
=\E_x\Gnorm{P\bDelta(x)}^2+\E_x\Gnorm{Q\bDelta(x)}^2.
\label{eq:gsplit}
\end{equation}
Assume in addition the target-energy bound
\begin{equation*}
\E_x\Gnorm{P\bDelta(x)}^2
\le\varepsilon\,\E_x\Gnorm{Q\bDelta(x)}^2,
\qquad 0\le\varepsilon\ll1.
\end{equation*}
The operator condition gives a $G$-orthogonal split for every vector in $\mathcal E$; equality of averaged energies for one distribution is weaker. Neither covariance principal structure nor a low Euclidean PC fraction alone guarantees these sensitivity-weighted conditions. Remark~\ref{rem:approx} treats approximate compatibility.
\end{assumption}

\begin{assumption}[Shared direction in the complement]
\label{as:shared}
There is a fixed $\bw\in\mathcal E\cap\operatorname{range}(Q)$ with $\Gnorm{\bw}=1$ such that
\begin{equation}
Q\bDelta(x)=\underbrace{a(x)\bw}_{\text{shared complement component}}+\bxi(x),
\qquad a(x)\ge0,
\label{eq:as-shared}
\end{equation}
where $\bxi(x)\in\mathcal E\cap\operatorname{range}(Q)$ and
\begin{equation}
\E_x[\bxi(x)]=\mathbf0,
\qquad
\E_x\!\left[a(x)\langle\bw,\bxi(x)\rangle_G\right]=0.
\label{eq:GA1}
\end{equation}
Assume $0<\E_x[a(x)^2]<\infty$ and finite target energy, and define
\begin{equation*}
\kappa:=\frac{(\E_x[a(x)])^2}{\E_x[a(x)^2]}\in(0,1],
\qquad
\eta:=\frac{\E_x\Gnorm{\bxi(x)}^2}{\E_x[a(x)^2]}\ge0.
\end{equation*}
Then $\kappa=1$ exactly when $a(x)$ is constant almost surely, and
$\kappa=1/(1+\CV(a)^2)$ with $\CV(a)^2=\Var_x(a(x))/(\E_x[a(x)])^2$.
Nonnegativity fixes the shared amplitude's sign; $\eta$ is a residual-energy ratio, not a rank.
The two moment conditions in Equation~(\ref{eq:GA1}) have distinct roles: zero mean identifies the shared mean, while weighted orthogonality removes the mixed energy term. Pointwise orthogonality is sufficient for the latter but is not required.
The amplitude here belongs to the representation-space decomposition. It is identified with a fixed positive rescaling of the logit-space amplitude only under the compatible pullback conditions discussed in Remark~\ref{rem:pullbackcaveat}; otherwise the two decompositions use separately defined amplitudes.
\end{assumption}

\begin{remark}[Perturbations of the homogeneous model]
\label{rem:approx}
\emph{(i) Varying metrics.} Let
$\bar G_{\mathcal E}=T^\top\bar GT$ and $\mathbf b_{\mathcal E}=T^\top\E_x[G_x\bDelta(x)]$.
The quadratic optimum is $\bdelta^\star=T\bar G_{\mathcal E}^{-1}\mathbf b_{\mathcal E}$, with explained energy
$S=\mathbf b_{\mathcal E}^\top\bar G_{\mathcal E}^{-1}\mathbf b_{\mathcal E}$.
For $E_{\mathrm{var}}=\E_x\|\bDelta(x)\|_{G_x}^2>0$, its recovery is $\rho_{\mathrm{var}}=S/E_{\mathrm{var}}$.
For a fixed target family and the metric bounds in Assumption~\ref{as:homog}, comparison with constant-metric recovery $\rho_G$ yields
\begin{equation*}
\frac{1-\zeta}{1+\zeta}(1-\rho_G)
\le1-\rho_{\mathrm{var}}\le
\frac{1+\zeta}{1-\zeta}(1-\rho_G).
\end{equation*}
Indeed, the metric inequalities bound the residual objective for every admissible constant vector and therefore its infimum, as well as the total target energy. Comparing different target families requires an additional target-perturbation argument.

\emph{(ii) Approximately orthogonal splits.} Replace exact compatibility by
$|\langle u,v\rangle_G|\le\gamma\|u\|_G\|v\|_G$
for $u\in\mathcal E\cap\operatorname{range}(P)$ and $v\in\mathcal E\cap\operatorname{range}(Q)$, with $0\le\gamma<1$. Then
\begin{equation*}
(1-\gamma)(\|Pz\|_G^2+\|Qz\|_G^2)
\le\|z\|_G^2
\le(1+\gamma)(\|Pz\|_G^2+\|Qz\|_G^2).
\end{equation*}
This follows by expanding the cross term and using $2ab\le a^2+b^2$.
Under the remaining conditions of Theorem~\ref{thm:bound}, bounding both the mean energy and total target energy gives
$\rho\ge\frac{1-\gamma}{1+\gamma}\frac{\kappa}{(1+\eta)(1+\varepsilon)}$.

\emph{(iii) Distinct residual and metric conditions.} Corollary~\ref{cor:fisherperp} bounds the logit residual, and Lemma~\ref{lem:pullback} bounds a representation residual relative to a specified direction. Applying these estimates to $\eta$ additionally requires localization, zero residual mean, and weighted orthogonality as in Assumption~\ref{as:shared}, or explicit bounds on their deviations. Corollary~\ref{cor:psi} compares Euclidean and sensitivity-weighted intrusion only for the same collection of shifts.
\end{remark}

\subsection{Distillation reduces to weighted least squares}
\label{sec:reduction}
The categorical distribution has natural parameter $\bz\in\R^V$, log-partition function $A(\bz)=\log\sum_{v=1}^{V}e^{z_v}$, and mean map $\nabla A(\bz)=\softmax(\bz)$.
Writing $p=\softmax(\bz)$, its Hessian is
\begin{equation}
F(\bz)=\nabla^2A(\bz)=\diag(p)-pp^\top\succeq0.
\label{eq:fisher}
\end{equation}
Throughout, $F_x=F(\bz_T(x))$ is evaluated at teacher logits. It satisfies $F_x\mathbf{1}_V=\mathbf0$.

\begin{lemma}[KL as a Bregman divergence and its local quadratic form]
\label{lem:bregman}
For $\bz,\bz'\in\R^V$,
$\KL(\softmax(\bz')\,\|\,\softmax(\bz))=D_A(\bz\,\|\,\bz')$,
where $D_A$ is the Bregman divergence generated by $A$.
Under Assumption~\ref{as:local},
\begin{equation}
\KL\!\big(p_T(\cdot\mid x)\,\big\|\,p_{\bdelta}(\cdot\mid x)\big)
=\tfrac12\big(J_x\bdelta-\bt(x)\big)^\top F_x\big(J_x\bdelta-\bt(x)\big)
+o\big(\|J_x\bdelta-\bt(x)\|_2^2\big).
\label{eq:kl-quad}
\end{equation}
\end{lemma}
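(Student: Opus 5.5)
The plan is to first establish the exact Bregman identity by direct computation, and then obtain the local quadratic form by Taylor-expanding the log-partition function $A$ around the teacher logits. For the identity, write $p'=\softmax(\bz')$. Then
\[
\KL(p'\,\|\,\softmax(\bz))=\sum_v p'_v\big(\log p'_v-\log\softmax(\bz)_v\big).
\]
Using $\log\softmax(\bz)_v=z_v-A(\bz)$, this becomes
\[
A(\bz)-A(\bz')-\langle\nabla A(\bz'),\bz-\bz'\rangle ,
\]
because $\sum_v p'_v=1$ and $\nabla A(\bz')=p'$. The last expression is exactly $D_A(\bz\,\|\,\bz')$. The computation is routine, and I will also note that it is invariant under adding multiples of $\mathbf 1_V$ to either argument, consistent with the gauge remark.

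For the quadratic form, I will apply the identity with $\bz'=\bz_T(x)$ and $\bz=\bz_{\bdelta}(x)$. By Assumption~\ref{as:local}, we can write
\[
\bz_{\bdelta}(x)-\bz_T(x)=J_x\bdelta+\mathbf e_z(x,\bdelta)-\bt(x)=\mathbf d_\delta(x)+\mathbf e_z(x,\bdelta).
\]
Taylor's theorem with integral (or Lagrange) remainder along the segment from $\bz_T(x)$ to $\bz_{\bdelta}(x)$ gives
\[
D_A(\bz_{\bdelta}\,\|\,\bz_T)=\tfrac12\,\mathbf u^\top F_x\mathbf u+R_3,\qquad \mathbf u=\mathbf d_\delta+\mathbf e_z .
\]
The bounded third derivative of $A$ on that segment gives $|R_3|\le c\|\mathbf u\|_2^3$. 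In fact the softmax third derivative is globally bounded, so this hypothesis is automatically satisfied.

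Next, I will expand the quadratic term around $\mathbf d_\delta$:
\[
\tfrac12\mathbf u^\top F_x\mathbf u=\tfrac12\mathbf d_\delta^\top F_x\mathbf d_\delta+\mathbf d_\delta^\top F_x\mathbf e_z+\tfrac12\mathbf e_z^\top F_x\mathbf e_z .
\]
Since $\|F_x\|_2\le 1$, both cross terms are bounded by a constant times $\|\mathbf d_\delta\|_2\|\mathbf e_z\|_2+\|\mathbf e_z\|_2^2$.

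The main obstacle is controlling the error relative to $\|\mathbf d_\delta\|_2^2$ rather than $\|\bdelta\|_2^2$. The assumption explicitly warns that $o(\|\bdelta\|)$ control of $\mathbf e_z$ does not suffice when $J_x\bdelta$ nearly cancels $\bt(x)$. I will handle this by invoking the joint-regime hypothesis of Assumption~\ref{as:local}: the combined network-linearization and KL-expansion error is $o(\|\mathbf d_\delta\|_2^2)$. Alternatively, I will use its stated sufficient condition, $\mathbf d_\delta\to0$ with $\|\mathbf e_z\|_2=o(\|\mathbf d_\delta\|_2)$. Under that condition, $\|\mathbf u\|_2=(1+o(1))\|\mathbf d_\delta\|_2$, so $R_3=O(\|\mathbf d_\delta\|_2^3)=o(\|\mathbf d_\delta\|_2^2)$, and the cross terms are $o(\|\mathbf d_\delta\|_2^2)$ as well.

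Finally, I will observe that $J_x\bdelta-\bt(x)=\mathbf d_\delta(x)$, which gives Equation~(\ref{eq:kl-quad}). If the result is to be used inside the expectation in~(\ref{eq:obj}), the uniform or integrable bounds from the assumption allow the remainder to pass under $\E_x$.
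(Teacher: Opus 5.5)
Your proposal is correct and follows essentially the same route as the paper: the Bregman identity by substituting $A$ and $\nabla A=\softmax$, then a Taylor expansion of $A$ at the teacher logits with the actual logit difference split as $\mathbf d_\delta+\mathbf e_z$, cross terms bounded via $\|F_x\|_2$, a cubic remainder from the third-derivative bound, and the joint-regime condition of Assumption~\ref{as:local} to make all error terms $o(\|\mathbf d_\delta\|_2^2)$. Your added remark is also correct: the third derivative of $A$ is the third central moment tensor of a one-hot variable and is therefore globally bounded, so that part of the assumption holds automatically.
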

\begin{proof}
The identity follows by substituting $A$ and $\nabla A$ into $D_A$.
Set $d=J_x\bdelta-\bt(x)$ and $e=\mathbf e_z(x,\bdelta)$, so the actual logit difference from the teacher is $d+e$.
Taylor expansion at $\bz_T(x)$ gives
\begin{equation*}
\KL(p_T\|p_{\bdelta})
=\tfrac12 d^\top F_xd+d^\top F_xe+\tfrac12e^\top F_xe+R_A(d+e).
\end{equation*}
If $L_A$ bounds the third derivative of $A$ in trilinear operator norm along the logit segment, then
\begin{equation*}
\left|\KL(p_T\|p_{\bdelta})-\tfrac12d^\top F_xd\right|
\le\|F_x\|_2\|d\|_2\|e\|_2
+\tfrac12\|F_x\|_2\|e\|_2^2
+\tfrac{L_A}{6}\|d+e\|_2^3.
\end{equation*}
The combined remainder condition yields Equation~(\ref{eq:kl-quad}); the integrability condition permits the corresponding expected expansion.
\end{proof}

\begin{definition}[Representational metric and target shift]
\label{def:metric}
For each context, define
\begin{equation*}
\begin{gathered}
G_x:=J_x^\top F_xJ_x\in\R^{d\times d},\qquad
B_x:=T(T^\top G_xT)^{-1}T^\top J_x^\top F_x\in\R^{d\times V},\\[2pt]
\bDelta(x):=\argmin_{\bu\in\mathcal E}\|J_x\bu-\bt(x)\|_{F_x}^2=B_x\bt(x),
\end{gathered}
\end{equation*}
where $T^\top G_xT$ is positive definite. When $T=I_d$, $B_x=G_x^{-1}J_x^\top F_x$.
The metric measures local output sensitivity; $\bDelta(x)$ is the best linearized hidden-state correction for reproducing the teacher shift. This target is context-dependent even though the learned intervention is constant. It is a fitting target, not an observed teacher-minus-base hidden-state difference.
\end{definition}

\begin{proposition}[Distillation as local weighted least squares]
\label{prop:wls}
Under Assumption~\ref{as:local}, the local quadratic surrogate is
\begin{equation}
\mathcal L_{\mathrm{quad}}(\bdelta)
=\tfrac12\E_{x\sim\mathcal D}(\bdelta-\bDelta(x))^\top G_x(\bdelta-\bDelta(x))
+C_{\mathrm{irr}},
\label{eq:wls}
\end{equation}
where $C_{\mathrm{irr}}\ge0$ is independent of $\bdelta$ and is the residual teacher-logit energy not fitted by the local Jacobian on $\mathcal E$, measured in the Fisher seminorm. The KL objective differs by the expected remainder in Lemma~\ref{lem:bregman}.
\end{proposition}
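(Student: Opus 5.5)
The plan is to start from Lemma~\ref{lem:bregman}, which gives the pointwise expansion $\KL(p_T\|p_{\bdelta})=\tfrac12\|J_x\bdelta-\bt(x)\|_{F_x}^2+o(\|J_x\bdelta-\bt(x)\|_2^2)$. I will define $\mathcal L_{\mathrm{quad}}(\bdelta):=\tfrac12\E_x\|J_x\bdelta-\bt(x)\|_{F_x}^2$. The integrability clause of Assumption~\ref{as:local} then makes $\mathcal L(\bdelta)-\mathcal L_{\mathrm{quad}}(\bdelta)$ equal to the expected remainder, which settles the last sentence of the proposition. What remains is a purely algebraic identity for each context $x$: a Pythagorean decomposition of $\|J_x\bdelta-\bt(x)\|_{F_x}^2$ in the Fisher seminorm, with $\bdelta\in\mathcal E$ as stipulated in the setup.

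Fix $x$ and write $\bt(x)=J_x\bDelta(x)+\br(x)$, where $\bDelta(x)=B_x\bt(x)$ comes from Definition~\ref{def:metric}. The key step is the normal equation. Because $\bDelta(x)$ minimizes $\|J_x\bu-\bt(x)\|_{F_x}^2$ over $\bu\in\mathcal E$, setting $\bu=T\bc$ and differentiating in $\bc$ gives $T^\top J_x^\top F_x(J_x\bDelta(x)-\bt(x))=\mathbf0$. I will also verify this directly from the formula for $B_x$: $T^\top J_x^\top F_xJ_xT(T^\top G_xT)^{-1}T^\top J_x^\top F_x\bt=T^\top J_x^\top F_x\bt$. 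The normal equation says $\langle J_x\bu,\br(x)\rangle_{F_x}=0$ for every $\bu\in\mathcal E$. Since $\bdelta-\bDelta(x)\in\mathcal E$, I get
\[
\|J_x\bdelta-\bt(x)\|_{F_x}^2=\|J_x(\bdelta-\bDelta(x))\|_{F_x}^2+\|\br(x)\|_{F_x}^2 .
\]
The first term equals $(\bdelta-\bDelta(x))^\top G_x(\bdelta-\bDelta(x))$, because $G_x=J_x^\top F_xJ_x$.

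Taking expectations and halving produces Equation~(\ref{eq:wls}) with $C_{\mathrm{irr}}=\tfrac12\E_x\|\br(x)\|_{F_x}^2$. This constant is nonnegative because $F_x\succeq0$. It does not depend on $\bdelta$. It is exactly the Fisher-seminorm energy of the teacher shift that is not representable through $J_x$ on $\mathcal E$. Finiteness of all expected quadratic energies, which is a standing assumption, justifies splitting the expectation. I will also note that the identity holds under gauge shifts $\bt\mapsto\bt+c\mathbf 1_V$, since $F_x\mathbf1_V=\mathbf0$.

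The main obstacle is the remainder bookkeeping rather than the algebra. The expansion is stated pointwise in little-$o$ of $\|J_x\bdelta-\bt(x)\|_2^2$, and this residual need not vanish when $\br(x)\neq0$. I will therefore state explicitly that the surrogate is exact only up to the expected remainder term of Lemma~\ref{lem:bregman}, and that this term is controlled by the joint local-regime and integrability conditions of Assumption~\ref{as:local}. I will not claim it is small uniformly in $\bdelta$.
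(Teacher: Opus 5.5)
Your proposal is correct and follows the paper's proof: both write $\bt(x)=J_x\bDelta(x)+\br_x$ and use the normal equation $T^\top J_x^\top F_x\br_x=\mathbf0$ together with $\bdelta-\bDelta(x)\in\mathcal E$ to get the Fisher-seminorm Pythagorean split, then identify the first term as $\|\bdelta-\bDelta(x)\|_{G_x}^2$ and take expectations to obtain $C_{\mathrm{irr}}=\tfrac12\E_x\|\br_x\|_{F_x}^2$. Your direct check of the normal equation from the formula for $B_x$, and your remark that the remainder is measured relative to $\|J_x\bdelta-\bt(x)\|_2$ (which need not vanish when $\br_x\neq\mathbf0$), are sound additions rather than a different route.
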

\begin{proof}
Write $\bt(x)=J_x\bDelta(x)+\br_x$. The least-squares normal equations give $T^\top J_x^\top F_x\br_x=0$. Since $\bdelta-\bDelta(x)\in\mathcal E$,
\begin{equation*}
\|J_x\bdelta-\bt(x)\|_{F_x}^2
=\|J_x(\bdelta-\bDelta(x))\|_{F_x}^2+\|\br_x\|_{F_x}^2.
\end{equation*}
The first term is $\|\bdelta-\bDelta(x)\|_{G_x}^2$. Taking expectations proves Equation~(\ref{eq:wls}), with $C_{\mathrm{irr}}=\tfrac12\E_x\|\br_x\|_{F_x}^2$.
\end{proof}

\begin{remark}[Multiple injection sites]
\label{rem:sites}
Stack the steering vectors and use the joint Jacobian from all injection sites to the logits. Equation~(\ref{eq:wls}) holds with the full $G_x=J_x^\top F_xJ_x$, including cross-layer blocks; it is not in general a sum of independent layer-wise fitting problems. Injection at the unembedding input gives $J_x=U$, whereas earlier injection includes normalization and all other downstream operations.
\end{remark}

\paragraph{Local surrogate versus training dynamics.}
The reduction describes a fixed-distribution local objective. At a fixed context, forward and reverse KL have the same second-order Fisher term around the teacher, but generally differ beyond this local expansion. Changing on-policy prefix distributions or using an approximate token-level training gradient changes the optimization problem. Consequently, Equation~(\ref{eq:wls}) is not an identity for the complete OPD training trajectory or for every off-policy surrogate.
Moreover, the unconstrained optimum of the quadratic model need not remain in the neighborhood where that model is accurate. A small teacher--student output gap alone does not ensure this: the Jacobian, its conditioning, and the intervention size also matter. Section~\ref{sec:recovery} makes the distinction between unconstrained surrogate recovery and locally admissible recovery explicit.

\subsection{The RLVR teacher shift: exponential tilting and shared structure}
\label{sec:tilt}
We derive the complete-response optimum of an idealized KL-regularized RL objective and then its next-token shifts. The analysis separates sequence rewards, complete-response probability ratios, and token-level corrections. Shared structure requires conditions on continuation-success profiles and on the student's pullback.

\paragraph{Sequence notation and optimization setting.}
Let $q$ be a prompt, $Y$ a complete response including termination, and $x=(q,y_{<t})$ a next-token context. Responses terminate almost surely under the base policy. Rewards are bounded, $\beta>0$, and policies are supported on the base policy's support. The symbol $\pi^\star$ denotes the optimizer of this stated objective. Empirical teacher checkpoints instead enter the local fitting analysis through their actual logit shifts $\bt(x)$.

\begin{lemma}[KL-regularized RL optimum as an exponential tilt]
\label{lem:tilt}
For a fixed prompt $q$, the objective
\begin{equation*}
\max_\pi\ \E_{Y\sim\pi(\cdot\mid q)}[r(q,Y)]
-\beta\,\KL\big(\pi(\cdot\mid q)\,\big\|\,p_0(\cdot\mid q)\big),
\qquad\beta>0,
\end{equation*}
is strictly concave on its finite-KL domain and has the unique maximizer
\begin{equation}
\pi^\star(Y\mid q)=\frac{p_0(Y\mid q)e^{r(q,Y)/\beta}}{Z_\beta(q)},
\qquad
Z_\beta(q)=\E_{Y\sim p_0(\cdot\mid q)}[e^{r(q,Y)/\beta}].
\label{eq:pistar}
\end{equation}
Equivalently, $\log(\pi^\star(Y\mid q)/p_0(Y\mid q))=r(q,Y)/\beta-\log Z_\beta(q)$ on the support.
\end{lemma}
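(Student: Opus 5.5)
The plan is to prove Lemma~\ref{lem:tilt} with the Gibbs variational (Donsker--Varadhan) argument, rewriting the objective as a single KL divergence to the candidate $\pi^\star$.

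\emph{Well-definedness.} Since rewards are bounded, $e^{r/\beta}$ is bounded above and below by positive constants. Hence $Z_\beta(q)\in(0,\infty)$, and $\pi^\star$ in Equation~(\ref{eq:pistar}) is a probability distribution on complete responses. Its density ratio $e^{r/\beta}/Z_\beta$ with respect to $p_0$ is bounded, so $\pi^\star$ has the same support as $p_0(\cdot\mid q)$ and finite KL to it.

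\emph{Key identity.} Take any $\pi$ with $\KL(\pi\|p_0)<\infty$, so $\pi\ll p_0$ and therefore $\pi\ll\pi^\star$. Substituting
\[
\log\frac{\pi}{p_0}=\log\frac{\pi}{\pi^\star}+\frac{r}{\beta}-\log Z_\beta
\]
into the objective, with all expectations finite by boundedness of $r$, gives
\[
\E_\pi[r]-\beta\,\KL(\pi\|p_0)=\beta\log Z_\beta(q)-\beta\,\KL(\pi\|\pi^\star).
\]
The $\E_\pi[r]$ terms cancel, and the remaining term is justified by splitting the logarithm on the support of $\pi$.

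\emph{Maximizer and uniqueness.} Gibbs' inequality gives $\KL(\pi\|\pi^\star)\ge0$, with equality iff $\pi=\pi^\star$ almost surely. So $\pi^\star$ attains the value $\beta\log Z_\beta(q)$ and is the unique maximizer. The log-ratio statement then follows by taking logarithms of Equation~(\ref{eq:pistar}) on the support.

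\emph{Strict concavity.} The map $\pi\mapsto\E_\pi[r]$ is linear, and $\pi\mapsto\KL(\pi\|p_0)$ is strictly convex on its finite-KL domain, because $u\mapsto u\log u$ is strictly convex applied to the density ratio. Since $\beta>0$, the objective is strictly concave. This property is not actually needed for uniqueness, since the identity above already gives it.

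\emph{Main obstacle.} The only delicate point is measure-theoretic: the response space may be countably infinite, since termination is only almost sure. I would handle this by working with densities with respect to $p_0$ and using boundedness of $r$ to guarantee integrability of every term in the identity. No compactness argument is needed, because the identity yields the optimum directly.
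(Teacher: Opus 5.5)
Your proposal is correct and follows the paper's proof essentially step for step. Bounded rewards make $Z_\beta(q)$ finite and positive, substitution rewrites the objective as $\beta\log Z_\beta(q)-\beta\,\KL(\pi\|\pi^\star)$, and Gibbs' inequality gives the unique maximizer, with strict concavity coming from linearity of the reward term and strict convexity of KL. Your extra bookkeeping (finite KL implies $\pi\ll p_0$ and hence $\pi\ll\pi^\star$, plus integrability of each term) makes explicit what the paper compresses into a single substitution step.
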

\begin{proof}
Bounded rewards make $Z_\beta(q)$ positive and finite, and the proposed optimizer has finite KL. Substitution rewrites the objective as
$\beta\log Z_\beta(q)-\beta\KL(\pi(\cdot\mid q)\|\pi^\star(\cdot\mid q))$.
Nonnegativity of KL gives the unique maximum. Strict concavity follows from strict convexity of KL and linearity of the reward term.
\end{proof}

\begin{lemma}[From complete responses to next-token probabilities]
\label{lem:next}
For $x=(q,y_{<t})$ with positive base prefix probability, define
\begin{equation}
Z_\beta(x):=\E_{Y\sim p_0(\cdot\mid x)}[e^{r(q,Y)/\beta}],
\qquad
\nu_0(x):=\Pr_{Y\sim p_0(\cdot\mid x)}(r(q,Y)=1),
\label{eq:ZS}
\end{equation}
where complete responses extend the prefix. For a supported next token $v$, write $xv$ for the extended context. Then
\begin{equation}
\pi^\star(v\mid x)=p_0(v\mid x)\frac{Z_\beta(xv)}{Z_\beta(x)}.
\label{eq:tiltnext}
\end{equation}
For binary rewards $r\in\{0,1\}$ and $c_\beta=e^{1/\beta}-1>0$,
\begin{equation}
\log\frac{\pi^\star(v\mid x)}{p_0(v\mid x)}
=\log[1+c_\beta\nu_0(xv)]-\log[1+c_\beta\nu_0(x)].
\label{eq:logratio}
\end{equation}
At termination the conditional quantities are evaluated on the resulting complete response.
\end{lemma}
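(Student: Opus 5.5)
The plan is to obtain the next-token probabilities of $\pi^\star$ from Lemma~\ref{lem:tilt} by marginalization. For a context $x=(q,y_{<t})$ with $p_0(y_{<t}\mid q)>0$, the probability that $\pi^\star$ generates this prefix is the sum of $\pi^\star(Y\mid q)$ over all complete responses $Y$ that extend it. Substituting Equation~(\ref{eq:pistar}) gives
\begin{equation*}
\pi^\star(y_{<t}\mid q)=\frac{1}{Z_\beta(q)}\sum_{Y\succeq y_{<t}}p_0(Y\mid q)e^{r(q,Y)/\beta}.
\end{equation*}
Next I factor $p_0(Y\mid q)=p_0(y_{<t}\mid q)\,p_0(Y\mid x)$, where $p_0(\cdot\mid x)$ is the conditional law of the completion given the prefix. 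This yields $\pi^\star(y_{<t}\mid q)=p_0(y_{<t}\mid q)Z_\beta(x)/Z_\beta(q)$.

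Almost-sure termination ensures that $p_0(\cdot\mid x)$ is a probability distribution over complete responses. Bounded rewards then make every sum finite, so exchanging sum and expectation is legitimate. Applying the same identity to the extended context $xv$ and taking the ratio $\pi^\star(v\mid x)=\pi^\star(y_{<t}v\mid q)/\pi^\star(y_{<t}\mid q)$ cancels $Z_\beta(q)$ and $p_0(y_{<t}\mid q)$. What remains is $p_0(v\mid x)Z_\beta(xv)/Z_\beta(x)$, which is Equation~(\ref{eq:tiltnext}). The ratio is well-defined because $Z_\beta(x)\ge e^{\min r/\beta}>0$ and $p_0(v\mid x)>0$ for a supported token.

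For binary rewards, $e^{r/\beta}=1+c_\beta\mathbf 1\{r=1\}$. Taking the expectation under $p_0(\cdot\mid x)$ gives $Z_\beta(x)=1+c_\beta\nu_0(x)$, and likewise $Z_\beta(xv)=1+c_\beta\nu_0(xv)$. Taking logarithms of Equation~(\ref{eq:tiltnext}) then produces Equation~(\ref{eq:logratio}). Both arguments of the logarithms are at least $1$, so every term is finite.

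The termination convention needs separate handling. If $v$ is the end-of-sequence token, then $xv$ is itself a complete response $Y$. In that case $p_0(\cdot\mid xv)$ is a point mass, so $Z_\beta(xv)=e^{r(q,Y)/\beta}$ and $\nu_0(xv)=\mathbf 1\{r(q,Y)=1\}$. With these values the same factorization holds verbatim.

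I expect the main obstacle to be making the marginalization rigorous rather than the algebra. The space of complete responses is countably infinite, and prefix events must be treated as unions of disjoint cylinder sets. I would handle this with Tonelli's theorem for nonnegative summands, together with almost-sure termination under $p_0$. Since $\pi^\star\ll p_0$, the same termination and support properties carry over to $\pi^\star$.
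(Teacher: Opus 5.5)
Your proposal is correct and follows the paper's proof. You sum the tilted response law over completions of the prefix to get $\pi^\star(y_{<t}\mid q)=p_0(y_{<t}\mid q)Z_\beta(x)/Z_\beta(q)$, take the ratio for $y_{<t}v$ and $y_{<t}$, and use $Z_\beta(x)=1+c_\beta\nu_0(x)$ for binary rewards; your added care with Tonelli, positivity of the denominators, and the end-of-sequence case makes explicit details the paper leaves implicit.
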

\begin{proof}
Summing Equation~(\ref{eq:pistar}) over responses extending a prefix gives
\begin{equation*}
\pi^\star(y_{<t}\mid q)
=p_0(y_{<t}\mid q)\frac{Z_\beta(x)}{Z_\beta(q)}.
\end{equation*}
Taking the ratio for $y_{<t}v$ and $y_{<t}$ proves Equation~(\ref{eq:tiltnext}). For binary rewards,
$Z_\beta(x)=1-\nu_0(x)+e^{1/\beta}\nu_0(x)=1+c_\beta\nu_0(x)$,
which gives Equation~(\ref{eq:logratio}).
\end{proof}

Thus, relative next-token changes depend on candidate-token continuation-success probabilities, not just the terminal reward of an already observed response. The tower property gives
$\nu_0(x)=\sum_vp_0(v\mid x)\nu_0(xv)$.
Neither this identity nor binary rewards imposes a common direction across different contexts.

\begin{remark}[Softmax gauge and centered logits]
\label{rem:gauge}
Since $\softmax(\bz+c\mathbf{1}_V)=\softmax(\bz)$, logits are defined modulo the all-ones direction. Choose the centered gauge $\Pi_{\mathrm{vocab}}=I-\mathbf{1}_V\mathbf{1}_V^\top/V$. Because $F_x\mathbf{1}_V=\mathbf0$,
\begin{equation}
F_x\Pi_{\mathrm{vocab}}=F_x,
\qquad
\|\Pi_{\mathrm{vocab}}\bu\|_{F_x}=\|\bu\|_{F_x},
\qquad
B_x\Pi_{\mathrm{vocab}}\bu=B_x\bu
\quad\text{for all }\bu\in\R^V.
\label{eq:gauge}
\end{equation}
Centering preserves Fisher seminorms, Fisher inner products, and $\bDelta(x)$. For the idealized teacher with finite logits, we may use
$\bt(x)=\Pi_{\mathrm{vocab}}\log(\pi^\star(\cdot\mid x)/p_0(\cdot\mid x))$
and choose $\mathbf{1}_V^\top\bs=0$. The vocabulary-centering projector $\Pi_{\mathrm{vocab}}$ is distinct from the activation-principal projector $P$.
\end{remark}

\begin{lemma}[A sufficient condition for a shared logit component]
\label{lem:profile}
Let $\bs\in\R^V$ be fixed, $\|\bs\|_2=1$, and $\mathbf{1}_V^\top\bs=0$. Suppose the continuation-success profiles obey
\begin{equation}
\nu_0(xv)=b(x)+a_0(x)s_v+e_v(x),
\qquad b(x)\in[0,1],\quad a_0(x)\ge0,\quad\nu_0(xv)\in[0,1],
\label{eq:profile}
\end{equation}
where $b(x)$ is token-independent, $s_v$ is the $v$-th scalar coordinate of $\bs$, and $\mathbf e(x)$ is a residual profile with coordinates $e_v(x)$. Define $\mathbf d(x)=a_0(x)\bs+\mathbf e(x)$. Then
\begin{equation}
\Pi_{\mathrm{vocab}}\log\frac{\pi^\star(\cdot\mid x)}{p_0(\cdot\mid x)}
=\widehat a(x)\bs+\widehat{\br}(x),
\qquad
\widehat a(x)=\frac{c_\beta a_0(x)}{1+c_\beta b(x)}\ge0,
\label{eq:sharedlogit}
\end{equation}
with coordinatewise logarithms and ratios, and
\begin{equation}
\|\widehat{\br}(x)\|_2
\le c_\beta\|\mathbf e(x)\|_2
+\frac{c_\beta^2}{2}\|\mathbf d(x)\|_4^2,
\qquad
\|\mathbf d(x)\|_4^2=\Big(\textstyle\sum_v|d_v(x)|^4\Big)^{1/2}.
\label{eq:rembound}
\end{equation}
\end{lemma}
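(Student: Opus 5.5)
The plan is to start from the exact next-token log-ratio in Lemma~\ref{lem:next}, factor out the token-independent baseline, and then linearize the logarithm. By Equation~(\ref{eq:logratio}),
\[
\log\frac{\pi^\star(v\mid x)}{p_0(v\mid x)}=\log[1+c_\beta\nu_0(xv)]-\log[1+c_\beta\nu_0(x)].
\]
The second term does not depend on $v$, so $\Pi_{\mathrm{vocab}}$ annihilates it (Remark~\ref{rem:gauge}). Substituting the profile~(\ref{eq:profile}), i.e.\ $\nu_0(xv)=b(x)+d_v(x)$, I will write
\[
1+c_\beta\nu_0(xv)=(1+c_\beta b(x))(1+u_v),
\qquad u_v:=\frac{c_\beta d_v(x)}{1+c_\beta b(x)}.
\]
The factor $\log(1+c_\beta b(x))$ is again token-independent and is removed by centering. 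This leaves
\[
\Pi_{\mathrm{vocab}}\log(\pi^\star/p_0)=\Pi_{\mathrm{vocab}}\big(\log(1+u_v)\big)_v .
\]

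Next I split $\log(1+u_v)=u_v+R_v$. For the linear part, $(u_v)_v=\frac{c_\beta}{1+c_\beta b}(a_0\bs+\mathbf e)$. Since $\mathbf{1}_V^\top\bs=0$, we have $\Pi_{\mathrm{vocab}}\bs=\bs$, which produces exactly $\widehat a(x)\bs$ with $\widehat a=c_\beta a_0/(1+c_\beta b)\ge0$. I will then set
\[
\widehat{\br}(x):=\Pi_{\mathrm{vocab}}\Big(\tfrac{c_\beta}{1+c_\beta b}\mathbf e+\mathbf R\Big).
\]
Because $\Pi_{\mathrm{vocab}}$ is an orthogonal projector, it is a contraction. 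Together with $1+c_\beta b\ge1$ (as $b\ge0$), this gives the first term $c_\beta\|\mathbf e(x)\|_2$ of Equation~(\ref{eq:rembound}).

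The main obstacle is the pointwise remainder bound $|R_v|\le \tfrac{c_\beta^2}{2}d_v^2$. The naive estimate $|\log(1+u)-u|\le u^2/2$ holds only for $u\ge0$, and $u_v$ may be negative. For $u\in(-1,0)$ I will instead use $|\log(1+u)-u|\le u^2/(2(1+u))$, which follows from Taylor's theorem with Lagrange remainder. Here $1+u_v=(1+c_\beta\nu_0(xv))/(1+c_\beta b)>0$, and since $\nu_0(xv)\ge0$,
\[
1+u_v\ge\frac{1}{1+c_\beta b}.
\]
Hence in both signs
\[
|R_v|\le\frac{u_v^2(1+c_\beta b)}{2}=\frac{c_\beta^2d_v^2}{2(1+c_\beta b)}\le\frac{c_\beta^2}{2}d_v^2 .
\]
Taking the Euclidean norm, $\|\mathbf R\|_2\le\frac{c_\beta^2}{2}\big(\sum_v d_v^4\big)^{1/2}=\frac{c_\beta^2}{2}\|\mathbf d(x)\|_4^2$. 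Combining this with the contraction property of $\Pi_{\mathrm{vocab}}$ and the triangle inequality yields Equation~(\ref{eq:rembound}). For tokens with $p_0(v\mid x)=0$, the ratio is not defined, so I restrict all statements to the supported coordinates.
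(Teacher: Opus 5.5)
Your proof is correct and is essentially the paper's argument. Your $\log(1+u_v)$ with $u_v=c_\beta d_v/(1+c_\beta b)$ is exactly $f(b+d_v)-f(b)$ for $f(z)=\log(1+c_\beta z)$, and the paper bounds this same remainder in one step: $|f''(z)|=c_\beta^2/(1+c_\beta z)^2\le c_\beta^2$ on $[0,1]$, and both endpoints lie in $[0,1]$, so no sign split is needed.

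One small slip: for $u\in(-1,0)$ the Lagrange remainder gives $u^2/(2(1+u)^2)$, not $u^2/(2(1+u))$; the sharper form needs the integral remainder. This does not affect your conclusion, because the weaker bound together with $1+u_v\ge 1/(1+c_\beta b)$ still gives exactly $|R_v|\le c_\beta^2 d_v^2/2$.
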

\begin{proof}
For $f(z)=\log(1+c_\beta z)$ on $[0,1]$, $|f'(z)|\le c_\beta$ and $|f''(z)|\le c_\beta^2$.
Taylor expansion between $b(x)$ and $b(x)+d_v(x)$, both in that interval, yields
\begin{equation*}
f(b(x)+d_v(x))=f(b(x))+f'(b(x))d_v(x)+R_v(x),
\qquad |R_v(x)|\le\tfrac12c_\beta^2|d_v(x)|^2.
\end{equation*}
Centering Equation~(\ref{eq:logratio}) removes the token-independent terms. Since $\Pi_{\mathrm{vocab}}\bs=\bs$,
$\widehat{\br}(x)=\Pi_{\mathrm{vocab}}(f'(b(x))\mathbf e(x)+R(x))$.
The projector is a Euclidean contraction, and
$\|R(x)\|_2\le\tfrac12c_\beta^2\|\mathbf d(x)\|_4^2$.
These inequalities prove Equation~(\ref{eq:rembound}).
\end{proof}

The bound is absolute. Obtaining high relative alignment additionally requires a residual small compared with $\widehat a(x)\|\bs\|_{F_x}$, as specified next; a small absolute residual alone is insufficient when the shared amplitude is small.

\begin{corollary}[Fisher-orthogonal shared-component decomposition]
\label{cor:fisherperp}
Assume Lemma~\ref{lem:profile}, $\|\bs\|_{F_x}>0$, and $0\le\theta<1$ such that
\begin{equation}
\|\widehat{\br}(x)\|_{F_x}
\le\theta\widehat a(x)\|\bs\|_{F_x}
\quad\text{at every considered context}.
\label{eq:theta}
\end{equation}
For $\bt(x)=\widehat a(x)\bs+\widehat{\br}(x)$, define
\begin{equation}
a(x):=\frac{\langle\bs,\bt(x)\rangle_{F_x}}{\|\bs\|_{F_x}^2},
\qquad
\br_\perp(x):=\bt(x)-a(x)\bs.
\label{eq:aperp}
\end{equation}
Then $a(x)\ge(1-\theta)\widehat a(x)\ge0$, $\langle\bs,\br_\perp(x)\rangle_{F_x}=0$, and
\begin{equation}
\E_x\|\br_\perp(x)\|_{F_x}^2
\le\frac{\theta^2}{(1-\theta)^2}\,\E_x\|a(x)\bs\|_{F_x}^2.
\label{eq:relenergy}
\end{equation}
This meets Assumption~\ref{as:reward}'s small-residual requirement when $\theta$ is sufficiently small.
\end{corollary}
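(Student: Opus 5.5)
The argument is a Fisher-orthogonal projection of $\bt(x)$ onto the fixed direction $\bs$, followed by a context-wise comparison of the projected amplitude $a(x)$ with $\widehat a(x)$. It works one context at a time and only takes the expectation at the end.

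\textbf{Orthogonality and the lower bound on $a(x)$.} By the definition in Equation~(\ref{eq:aperp}), $\langle\bs,\br_\perp(x)\rangle_{F_x}=\langle\bs,\bt(x)\rangle_{F_x}-a(x)\|\bs\|_{F_x}^2=0$, so orthogonality holds by construction. Substituting $\bt(x)=\widehat a(x)\bs+\widehat{\br}(x)$ gives
\[
a(x)=\widehat a(x)+\frac{\langle\bs,\widehat{\br}(x)\rangle_{F_x}}{\|\bs\|_{F_x}^2}.
\]
Cauchy--Schwarz in the Fisher seminorm bounds the correction term in absolute value by $\|\widehat{\br}(x)\|_{F_x}/\|\bs\|_{F_x}$. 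Condition~(\ref{eq:theta}) bounds this by $\theta\widehat a(x)$, so $a(x)\ge(1-\theta)\widehat a(x)$. Since $\widehat a(x)\ge0$ by Lemma~\ref{lem:profile}, we also get $a(x)\ge0$.

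\textbf{Residual energy.} Note that $\br_\perp(x)=\widehat{\br}(x)-(a(x)-\widehat a(x))\bs$ is exactly the $F_x$-orthogonal projection of $\widehat{\br}(x)$ onto the complement of $\bs$. The Pythagorean identity in the seminorm then gives $\|\br_\perp(x)\|_{F_x}\le\|\widehat{\br}(x)\|_{F_x}\le\theta\widehat a(x)\|\bs\|_{F_x}$. Using the previous step, $\widehat a(x)\le a(x)/(1-\theta)$, hence
\[
\|\br_\perp(x)\|_{F_x}^2\le\frac{\theta^2}{(1-\theta)^2}\,a(x)^2\|\bs\|_{F_x}^2=\frac{\theta^2}{(1-\theta)^2}\,\|a(x)\bs\|_{F_x}^2 .
\]
This holds at every considered context, so taking $\E_x$ (both sides have finite expectation by the standing finiteness assumptions) yields Equation~(\ref{eq:relenergy}).

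\textbf{Remaining checks.} Matching Assumption~\ref{as:reward} with $\eta_{\mathrm{logit}}=\theta^2/(1-\theta)^2$ also requires $\mathbf{1}_V^\top\bs=0$ and $\|\bs\|_2=1$, both inherited from Lemma~\ref{lem:profile}. One subtlety needs attention: a seminorm Pythagorean step is still valid because $F_x\succeq0$ induces a genuine positive-semidefinite inner product, and $\|\bs\|_{F_x}>0$ makes the projection well defined. I expect no real obstacle beyond this. The only care needed is to compare with $a(x)$ rather than $\widehat a(x)$, which is exactly what produces the $(1-\theta)^{-2}$ factor.
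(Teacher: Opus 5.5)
Your proposal is correct and follows the paper's proof essentially step for step: orthogonality by construction, the Cauchy--Schwarz bound on $a(x)-\widehat a(x)$, and the chain $\|\br_\perp(x)\|_{F_x}\le\|\widehat{\br}(x)\|_{F_x}\le\theta\widehat a(x)\|\bs\|_{F_x}\le\frac{\theta}{1-\theta}a(x)\|\bs\|_{F_x}$, followed by squaring and averaging. Your Pythagorean-projection phrasing of the first inequality is the same fact as the paper's observation that $a(x)\bs$ is the Fisher-nearest point to $\bt(x)$ in $\operatorname{span}\{\bs\}$.
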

\begin{proof}
By Equation~(\ref{eq:aperp}),
$a(x)=\widehat a(x)+\langle\bs,\widehat{\br}(x)\rangle_{F_x}/\|\bs\|_{F_x}^2$.
Cauchy--Schwarz bounds the second term in absolute value by $\theta\widehat a(x)$.
The vector $a(x)\bs$ minimizes the Fisher squared distance to $\bt(x)$ over the one-dimensional span, so
\begin{equation*}
\|\br_\perp(x)\|_{F_x}
\le\|\bt(x)-\widehat a(x)\bs\|_{F_x}
\le\theta\widehat a(x)\|\bs\|_{F_x}
\le\frac{\theta}{1-\theta}a(x)\|\bs\|_{F_x}.
\end{equation*}
Squaring and averaging proves the bound. Orthogonality follows directly from the definition, and Remark~\ref{rem:gauge} ensures gauge invariance.
\end{proof}

\begin{remark}[Small positive $\beta$ and uniform residual bounds]
\label{rem:smallbeta}
A multiplicative success profile provides a bound without diverging constants as $\beta\downarrow0$, under an additional uniform success condition.
Suppose $\nu_0(xv)\ge\nu_{\min}>0$, $c_\beta\nu_{\min}\ge1/\varrho$ for $0<\varrho\le1$, and
\begin{equation}
\log\nu_0(xv)=\tilde b(x)+a_0(x)s_v+\tilde e_v(x),
\qquad a_0(x)\ge0.
\label{eq:logprofile}
\end{equation}
Using $\log(1+c_\beta z)=\log c_\beta+\log z+\log(1+1/(c_\beta z))$ gives
\begin{equation*}
\Pi_{\mathrm{vocab}}\log\frac{\pi^\star(\cdot\mid x)}{p_0(\cdot\mid x)}
=a_0(x)\bs+\Pi_{\mathrm{vocab}}\tilde{\mathbf e}(x)+\boldsymbol\epsilon(x),
\qquad
\|\boldsymbol\epsilon(x)\|_2\le\sqrt V\,\varrho.
\end{equation*}
Indeed, each last logarithm lies in $[0,\varrho]$, and centering is a contraction.
Corollary~\ref{cor:fisherperp} applies only if its relative Fisher-residual condition also holds. The positive lower bound on success probabilities excludes zero-success continuations and is not automatic for binary rewards.
This regime concerns a small positive penalty coefficient in an idealized optimum, not a small realized KL divergence or a zero-penalty checkpoint.
\end{remark}

\begin{lemma}[Consistency of the representation pullback]
\label{lem:pullback}
Retain Definition~\ref{def:metric} and Assumption~\ref{as:reward}. For fixed nonzero $\bw_0\in\mathcal E$, define
$\mathbf e_h(x)=\bDelta(x)-a(x)\bw_0$. Then
\begin{equation}
\mathbf e_h(x)=a(x)(B_x\bs-\bw_0)+B_x\br_\perp(x).
\label{eq:eh}
\end{equation}
Let $G$ be a fixed positive definite metric on $\mathcal E$, set $E_0=\E_x[a(x)^2]\Gnorm{\bw_0}^2>0$, and suppose
\begin{equation}
\E_x\!\left[a(x)^2\Gnorm{B_x\bs-\bw_0}^2\right]\le\zeta_BE_0,
\qquad
\E_x\Gnorm{B_x\br_\perp(x)}^2\le\zeta_RE_0
\label{eq:zeta}
\end{equation}
for $\zeta_B,\zeta_R\ge0$. Then
\begin{equation}
\E_x\Gnorm{\mathbf e_h(x)}^2\le2(\zeta_B+\zeta_R)E_0.
\label{eq:ehbound}
\end{equation}
The constants separately control variation of the shared pullback and the pulled-back residual energy.
\end{lemma}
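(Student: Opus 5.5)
The identity in Equation~(\ref{eq:eh}) comes from linearity of the pullback $B_x$, and the energy bound in Equation~(\ref{eq:ehbound}) comes from the elementary parallelogram-type inequality.

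\emph{Step 1 (the identity).} By Definition~\ref{def:metric}, $\bDelta(x)=B_x\bt(x)$. Substituting the decomposition of Assumption~\ref{as:reward}, $\bt(x)=a(x)\bs+\br_\perp(x)$, and using linearity of $B_x$ gives
\begin{equation*}
\bDelta(x)=a(x)B_x\bs+B_x\br_\perp(x).
\end{equation*}
Subtracting $a(x)\bw_0$ yields Equation~(\ref{eq:eh}) directly. No gauge issue arises: $\bs$ is centered, and by Remark~\ref{rem:gauge} $B_x$ is invariant under vocabulary centering anyway. Both terms lie in $\mathcal E$, since $B_x$ maps into $\operatorname{range}(T)=\mathcal E$ and $\bw_0\in\mathcal E$. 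Hence the fixed metric $G$, which is positive definite on $\mathcal E$, applies to each of them.

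\emph{Step 2 (pointwise bound).} For any $u,v\in\mathcal E$, Cauchy--Schwarz in $\langle\cdot,\cdot\rangle_G$ together with $2ab\le a^2+b^2$ gives
\begin{equation*}
\Gnorm{u+v}^2\le2\Gnorm{u}^2+2\Gnorm{v}^2.
\end{equation*}
Apply this with $u=a(x)(B_x\bs-\bw_0)$ and $v=B_x\br_\perp(x)$ at each context. Since $\Gnorm{u}^2=a(x)^2\Gnorm{B_x\bs-\bw_0}^2$, we get
\begin{equation*}
\Gnorm{\mathbf e_h(x)}^2\le2a(x)^2\Gnorm{B_x\bs-\bw_0}^2+2\Gnorm{B_x\br_\perp(x)}^2.
\end{equation*}

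\emph{Step 3 (averaging).} Take expectations over $x\sim\mathcal D$. Both hypotheses in Equation~(\ref{eq:zeta}) bound the resulting expectations, giving
\begin{equation*}
\E_x\Gnorm{\mathbf e_h(x)}^2\le2\zeta_BE_0+2\zeta_RE_0,
\end{equation*}
which is Equation~(\ref{eq:ehbound}).

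The only point needing care is that these expectations be well defined and finite. That follows from the standing finite-energy assumption of Section~\ref{sec:setup} together with the two hypotheses themselves. There is no genuine obstacle; the main thing to remark is that the factor $2$ cannot be removed without an orthogonality assumption between the two error terms, which the lemma deliberately avoids.
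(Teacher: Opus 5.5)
Your proposal is correct and matches the paper's proof: you substitute $\bt(x)=a(x)\bs+\br_\perp(x)$ into $\bDelta(x)=B_x\bt(x)$ and subtract $a(x)\bw_0$ to get the identity. You then apply $\Gnorm{u+v}^2\le2\Gnorm{u}^2+2\Gnorm{v}^2$ pointwise and take expectations against the two hypotheses in Equation~(\ref{eq:zeta}).
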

\begin{proof}
Substitute $\bt(x)=a(x)\bs+\br_\perp(x)$ into $\bDelta(x)=B_x\bt(x)$ and subtract $a(x)\bw_0$ to obtain Equation~(\ref{eq:eh}). Applying
$\Gnorm{u+v}^2\le2\Gnorm{u}^2+2\Gnorm{v}^2$
and taking expectations proves Equation~(\ref{eq:ehbound}).
\end{proof}

Under exact homogeneity, define $A_x=F_x^{1/2}J_xT$ and
$\mathcal P_x=A_x(A_x^\top A_x)^{-1}A_x^\top$.
This is an orthogonal projector, so
\begin{equation}
\Gnorm{B_x\br_\perp(x)}^2
=(F_x^{1/2}\br_\perp(x))^\top\mathcal P_x(F_x^{1/2}\br_\perp(x))
\le\|\br_\perp(x)\|_{F_x}^2.
\label{eq:proj}
\end{equation}
Consequently, the smallest admissible residual ratio in Equation~(\ref{eq:zeta}) is at most
$\eta_{\mathrm{logit}}\E_x\|a(x)\bs\|_{F_x}^2/E_0$.

\begin{remark}[Shared-direction consistency under homogeneous metrics]
\label{rem:pullbackcaveat}
Equation~(\ref{eq:proj}) controls the pulled-back residual, not variation of $B_x\bs$. For example, replacing $J_x$ by $-J_x$ preserves $G_x$ but reverses the pullback, so equal metrics need not yield a common correction direction.
For a consistent nonzero $\bw_0$, normalize
$\bw=\bw_0/\Gnorm{\bw_0}$ and $a_h(x)=a(x)\Gnorm{\bw_0}$.
To use this component in Assumption~\ref{as:shared}, one must additionally verify $Q\bw=\bw$, zero residual mean, and weighted orthogonality. The bound on $\mathbf e_h(x)$ alone does not supply these moment conditions.
\end{remark}

\paragraph{From reward structure to recovery.}
The argument gives a sequence of sufficient conditions: an exact regularized optimum gives a next-token tilt relation; a shared success profile and a small relative residual give a shared logit component; a consistent pullback gives a shared representation component. The final lower bound separately requires off-principal localization and the moment conditions of Assumptions~\ref{as:offprin}--\ref{as:shared}. Each condition concerns a different object and must be checked independently when applying the analysis.

\paragraph{Connection to practical RL teachers.}
The tilt identities concern the exact optimum with $\beta>0$. For trained checkpoints, including DAPO checkpoints with zero explicit KL coefficient, the fitting and recovery results take $\bt(x)$ as given and apply under their stated local and structural conditions. Clipping and small updates motivate inspecting a local regime; they do not imply an exact tilt identity, metric homogeneity, or a shared direction for a trained checkpoint.

\subsection{Single-vector recovery: an identity and a lower bound}
\label{sec:recovery}
The quadratic optimum is generally metric-weighted. Under exact homogeneity, its explained target energy has the signal-to-fluctuation form in Section~\ref{sub:snr} and the sufficient geometric bound in Section~\ref{sub:bound}.

\begin{proposition}[Optimal steering vector]
\label{prop:opt}
Assume finite target energy and finite $C_{\mathrm{irr}}$. If $T^\top\bar GT$ is positive definite, where $\bar G=\E_x[G_x]$, the unique minimizer of the quadratic surrogate over $\mathcal E$ is
$\bdelta^\star=T(T^\top\bar GT)^{-1}T^\top\E_x[G_x\bDelta(x)]$.
For $T=I_d$, it is $\bar G^{-1}\E_x[G_x\bDelta(x)]$.
Under exact homogeneity,
\begin{equation*}
\bdelta^\star=\bmu:=\E_{x\sim\mathcal D}[\bDelta(x)].
\end{equation*}
This optimizes the quadratic surrogate without a trust-region constraint, not necessarily the original nonlinear KL objective.
\end{proposition}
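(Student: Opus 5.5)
The plan is to minimize the quadratic surrogate of Proposition~\ref{prop:wls} directly, parametrizing admissible vectors as $\bdelta=T\mathbf c$ with $\mathbf c\in\R^h$. Since $C_{\mathrm{irr}}$ does not depend on $\bdelta$, it suffices to minimize
\begin{equation*}
\phi(\mathbf c)=\tfrac12\E_x\big(T\mathbf c-\bDelta(x)\big)^\top G_x\big(T\mathbf c-\bDelta(x)\big).
\end{equation*}
Expanding the square gives
\begin{equation*}
\phi(\mathbf c)=\tfrac12\mathbf c^\top T^\top\bar GT\mathbf c-\mathbf c^\top T^\top\E_x[G_x\bDelta(x)]+\tfrac12\E_x\|\bDelta(x)\|_{G_x}^2 .
\end{equation*}
The interchange of expectation and the quadratic form is justified by finite target energy. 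This also makes $\E_x[G_x\bDelta(x)]$ well defined, by Cauchy--Schwarz in the seminorm $\|\cdot\|_{G_x}$ together with integrability of $G_x$, which is implicit in $\bar G$ existing.

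Because $T^\top\bar GT$ is positive definite, $\phi$ is a strictly convex quadratic in $\mathbf c$. Its unique stationary point is therefore its unique minimizer:
\begin{equation*}
\mathbf c^\star=(T^\top\bar GT)^{-1}T^\top\E_x[G_x\bDelta(x)].
\end{equation*}
Mapping back gives $\bdelta^\star=T\mathbf c^\star$. Uniqueness over $\mathcal E$ follows because $T$ is injective, having orthonormal columns. Setting $T=I_d$ gives the stated full-space formula.

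For the homogeneous case, take $G_x=G$ as quadratic forms on $\mathcal E$, so that $T^\top G_xT=T^\top GT$. Since each $\bDelta(x)$ lies in $\mathcal E$, write $\bDelta(x)=TT^\top\bDelta(x)$; by linearity, $\bmu$ also lies in $\mathcal E$. This gives
\begin{equation*}
T^\top\E_x[G_x\bDelta(x)]=\E_x[T^\top G_xT\,T^\top\bDelta(x)]=T^\top GT\,T^\top\bmu .
\end{equation*}
Hence $\mathbf c^\star=T^\top\bmu$ and $\bdelta^\star=TT^\top\bmu=\bmu$.

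The main subtlety is that homogeneity is only assumed on $\mathcal E$. The argument must therefore route everything through the compressed matrices $T^\top G_xT$ and never invoke $G_x\bDelta(x)$ in full space. For the same reason, membership of $\bDelta(x)$ and $\bmu$ in $\mathcal E$ has to be recorded explicitly.

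Finally, note that the result concerns only the surrogate. The statement's caveat about the nonlinear KL objective requires no proof, since no claim is made about it.
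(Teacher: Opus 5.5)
Your proposal is correct and follows essentially the same route as the paper: both parametrize $\bdelta=T\mathbf c$, solve the normal equation $T^\top\bar GT\mathbf c=T^\top\E_x[G_x\bDelta(x)]$ with positive definiteness giving uniqueness, and verify that $\bmu\in\mathcal E$ solves it under homogeneity. Your version only spells out steps the paper leaves implicit, namely the integrability argument and the compressed-matrix bookkeeping ($T^\top G_xT=T^\top GT$ by polarization, $\bDelta(x)=TT^\top\bDelta(x)$).
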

\begin{proof}
Writing $\bdelta=Tz$, the normal equation is
$T^\top\bar GTz=T^\top\E_x[G_x\bDelta(x)]$.
Positive definiteness gives the unique solution. If $G_x=G$ on $\mathcal E$, $\E_x[\bDelta(x)]\in\mathcal E$ solves the equation.
\end{proof}

\begin{definition}[Recovery ratio]
\label{def:rho}
In the constant-metric quadratic model, let
$E_{\mathrm{tar}}=\E_x\Gnorm{\bDelta(x)}^2\in(0,\infty)$ and define
\begin{equation*}
\rho:=1-\frac{\E_x\Gnorm{\bDelta(x)-\bdelta^\star}^2}{E_{\mathrm{tar}}}\in[0,1].
\end{equation*}
This is the fraction of locally representable target energy explained by the optimal constant vector, excluding $C_{\mathrm{irr}}$. In particular,
\begin{equation*}
\frac{\mathcal L_{\mathrm{quad}}(0)-\mathcal L_{\mathrm{quad}}(\bdelta^\star)}
{\mathcal L_{\mathrm{quad}}(0)}
=\frac{\rho E_{\mathrm{tar}}}{E_{\mathrm{tar}}+2C_{\mathrm{irr}}}.
\end{equation*}
Thus, even perfect target-energy recovery need not remove irreducible logit error. Exact teacher reproduction additionally requires local reachability and negligible approximation error.
Neither $\rho$ nor the displayed surrogate-loss ratio equals an empirical task-score gain recovery rate without further assumptions connecting distributions to task performance. Remark~\ref{rem:approx}(i) gives the varying-metric counterpart.
\end{definition}

\paragraph{Locally admissible interventions.}
Let $\mathcal T\subseteq\mathcal E$ be a specified trust region containing zero. Its best constant-metric target-energy recovery is
\begin{equation*}
\rho_{\mathcal T}
:=1-\frac{\inf_{\bdelta\in\mathcal T}\E_x\Gnorm{\bdelta-\bDelta(x)}^2}{E_{\mathrm{tar}}},
\qquad 0\le\rho_{\mathcal T}\le\rho.
\end{equation*}
The inequalities follow because zero is feasible and $\mathcal T\subseteq\mathcal E$.
If $\bmu\in\mathcal T$, the two recovery values coincide. Otherwise, unconstrained surrogate recovery is only an upper bound on what this trust region permits. Relating either value to nonlinear KL additionally requires the remainder bounds to hold at the evaluated interventions.

\subsubsection{Signal-to-fluctuation form}
\label{sub:snr}
\begin{theorem}[Recovery as a signal-to-fluctuation ratio]
\label{thm:snr}
In the constant-metric quadratic model, let $\tilde{\bDelta}(x)=\bDelta(x)-\bmu$, with $\E_x[\tilde{\bDelta}(x)]=\mathbf0$ and positive finite total target energy. Then
\begin{equation}
\boxed{\begin{aligned}
\rho&=\frac{\Gnorm{\bmu}^2}
{\Gnorm{\bmu}^2+\E_x\Gnorm{\tilde{\bDelta}(x)}^2}
=\frac{\SNR}{1+\SNR},\\[2pt]
\SNR&:=\frac{\Gnorm{\bmu}^2}{\E_x\Gnorm{\tilde{\bDelta}(x)}^2}.
\end{aligned}}
\label{eq:snr}
\end{equation}
If the fluctuation energy vanishes, set $\SNR=+\infty$ and $\rho=1$. For any target level $u\in(0,1)$, $\rho\ge u$ if and only if $\SNR\ge u/(1-u)$, using this convention.
\end{theorem}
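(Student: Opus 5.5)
In the constant-metric model of Assumption~\ref{as:homog}, Proposition~\ref{prop:opt} already identifies the optimal constant vector as $\bdelta^\star=\bmu=\E_x[\bDelta(x)]$. I would start from there and substitute it into Definition~\ref{def:rho}. The residual energy is then exactly the fluctuation energy, since $\bDelta(x)-\bdelta^\star=\tilde{\bDelta}(x)$, so
$$\rho=1-\frac{\E_x\Gnorm{\tilde{\bDelta}(x)}^2}{E_{\mathrm{tar}}}.$$

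Next I would compute $E_{\mathrm{tar}}$ by a bias--variance (Pythagorean) decomposition in the $G$-inner product. Writing $\bDelta(x)=\bmu+\tilde{\bDelta}(x)$ and expanding gives
$$\E_x\Gnorm{\bDelta(x)}^2=\Gnorm{\bmu}^2+2\langle\bmu,\E_x[\tilde{\bDelta}(x)]\rangle_G+\E_x\Gnorm{\tilde{\bDelta}(x)}^2.$$
The cross term vanishes because $\bmu$ is context-invariant and $\E_x[\tilde{\bDelta}(x)]=\mathbf0$. Moving the constant $\bmu$ out of the expectation is legitimate because finite target energy makes $\bDelta$ integrable (Cauchy--Schwarz in the $G$-norm on $\mathcal E$, where $G$ is positive definite). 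Hence $E_{\mathrm{tar}}=\Gnorm{\bmu}^2+\E_x\Gnorm{\tilde{\bDelta}(x)}^2$. Substituting this gives $\rho=\Gnorm{\bmu}^2/E_{\mathrm{tar}}$, which is the first boxed form.

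To reach the SNR form, I would divide the numerator and denominator by the fluctuation energy when it is positive; this yields $\SNR/(1+\SNR)$. When the fluctuation energy is zero, positivity of $E_{\mathrm{tar}}$ forces $\Gnorm{\bmu}^2=E_{\mathrm{tar}}>0$. Then $\rho=1$, which agrees with the convention $\SNR=+\infty$.

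For the threshold statement, I would use that $s\mapsto s/(1+s)$ is strictly increasing on $[0,\infty)$ with limit $1$. For $u\in(0,1)$, $s/(1+s)\ge u$ is equivalent to $s\ge u/(1-u)$. The infinite case is consistent with this, since $\rho=1\ge u$ and $\SNR=\infty$ exceeds any finite threshold.

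The only point needing care is bookkeeping: all vectors must stay in $\mathcal E$ so that $G$ is a genuine inner product there. Beyond that, the argument is routine.
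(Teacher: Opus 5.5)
Your proposal is correct and follows the paper's proof essentially step for step. You take $\bdelta^\star=\bmu$ from Proposition~\ref{prop:opt}, split $E_{\mathrm{tar}}=\Gnorm{\bmu}^2+\E_x\Gnorm{\tilde{\bDelta}(x)}^2$ using the constant metric and the zero-mean fluctuation, substitute into Definition~\ref{def:rho}, and rearrange to get the threshold statement, exactly as the paper does; your remarks on integrability and on the monotonicity of $s\mapsto s/(1+s)$ spell out details the paper leaves implicit.
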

\begin{proof}
Proposition~\ref{prop:opt} gives $\bdelta^\star=\bmu$. Since $G$ is constant and the fluctuation has zero mean,
$\E_x\Gnorm{\bDelta(x)}^2=\Gnorm{\bmu}^2+\E_x\Gnorm{\tilde{\bDelta}(x)}^2$.
Substitution into Definition~\ref{def:rho} proves Equation~(\ref{eq:snr}).
If the fluctuation energy is zero, positive total energy makes the mean energy positive. The threshold relation follows by rearranging $\rho=\SNR/(1+\SNR)$.
\end{proof}

\begin{corollary}[Spectral upper bound]
\label{cor:spectral}
In the same model, let $\bu(x)=G^{1/2}\bDelta(x)$,
$\mathbf m=\E_x[\bu(x)]=G^{1/2}\bmu$, and $\Sigma=\E_x[\bu(x)\bu(x)^\top]$.
For a weighted data operator or factor $M$ with $\Sigma=MM^\top$,
\begin{equation*}
\rho=\frac{\|\mathbf m\|_2^2}{\tr\Sigma}
\le\frac{\lambda_1(\Sigma)}{\tr\Sigma}
=\frac{\sigma_1^2(M)}{\sum_i\sigma_i^2(M)}.
\end{equation*}
For $\mathbf m\ne0$, equality holds exactly when $\mathbf m$ lies in the leading eigenspace of $\Sigma$ and
$\mathbf m^\top\operatorname{Cov}(\bu(x))\mathbf m=0$.
For $\mathbf m=0$ and $\tr\Sigma>0$, the inequality is strict.
\end{corollary}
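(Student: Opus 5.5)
Working in whitened coordinates, I would take $\bu(x)=G^{1/2}\bDelta(x)$, where $G^{1/2}$ is a square root of the constant metric, either on $\mathcal E$ or via $T$. Then $\Gnorm{v}=\|G^{1/2}v\|_2$, and every quantity in Theorem~\ref{thm:snr} becomes Euclidean. That theorem gives $\bdelta^\star=\bmu$ and $\rho=\Gnorm{\bmu}^2/E_{\mathrm{tar}}$. In the new coordinates the numerator is $\|\mathbf m\|_2^2$, and the denominator is $\E_x\|\bu(x)\|_2^2=\E_x\tr(\bu\bu^\top)=\tr\Sigma$ by linearity of trace and expectation. This establishes the first equality $\rho=\|\mathbf m\|_2^2/\tr\Sigma$.

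For the inequality, I would decompose $\Sigma=\operatorname{Cov}(\bu)+\mathbf m\mathbf m^\top$, with both terms positive semidefinite. For $\mathbf m\ne0$, let $\hat{\mathbf m}=\mathbf m/\|\mathbf m\|_2$. Then
\[
\lambda_1(\Sigma)\ge\hat{\mathbf m}^\top\Sigma\hat{\mathbf m}=\|\mathbf m\|_2^2+\hat{\mathbf m}^\top\operatorname{Cov}(\bu)\hat{\mathbf m}\ge\|\mathbf m\|_2^2 .
\]
The first step is the Rayleigh quotient characterization and the second uses $\operatorname{Cov}(\bu)\succeq0$. Dividing by $\tr\Sigma>0$ gives the bound. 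The identity $\lambda_1(\Sigma)/\tr\Sigma=\sigma_1^2(M)/\sum_i\sigma_i^2(M)$ follows because the nonzero eigenvalues of $MM^\top$ are the squared singular values of $M$.

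For the equality case, equality in the chain forces both inequalities to be tight. The second is tight exactly when $\hat{\mathbf m}^\top\operatorname{Cov}(\bu)\hat{\mathbf m}=0$, which is equivalent to $\mathbf m^\top\operatorname{Cov}(\bu)\mathbf m=0$. The first is tight exactly when $\hat{\mathbf m}$ attains the maximum Rayleigh quotient, that is, when $\mathbf m$ lies in the leading eigenspace. Conversely, if both conditions hold, then $\lambda_1=\hat{\mathbf m}^\top\Sigma\hat{\mathbf m}=\|\mathbf m\|_2^2$, so equality holds.

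When $\mathbf m=0$, we have $\rho=0$, while $\tr\Sigma>0$ gives $\lambda_1(\Sigma)>0$, so the inequality is strict.

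The main subtlety I expect is the leading-eigenspace condition. Because $\operatorname{Cov}(\bu)\hat{\mathbf m}=0$ follows from the PSD zero quadratic form, $\mathbf m$ is actually an eigenvector of $\Sigma$ with eigenvalue $\|\mathbf m\|_2^2$. The two stated conditions are therefore jointly necessary and sufficient. I would verify this cleanly rather than only invoking the Rayleigh argument.
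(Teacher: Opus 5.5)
Your proposal is correct and follows the paper's proof essentially step for step. Both arguments use the decomposition $\Sigma=\mathbf m\mathbf m^\top+\operatorname{Cov}(\bu(x))$ and the Rayleigh-quotient chain at $\mathbf m/\|\mathbf m\|_2$, then obtain equality by forcing tightness at both steps and handle $\mathbf m=0$ via $\lambda_1(\Sigma)>0$. They finish with $\tr\Sigma=E_{\mathrm{tar}}$ and the singular-value identity. Your extra remark that $\operatorname{Cov}(\bu)\mathbf m=0$ makes $\mathbf m$ an eigenvector with eigenvalue $\|\mathbf m\|_2^2$ makes the converse direction more explicit, but it is not a different route.
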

\begin{proof}
Since $\Sigma=\mathbf m\mathbf m^\top+\operatorname{Cov}(\bu(x))$, one has $\mathbf m\mathbf m^\top\preceq\Sigma$.
For $v=\mathbf m/\|\mathbf m\|_2$,
\begin{equation*}
\lambda_1(\Sigma)\ge v^\top\Sigma v
=\|\mathbf m\|_2^2+v^\top\operatorname{Cov}(\bu(x))v
\ge\|\mathbf m\|_2^2.
\end{equation*}
Equality requires equality at both steps. When $\mathbf m=0$, positive trace implies $\lambda_1(\Sigma)>0$. Finally, $\tr\Sigma=E_{\mathrm{tar}}$ and the factorization gives the singular-value expression.
\end{proof}

\paragraph{Implications for capacity, gap, and depth.}
The leading spectral share is an upper bound, not generally the recovery of a constant vector. Collinear targets can have unit leading spectral share but low mean-energy recovery when their amplitudes vary or their signs cancel.
Uniformly scaling a fixed target family by a nonzero scalar leaves $\rho$ unchanged in a fixed metric. Thus, target magnitude or teacher--student gap alone does not impose a monotone capacity requirement. Injection depth can change the Jacobian, reachability, sensitivity metric, and target variation; the theorem does not assert a universal monotone relationship with depth. The observed gap and depth trends are empirical findings consistent with changes in these quantities, not direct consequences of the recovery identity.

\subsubsection{Explicit three-constant bound}
\label{sub:bound}
Assumptions~\ref{as:offprin}--\ref{as:shared} specify a sufficient favorable regime through principal energy, complement residual energy, and amplitude consistency.

\begin{theorem}[Off-principal single-direction bound]
\label{thm:bound}
In the constant-metric quadratic model, under Assumptions~\ref{as:offprin}--\ref{as:shared} and their working-subspace compatibility conditions,
\begin{equation}
\boxed{\rho\ge\frac{\kappa}{(1+\eta)(1+\varepsilon)}.}
\label{eq:bound}
\end{equation}
\end{theorem}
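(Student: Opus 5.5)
Under exact homogeneity, Theorem~\ref{thm:snr} (via Proposition~\ref{prop:opt}) gives $\bdelta^\star=\bmu$ and $\rho=\Gnorm{\bmu}^2/E_{\mathrm{tar}}$. The plan is therefore to bound the numerator $\Gnorm{\bmu}^2$ from below and the denominator $E_{\mathrm{tar}}$ from above, both in terms of the single normalizer $\E_x[a(x)^2]$.

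\textbf{Numerator.} By the working-subspace invariance assumption, $P\bmu$ and $Q\bmu$ lie in $\mathcal E$. The compatibility condition $P^\top GQ=0$ of Assumption~\ref{as:offprin} then gives the $G$-orthogonal split
\begin{equation*}
\Gnorm{\bmu}^2=\Gnorm{P\bmu}^2+\Gnorm{Q\bmu}^2\ge\Gnorm{Q\bmu}^2 .
\end{equation*}
Since $Q$ is linear and expectations are finite, $Q\bmu=\E_x[Q\bDelta(x)]$. Assumption~\ref{as:shared} together with $\E_x[\bxi(x)]=\mathbf 0$ gives $Q\bmu=\E_x[a(x)]\,\bw$. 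Using $\Gnorm{\bw}=1$,
\begin{equation*}
\Gnorm{\bmu}^2\ge(\E_x[a(x)])^2=\kappa\,\E_x[a(x)^2].
\end{equation*}

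\textbf{Denominator.} Apply Equation~(\ref{eq:gsplit}) and then the target-energy bound of Assumption~\ref{as:offprin}:
\begin{equation*}
E_{\mathrm{tar}}\le(1+\varepsilon)\,\E_x\Gnorm{Q\bDelta(x)}^2 .
\end{equation*}
Expand $\Gnorm{a(x)\bw+\bxi(x)}^2=a(x)^2+2a(x)\langle\bw,\bxi(x)\rangle_G+\Gnorm{\bxi(x)}^2$. The weighted orthogonality condition in Equation~(\ref{eq:GA1}) kills the cross term in expectation, so
\begin{equation*}
\E_x\Gnorm{Q\bDelta(x)}^2=\E_x[a(x)^2]+\E_x\Gnorm{\bxi(x)}^2=(1+\eta)\,\E_x[a(x)^2].
\end{equation*}
Hence $E_{\mathrm{tar}}\le(1+\varepsilon)(1+\eta)\E_x[a(x)^2]$.

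\textbf{Conclusion.} Dividing the numerator bound by the denominator bound gives $\rho\ge\kappa/((1+\eta)(1+\varepsilon))$. Because $\E_x[a(x)^2]>0$, we have $E_{\mathrm{tar}}>0$, so Definition~\ref{def:rho} applies.

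\textbf{Main obstacle.} The difficulty is bookkeeping rather than estimation. Three points need care:
\begin{itemize}
\item $P\bmu$ and $Q\bmu$ must lie in $\mathcal E$, since the $G$-orthogonality is only asserted there. This is supplied by the projector invariance of $\mathcal E$ stated in the setup.
\item Interchanging $Q$ with the expectation requires the finite-energy hypotheses.
\item The $G$-orthogonality must be used twice, for $\bmu$ and for each $\bDelta(x)$. The second use is exactly Equation~(\ref{eq:gsplit}).
\end{itemize}
Beyond these checks, the two moment conditions in Equation~(\ref{eq:GA1}) do the work: zero mean of $\bxi$ drives the numerator bound, and weighted orthogonality drives the denominator bound.
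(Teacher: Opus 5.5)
Your proof is correct and follows the paper's argument step for step: you take $\rho=\Gnorm{\bmu}^2/E_{\mathrm{tar}}$ from Theorem~\ref{thm:snr}, bound $\Gnorm{\bmu}^2\ge\Gnorm{Q\bmu}^2=(\E_x[a(x)])^2=\kappa\,\E_x[a(x)^2]$ using zero residual mean and metric compatibility, and bound $E_{\mathrm{tar}}\le(1+\varepsilon)(1+\eta)\,\E_x[a(x)^2]$ using the energy split, the localization bound, and weighted orthogonality. Your extra bookkeeping (projector invariance of $\mathcal E$, interchanging $Q$ with the expectation, and positivity of $E_{\mathrm{tar}}$) only makes explicit what the paper leaves implicit.
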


\paragraph{Energy decomposition.}
Both moment conditions in Equation~(\ref{eq:GA1}) are used. In particular, weighted orthogonality removes the mixed term in
\begin{equation}
\E_x\Gnorm{a(x)\bw+\bxi(x)}^2
=\E_x[a(x)^2]\Gnorm{\bw}^2+\E_x\Gnorm{\bxi(x)}^2
+2\E_x[a(x)\langle\bw,\bxi(x)\rangle_G].
\label{eq:GA2}
\end{equation}
Zero residual mean identifies $Q\bmu=\E_x[a(x)]\bw$. Remark~\ref{rem:approx} supplies a perturbation bound when the principal/complement split is only approximately $G$-orthogonal.

\begin{proof}
By Theorem~\ref{thm:snr}, $\rho=\Gnorm{\bmu}^2/E_{\mathrm{tar}}$.
The zero-mean residual gives $Q\bmu=\E_x[a(x)]\bw$. Metric compatibility and $\Gnorm{\bw}=1$ imply
\begin{equation*}
\Gnorm{\bmu}^2\ge\Gnorm{Q\bmu}^2
=(\E_x[a(x)])^2=\kappa\E_x[a(x)^2].
\end{equation*}
The energy split, localization bound, and weighted orthogonality give
\begin{equation*}
E_{\mathrm{tar}}
\le(1+\varepsilon)\E_x\Gnorm{Q\bDelta(x)}^2
=(1+\varepsilon)(1+\eta)\E_x[a(x)^2].
\end{equation*}
The last factor is positive; division proves the result.
\end{proof}

\paragraph{What the bound establishes.}
Write $E_P=\E_x\Gnorm{P\bDelta(x)}^2$ and $E_Q=\E_x\Gnorm{Q\bDelta(x)}^2>0$.
The proof gives complement recovery
$\rho_Q=\Gnorm{Q\bmu}^2/E_Q=\kappa/(1+\eta)$.
If $E_P>0$, define $\rho_P=\Gnorm{P\bmu}^2/E_P$ and $e=E_P/E_Q$. Then the exact relation is
\begin{equation*}
\rho=\frac{\rho_Q+e\rho_P}{1+e}.
\end{equation*}
If $E_P=0$, $\rho=\rho_Q$. The theorem discards a nonnegative principal contribution and uses $e\le\varepsilon$.
Accordingly, low principal energy is a sufficient condition within this decomposition, not necessary for high constant-vector recovery: a constant nonzero target entirely in the principal subspace also has $\rho=1$. The theorem quantifies a specified localization pattern; it does not derive that pattern from the RLVR objective or establish its prevalence.

\begin{corollary}[Coefficient-of-variation form]
\label{cor:cv}
Under the same conditions,
\begin{equation*}
\rho\ge\frac{1}{(1+\CV(a)^2)(1+\eta)(1+\varepsilon)},
\end{equation*}
since $\kappa=1/(1+\CV(a)^2)$.
\end{corollary}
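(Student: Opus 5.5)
This corollary is a direct substitution into Theorem~\ref{thm:bound}. The plan is to take the three-constant bound $\rho\ge\kappa/((1+\eta)(1+\varepsilon))$ as given, under the same constant-metric quadratic model and Assumptions~\ref{as:offprin}--\ref{as:shared}. The only remaining work is to rewrite $\kappa$ through the coefficient of variation of the amplitude $a(x)$.

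The key identity is $\E_x[a(x)^2]=\Var_x(a(x))+(\E_x[a(x)])^2$. Because $a(x)\ge0$ and $\E_x[a(x)^2]>0$ by Assumption~\ref{as:shared}, the mean $\E_x[a(x)]$ is strictly positive. To see this, note that a nonnegative variable with zero mean vanishes almost surely, and that would contradict positive second moment.

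With a positive mean, $\CV(a)^2=\Var_x(a(x))/(\E_x[a(x)])^2$ is well defined and finite. Dividing the identity by $(\E_x[a(x)])^2$ gives $\E_x[a(x)^2]/(\E_x[a(x)])^2=1+\CV(a)^2$. Taking reciprocals yields $\kappa=1/(1+\CV(a)^2)$, which is the relation already recorded in Assumption~\ref{as:shared}.

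Substituting this expression for $\kappa$ into Equation~(\ref{eq:bound}) gives the stated inequality. There is no real obstacle here. The only step that needs care is confirming that the mean of $a$ is nonzero, so that $\CV(a)$ is finite and the substitution is legitimate, and that case is handled by the positivity argument above.
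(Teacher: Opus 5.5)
Your proposal is correct and matches the paper's route: the corollary follows by substituting $\kappa=1/(1+\CV(a)^2)$, the identity already recorded in Assumption~\ref{as:shared}, into the bound of Theorem~\ref{thm:bound}. Your check that $\E_x[a(x)]>0$, using $a\ge0$ and $\E_x[a(x)^2]>0$, is a valid justification that the paper leaves implicit when it writes $\kappa\in(0,1]$.
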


\subsection*{Part II.\quad Multi-Vector Steering: The Geometry of the Residual Subspace}
% The legacy Part II label is attached to the numbered subsection below.
\noindent\emph{Section~\ref{sec:multi} extends constant-vector fitting to subspace constraints and distinguishes it from adaptive subspace approximation. Section~\ref{sec:complement} relates principal structure to sensitivity and weight-induced activation changes.}

\subsection{Sequential-orthogonal steering and constrained fitting}
\label{sec:multi}
\label{Multi-Vector Steering: The Geometry of the Residual Subspace}
The empirical protocol learns $\bv^{(0)},\ldots,\bv^{(K-1)}$ sequentially. At each stage only the current multi-layer direction is injected; earlier directions define layer-wise Euclidean orthogonality constraints. Reprojection enforces these constraints after each optimizer step, and each direction is evaluated independently. The result below concerns exact fitting for a fixed target distribution and metric, not cumulative injection of the learned vectors.

\begin{theorem}[Constant-vector recovery under a subspace constraint]
\label{thm:svd}
% Legacy label retained for cross-reference compatibility.
Let $\mathcal V_k\subseteq\mathcal E$ be a feasible subspace in the constant-metric quadratic model, and let $R_k$ have full-column-rank columns spanning it. The exact constrained minimizer is
\begin{equation*}
\bv^{(k)}=R_k(R_k^\top GR_k)^{-1}R_k^\top G\bmu.
\end{equation*}
For the single-site sequential protocol,
$\mathcal V_k=\mathcal E\cap\operatorname{span}\{\bv^{(j)}:j<k\}^{\perp_2}$.
With $E_{\mathrm{tar}}>0$, its standalone target-energy recovery satisfies
\begin{equation}
\rho^{(k)}=\frac{\Gnorm{\bv^{(k)}}^2}{E_{\mathrm{tar}}},
\qquad
\mathcal V_{k+1}\subseteq\mathcal V_k\ \Longrightarrow\ \rho^{(k+1)}\le\rho^{(k)}.
\label{eq:rhoK}
\end{equation}
If $G=cI$ on $\mathcal E$, $c>0$, $\mathcal V_0=\mathcal E$, and $\bmu\ne0$, exact first-stage fitting gives $\bv^{(0)}=\bmu$, and fitting in its Euclidean orthogonal complement gives $\bv^{(1)}=\mathbf0$.
\end{theorem}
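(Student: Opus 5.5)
The plan is to reduce everything to the constant-metric quadratic surrogate of Proposition~\ref{prop:wls} with $G_x=G$, and to treat the constrained fit as a $G$-orthogonal projection of $\bmu$. Restricted to $\bdelta\in\mathcal V_k$, the objective is $\tfrac12\E_x\Gnorm{\bdelta-\bDelta(x)}^2$. Expanding around $\bmu$ and using $\E_x[\bDelta(x)-\bmu]=\mathbf0$, exactly as in the proof of Theorem~\ref{thm:snr}, I will write it as
\[
\tfrac12\Gnorm{\bdelta-\bmu}^2+\tfrac12\E_x\Gnorm{\bDelta(x)-\bmu}^2 .
\]
The second term is independent of $\bdelta$. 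So the constrained problem is to minimize $\Gnorm{\bdelta-\bmu}^2$ over $\mathcal V_k$.

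Next I parametrize $\bdelta=R_kz$. The normal equations $R_k^\top GR_kz=R_k^\top G\bmu$ give the stated formula. The inverse exists because $R_k$ has full column rank and $G$ is positive definite on $\mathcal E\supseteq\mathcal V_k$. Uniqueness follows from strict convexity. The minimizer $\bv^{(k)}$ is the $G$-orthogonal projection of $\bmu$ onto $\mathcal V_k$, so $\langle\bmu-\bv^{(k)},\bv^{(k)}\rangle_G=0$. This gives the Pythagorean identity
\[
\Gnorm{\bmu}^2=\Gnorm{\bv^{(k)}}^2+\Gnorm{\bmu-\bv^{(k)}}^2 .
\]

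For the recovery formula, I define the standalone recovery $\rho^{(k)}$ in analogy with Definition~\ref{def:rho}, as $1-\E_x\Gnorm{\bDelta(x)-\bv^{(k)}}^2/E_{\mathrm{tar}}$. I will use the decomposition $E_{\mathrm{tar}}=\Gnorm{\bmu}^2+\E_x\Gnorm{\tilde\bDelta(x)}^2$. The residual energy equals $\Gnorm{\bmu-\bv^{(k)}}^2+\E_x\Gnorm{\tilde\bDelta(x)}^2$, so the Pythagorean identity gives $E_{\mathrm{tar}}-\text{residual}=\Gnorm{\bv^{(k)}}^2$, hence $\rho^{(k)}=\Gnorm{\bv^{(k)}}^2/E_{\mathrm{tar}}$.

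Monotonicity comes from the variational characterization. Since $\Gnorm{\bv^{(k)}}^2=\Gnorm{\bmu}^2-\min_{\bdelta\in\mathcal V_k}\Gnorm{\bmu-\bdelta}^2$, and a minimum over a smaller set is larger, $\mathcal V_{k+1}\subseteq\mathcal V_k$ forces $\rho^{(k+1)}\le\rho^{(k)}$. For the sequential protocol, each $\mathcal V_k$ is an intersection of $\mathcal E$ with hyperplanes that accumulate as $k$ grows, so the nesting is automatic. The step needing care is that the orthogonality is Euclidean while the projection is $G$-orthogonal. I would note that nesting holds regardless of this mismatch, so no compatibility condition is needed.

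For the isotropic case $G=cI$ on $\mathcal E$, the $G$-projection of $\bmu\in\mathcal E$ onto $\mathcal V_0=\mathcal E$ is $\bmu$ itself, so $\bv^{(0)}=\bmu\neq\mathbf0$. Then $\mathcal V_1=\mathcal E\cap\bmu^{\perp_2}$. Since $G$ is a multiple of the identity, the $G$-projection coincides with the Euclidean projection, and the Euclidean projection of $\bmu$ onto a subspace orthogonal to $\bmu$ is $\mathbf0$. I expect the main obstacle to be only bookkeeping, namely keeping $R_k$ of full column rank when $\mathcal V_k$ is nontrivial. If $\mathcal V_k=\{\mathbf0\}$, I would set $\bv^{(k)}=\mathbf0$ by convention, and the monotonicity claim still holds.
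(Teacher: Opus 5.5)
Your proposal is correct and follows essentially the same route as the paper: the parallel-axis split reduces the constrained fit to the $G$-orthogonal projection of $\bmu$ onto $\mathcal V_k$, the normal equations give the formula, $G$-orthogonality of the residual (your Pythagorean identity, the paper's $\langle\bv^{(k)},\bmu\rangle_G=\Gnorm{\bv^{(k)}}^2$) gives $\rho^{(k)}$, and nesting of the feasible sets together with isotropy give the remaining claims. One small correction: for $\mathcal V_k=\{\mathbf0\}$, the choice $\bv^{(k)}=\mathbf0$ is not a convention but is forced, since it is the unique feasible point; this is exactly how the paper handles that case.
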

\begin{proof}
The parallel-axis identity splits the fitting objective into a constant fluctuation term and $\|\bv-\bmu\|_G^2$.
Writing $\bv=R_kz$ gives $R_k^\top G(R_kz-\bmu)=0$ and the stated solution.
The residual is $G$-orthogonal to $\mathcal V_k$, hence
$\langle\bv^{(k)},\bmu\rangle_G=\|\bv^{(k)}\|_G^2$.
Expanding the loss reduction yields Equation~(\ref{eq:rhoK}). Nesting the feasible subspaces cannot improve the optimum. In the isotropic case the second space is orthogonal to $\bmu$, whose projection is zero.
If the feasible space has dimension zero, its unique minimizer is zero without applying the matrix-inverse formula. A zero vector introduces no further orthogonality constraint.
\end{proof}

For multiple sites, intersect the product of the layer-wise feasible spaces with the working subspace. The theorem applies to the stacked intervention and the full sensitivity metric, including cross-layer blocks.

\paragraph{Adaptive subspace approximation.}
For comparison, consider a shared subspace with unrestricted context-dependent coefficients. Let $1\le K\le\dim\mathcal E$ and $\bu(x)=G^{1/2}\bDelta(x)$, with $\Sigma=\E_x[\bu(x)\bu(x)^\top]$. Then
\begin{equation}
\min_{\substack{Q_K^\top Q_K=I_K\\\operatorname{range}(Q_K)\subseteq G^{1/2}\mathcal E}}
\E_x\|\bu(x)-Q_KQ_K^\top\bu(x)\|_2^2
=\tr\Sigma-\sum_{i=1}^{K}\lambda_i(\Sigma).
\label{eq:adaptive_subspace_recovery}
\end{equation}
Indeed, the captured energy is $\tr(Q_K^\top\Sigma Q_K)$, maximized by a leading eigenspace. The fitted coefficients $Q_K^\top\bu(x)$ depend on context. Thus, the optimal explained fraction is $\sum_{i=1}^{K}\lambda_i(\Sigma)/\tr\Sigma$ when the trace is positive.
These are oracle coefficients for the quadratic target. A particular gated network must also represent and learn the required coefficient map, and layer-wise restrictions may further reduce its attainable fit. The spectral expression is therefore a characterization of a more flexible approximation class, not a guarantee for a trained gating architecture.

\paragraph{Interpretation of later steering directions.}
Nonzero standalone performance of later Euclidean-orthogonal vectors establishes the existence of alternative useful interventions. It does not establish that they recover distinct spectral components of the teacher target.
Even a deterministic target can admit more than one useful Euclidean-orthogonal intervention under an anisotropic metric. For example, with $\mathcal E=\R^2$, $\bDelta(x)=(1,0)^\top$, and
$G=\left(\begin{smallmatrix}2&1\\1&1\end{smallmatrix}\right)$,
Theorem~\ref{thm:svd} gives $\bv^{(0)}=(1,0)^\top$ and $\bv^{(1)}=(0,1)^\top$, with standalone recovery $1$ and $1/2$, respectively, although the target second moment has rank one.
In the empirical setting, anisotropy, nonlinear responses, evolving sampled contexts, and approximate optimization can all affect the learned directions; task scores are also distinct from quadratic energy recovery. The observed decline with extraction order and its model-scale dependence are consequently descriptions of independently effective interventions, not estimates of target spectral rank or intrinsic manifold dimension.

\subsection{Effective directions and the principal subspace}
\label{sec:complement}
The main-text steering measurements place approximately $1\%$--$2\%$ of the learned vectors' Euclidean energy in a top-$10\%$ activation principal subspace, below the approximately $10\%$ isotropic reference. This observation concerns the learned vectors. The recovery bound instead uses sensitivity-weighted energy of the context-dependent target family. We now state the metric comparison needed to connect such quantities when the shift objects are matched.

\begin{definition}[Variance and sensitivity metrics]
\label{def:CG}
For a specified base-model reference context distribution $\mathcal D_{\mathrm{ref}}$, let
$\bar{\bh}_\ell=\E_{x\sim\mathcal D_{\mathrm{ref}}}[\bh_\ell(x)]$ and define
\begin{equation*}
C:=\E_{x\sim\mathcal D_{\mathrm{ref}}}[(\bh_\ell(x)-\bar{\bh}_\ell)(\bh_\ell(x)-\bar{\bh}_\ell)^\top],
\qquad G:=\E_{x\sim\mathcal D}[G_x].
\end{equation*}
The leading eigenspace of $C$ defines $P$, with $Q=I-P$; fix a choice of basis if the cutoff eigenvalue is degenerate.
The matrix $G$ measures output sensitivity, and at the unembedding input equals $U^\top\E_x[F_x]U$.
The reference distribution used to estimate covariance and the distribution used for fitting must be specified; they need not be identical.
An uncentered implementation uses $C+\bar{\bh}_\ell\bar{\bh}_\ell^\top$ instead and must report its basis accordingly. High activation variance is not equivalent to high or low output sensitivity.
\end{definition}

\begin{proposition}[Sensitivity-weighted principal-energy bound]
\label{prop:mismatch}
On the working subspace, define
\begin{equation*}
g_P=\sup_{v\in\mathcal E\cap\operatorname{range}(P)\setminus\{0\}}
\frac{v^\top Gv}{\|v\|_2^2},
\qquad
g_Q=\inf_{v\in\operatorname{span}\{Q\bDelta(x)\}\setminus\{0\}}
\frac{v^\top Gv}{\|v\|_2^2}>0,
\end{equation*}
with $g_P=0$ if $P\mathcal E=\{0\}$.
If $\E_x\|Q\bDelta(x)\|_2^2>0$, then
\begin{equation*}
e_G:=\frac{\E_x\Gnorm{P\bDelta(x)}^2}{\E_x\Gnorm{Q\bDelta(x)}^2}
\le\frac{g_P}{g_Q}
\frac{\E_x\|P\bDelta(x)\|_2^2}{\E_x\|Q\bDelta(x)\|_2^2}.
\end{equation*}
If the right-hand side is small, it supplies a small admissible $\varepsilon\ge e_G$ for the localization part of Assumption~\ref{as:offprin}. Its metric-orthogonality condition remains separate.
\end{proposition}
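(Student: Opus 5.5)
The bound follows from two pointwise Rayleigh-quotient comparisons, one for the numerator and one for the denominator of $e_G$, with the expectations taken afterwards.

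\emph{Numerator.} Fix a context $x$ and set $u=P\bDelta(x)$. By the standing assumption that $\mathcal E$ is invariant under $P$, and since $\bDelta(x)\in\mathcal E$, the vector $u$ lies in $\mathcal E\cap\operatorname{range}(P)$. The definition of $g_P$ as a supremum then gives
\begin{equation*}
\Gnorm{P\bDelta(x)}^2\le g_P\,\|P\bDelta(x)\|_2^2 .
\end{equation*}
If $u=0$ this holds trivially. If $P\mathcal E=\{0\}$, both sides vanish, consistent with the convention $g_P=0$.

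\emph{Denominator.} Set $v=Q\bDelta(x)$. It lies in $\operatorname{span}\{Q\bDelta(x')\}$, the set over which $g_Q$ is an infimum. Hence
\begin{equation*}
\Gnorm{Q\bDelta(x)}^2\ge g_Q\,\|Q\bDelta(x)\|_2^2 ,
\end{equation*}
again trivially when $v=0$.

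\emph{Taking expectations.} Both displays are pointwise, so they integrate; finite target energy makes every expectation finite. This yields
\begin{equation*}
\E_x\Gnorm{P\bDelta(x)}^2\le g_P\,\E_x\|P\bDelta(x)\|_2^2,
\qquad
\E_x\Gnorm{Q\bDelta(x)}^2\ge g_Q\,\E_x\|Q\bDelta(x)\|_2^2 .
\end{equation*}
The second quantity is strictly positive, because $g_Q>0$ by hypothesis and $\E_x\|Q\bDelta(x)\|_2^2>0$. Dividing the first inequality by the second gives the stated bound on $e_G$.

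\emph{Final clause.} Any $\varepsilon$ at least the right-hand side satisfies $\varepsilon\ge e_G$, so it meets the target-energy part of Assumption~\ref{as:offprin}. Nothing in this argument addresses the cross term, which is why the compatibility condition $P^\top GQ=0$ remains a separate hypothesis.

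\emph{Main obstacle.} The only step needing care is well-posedness of $g_Q$. It is an infimum over a span of target vectors, not over all of $\operatorname{range}(Q)$, so the lower bound must be applied to vectors that actually lie in that span; this is automatic here. Its positivity is supplied by the hypothesis rather than by positive definiteness of $G$, so no further argument is needed.
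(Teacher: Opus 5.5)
Your proof is correct and follows the paper's argument: the pointwise bounds $\Gnorm{P\bDelta(x)}^2\le g_P\|P\bDelta(x)\|_2^2$ and $\Gnorm{Q\bDelta(x)}^2\ge g_Q\|Q\bDelta(x)\|_2^2$, then averaging and dividing by the positive denominator, with the zero-principal-space case immediate. You also make explicit that $P\bDelta(x)$ lies in $\mathcal E\cap\operatorname{range}(P)$ because $\mathcal E$ is $P$-invariant; the paper uses this fact without stating it.
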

\begin{proof}
At each context,
$\Gnorm{P\bDelta(x)}^2\le g_P\|P\bDelta(x)\|_2^2$
and
$\Gnorm{Q\bDelta(x)}^2\ge g_Q\|Q\bDelta(x)\|_2^2$.
Averaging and dividing by the positive denominator proves the claim; the zero-principal-space case is immediate.
\end{proof}

\paragraph{Sensitivity and localization are distinct.}
If $J_xv=c(x)\mathbf{1}_V+e(x)$, softmax gauge invariance gives $v^\top G_xv=\|e(x)\|_{F_x}^2$.
A small $g_P$ requires an appropriate uniform sensitivity bound over the principal subspace, not merely this identity for one vector. The ratio $g_P/g_Q$ can otherwise be large, so a small Euclidean PC fraction need not imply small sensitivity-weighted intrusion.
Nor does localization of a learned mean imply localization of its target family. With $G=I$, $P=e_1e_1^\top$, and equiprobable targets $\bDelta_\pm=(\pm c,1)^\top$, the optimal constant vector is $(0,1)^\top$ and has zero principal energy, whereas target principal-to-complement energy is $c^2$ and recovery is $1/(1+c^2)$. Measurements of learned-vector energy therefore do not alone verify Assumption~\ref{as:offprin}.

\subsubsection{A weight-space account of activation leakage}
We relate a layer's weight displacement to its first-order activation change. This is distinct from both the fitted intervention and the pulled-back teacher target.

\paragraph{Weight-principal components.}
For $W\in\R^{d\times d'}$ with singular value decomposition $W=\sum_i s_i\bp_i\bq_i^\top$, define
\begin{equation*}
\PW:=\sum_{i\le k}\bp_i\bp_i^\top.
\end{equation*}
The split $\Delta W=\PW\Delta W+(I-\PW)\Delta W$ separates principal weight components from their complement.
The bound $|s_i(W+\Delta W)-s_i(W)|\le\|\Delta W\|_2$ controls absolute spectral displacement but does not determine activation-principal localization.

\begin{lemma}[Weight components and first-order activation leakage]
\label{lem:bridge}
Consider $\bh_\ell(x)=\phi(W\bc(x))$ with elementwise differentiable $\phi$, holding $\bc(x)$ fixed while perturbing $W$. Let $D(x)=\diag(\phi'(W\bc(x)))$ have bounded norm at the evaluated points. For the first-order shift $\Dh(x)=D(x)\Delta W\bc(x)$,
\begin{equation}
\|P\Dh(x)\|_2
\le\underbrace{\|D(x)\|_2\|\PW\Delta W\|_2\|\bc(x)\|_2}_{\text{on-principal weight mass}}
+\underbrace{\|PD(x)(I-\PW)\|_2\|\Delta W\|_2\|\bc(x)\|_2}_{\text{cross-subspace leakage}}.
\label{eq:bridge}
\end{equation}
If $\|\Dh(x)\|_2>0$ and the right-hand side is at most $\xi\|\Dh(x)\|_2$, then $\|P\Dh(x)\|_2/\|\Dh(x)\|_2\le\xi$.
\end{lemma}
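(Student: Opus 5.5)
The plan is to split the weight displacement with the weight-principal projector and bound each piece by submultiplicativity. Write
\begin{equation*}
\Dh(x)=D(x)\PW\Delta W\bc(x)+D(x)(I-\PW)\Delta W\bc(x),
\end{equation*}
which follows from $\Delta W=\PW\Delta W+(I-\PW)\Delta W$ and linearity of the first-order shift in $\Delta W$. Applying the activation projector $P$ and the triangle inequality gives $\|P\Dh(x)\|_2\le\|PD(x)\PW\Delta W\bc(x)\|_2+\|PD(x)(I-\PW)\Delta W\bc(x)\|_2$.

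For the first term, use that $P$ is a Euclidean orthogonal projector, so $\|P\|_2\le1$. Submultiplicativity then gives $\|PD(x)\PW\Delta W\bc(x)\|_2\le\|D(x)\|_2\|\PW\Delta W\|_2\|\bc(x)\|_2$, which is the on-principal weight mass.

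For the second term, group the factors as $\bigl(PD(x)(I-\PW)\bigr)\Delta W\bc(x)$ and bound them by $\|PD(x)(I-\PW)\|_2\|\Delta W\|_2\|\bc(x)\|_2$, which is the cross-subspace leakage. The only point needing care is the dimensions. $\PW$ is built from the left singular vectors $\bp_i\in\R^d$, so it acts on the output side of $W$, matching the domain of $D(x)$ and $P$. This makes $PD(x)(I-\PW)$ a well-defined $d\times d$ operator. The looser alternative $\|(I-\PW)\Delta W\|_2\le\|\Delta W\|_2$ is not needed here, since the statement keeps the full $\|\Delta W\|_2$ in this term.

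The final claim is immediate. If the right-hand side is at most $\xi\|\Dh(x)\|_2$ and $\|\Dh(x)\|_2>0$, dividing the established inequality by $\|\Dh(x)\|_2$ yields $\|P\Dh(x)\|_2/\|\Dh(x)\|_2\le\xi$.

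I expect no real obstacle beyond these dimension and side conventions. The bounded-norm hypothesis on $D(x)$ only guarantees that the right-hand side is finite at the evaluated points.
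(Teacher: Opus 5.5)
Your proposal is correct and matches the paper's proof: split $P\Dh(x)$ using $\Delta W=\PW\Delta W+(I-\PW)\Delta W$, apply the triangle inequality, submultiplicativity and $\|P\|_2\le1$, and get the relative bound by dividing by $\|\Dh(x)\|_2>0$. Your check that $\PW$ acts on the output side of $W$, so that $PD(x)(I-\PW)$ is a well-defined $d\times d$ operator, is a correct addition the paper leaves implicit.
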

\begin{proof}
Decompose
\begin{equation*}
P\Dh(x)=PD(x)\PW\Delta W\bc(x)+PD(x)(I-\PW)\Delta W\bc(x).
\end{equation*}
Apply the triangle inequality, submultiplicativity, and $\|P\|_2\le1$. The relative bound follows by division when the total shift is nonzero.
\end{proof}

The leakage factor separates as
$PD(x)(I-\PW)=(P-\PW)D(x)(I-\PW)+\PW D(x)(I-\PW)$,
exposing basis mismatch and Jacobian mixing. Small weight-update norm or zero principal weight mass alone does not make this factor small relative to the total activation change.
Extending the result beyond first order requires a nonlinear remainder bound; changing earlier layers additionally perturbs $\bc(x)$. The lemma is a conditional first-order decomposition, not a deduction of activation localization from small weight updates.

\subsection*{Part III.\quad Geometric Analysis of Alpha-Stabler}
% Attach the legacy Part III label to a numbered heading for valid \ref output.
\noindent\emph{Parts~I--II analyze frozen-model intervention recovery. We now study a distinct parameter-training procedure: a fixed-reference activation diagnostic and local backward gradient projection. The diagnostic's energy identity and the projection's local properties are exact; the link to parameter-induced activation changes is conditional, and overall warning and stabilization performance is empirical.}

\subsection{Principal-subspace monitoring and gradient control}
\label{sec:predicter}
\label{Geometric Analysis of Alpha-Stabler}

\paragraph{Warm-up estimation and fixed reference geometry.}
During the first $T_{\mathrm{warm}}=50$ parameter updates, the actor trains without projection. Its generated sequences are also evaluated by the frozen initial base model to estimate reference token-activation covariance. After warm-up, let $\mathcal U_\ell\in\mathbb R^{d\times r}$ contain the selected leading orthonormal eigenvectors, with $r=\lceil0.10d\rceil$.
This is the main-text basis $U_{\ell,r}$; the retained fraction refers to hidden dimensions, not explained variance. Set $P_\ell=\mathcal U_\ell\mathcal U_\ell^\top$ and $Q_\ell=I-P_\ell$ and hold them fixed thereafter.
The analysis conditions on the estimated projectors. It neither assumes they equal population projectors nor supplies a finite-sample PCA guarantee. During each controlled backward pass and optimizer step, both projectors and flags are fixed, and their estimation and selection are detached from differentiation.

\paragraph{Token-level principal-subspace intrusion.}
For identical actor/base token sequences, stack the analyzed valid token-level activation differences as rows of $X_\ell^{(t)}\in\mathbb R^{N\times d}$. Consistent with Equation~(\ref{eq6}),
\begin{equation}
\mathrm{PSI}_\ell^{(t)}
=\frac{\|X_\ell^{(t)}\mathcal U_\ell\|_F^2}
{\|X_\ell^{(t)}\|_F^2+\epsilon_{\mathrm{num}}},
\qquad \epsilon_{\mathrm{num}}>0.
\label{eq:psi}
\end{equation}
The stabilizer is the main-text numerical $\epsilon$, not the theoretical intrusion bound $\varepsilon$. Energies are summed over tokens before normalization; averaging vectors first would permit cancellation. Aggregate PSI sums principal and total shift energies across monitored layers before taking their ratio.
The identities below hold for the analyzed shift collection. If monitoring uses a token subset, they are exact for that subset, not automatically for all tokens or the population. The sampling and distributed aggregation protocol must be specified with the implementation.

\begin{corollary}[Energy interpretation and metric comparison]
\label{cor:psi}
Fix a layer. Let $A=\|XP\|_F^2$, $B=\|XQ\|_F^2$, and $A+B>0$, and set $\psi_0=A/(A+B)$.
For $B>0$, define $e_E=A/B$. Then
\begin{equation}
\psi_0=\frac{e_E}{1+e_E},
\qquad
0\le\psi_0-\mathrm{PSI}
=\psi_0\frac{\epsilon_{\mathrm{num}}}{A+B+\epsilon_{\mathrm{num}}}.
\label{eq:psi_energy_identity}
\end{equation}
Write each row of $X$ as $\boldsymbol\Delta_i^\top$. For the same shifts with $B>0$, define
$e_G=\sum_i\|P\boldsymbol\Delta_i\|_G^2/\sum_i\|Q\boldsymbol\Delta_i\|_G^2$.
Suppose $0<m_G\le M_G$ bound the metric on the span of the projected shifts:
$m_G\|v\|_2^2\le\|v\|_G^2\le M_G\|v\|_2^2$.
With $\chi=M_G/m_G$,
\begin{equation}
\frac{e_G}{\chi}\le e_E\le\chi e_G.
\label{eq:psi_metric_comparison}
\end{equation}
If this same shift collection satisfies $e_G\le\varepsilon$, then
$\mathrm{PSI}\le\psi_0\le\chi\varepsilon/(1+\chi\varepsilon)$.
\end{corollary}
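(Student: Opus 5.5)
The plan is to prove the three claims in order: the PSI energy identity, the metric comparison, and then their combination.

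\textbf{Energy identity.} Since $P$ and $Q=I-P$ are complementary Euclidean orthogonal projectors, each row satisfies $\|\boldsymbol\Delta_i\|_2^2=\|P\boldsymbol\Delta_i\|_2^2+\|Q\boldsymbol\Delta_i\|_2^2$. Summing over rows gives $\|X\|_F^2=A+B$. Because $\mathcal U_\ell$ has orthonormal columns, $\|X\mathcal U_\ell\|_F^2=\|XP\|_F^2=A$, so $\mathrm{PSI}=A/(A+B+\epsilon_{\mathrm{num}})$. When $B>0$, dividing numerator and denominator of $\psi_0$ by $B$ yields $\psi_0=e_E/(1+e_E)$. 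For the gap, compute
\[
\frac{A}{A+B}-\frac{A}{A+B+\epsilon_{\mathrm{num}}}=\frac{A\,\epsilon_{\mathrm{num}}}{(A+B)(A+B+\epsilon_{\mathrm{num}})},
\]
which equals $\psi_0\,\epsilon_{\mathrm{num}}/(A+B+\epsilon_{\mathrm{num}})$. This quantity is nonnegative because $A\ge0$ and $\epsilon_{\mathrm{num}}>0$.

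\textbf{Metric comparison.} Apply the two-sided metric bound row by row to the projected vectors $P\boldsymbol\Delta_i$ and $Q\boldsymbol\Delta_i$, all of which lie in the span where the bound holds. This gives
\[
m_G A\le\sum_i\|P\boldsymbol\Delta_i\|_G^2\le M_G A,
\qquad
m_G B\le\sum_i\|Q\boldsymbol\Delta_i\|_G^2\le M_G B .
\]
Since $B>0$, the $G$-denominator is at least $m_G B>0$, so $e_G$ is well defined. Taking ratios gives $e_G\le (M_G A)/(m_G B)=\chi e_E$ and $e_G\ge (m_G A)/(M_G B)=e_E/\chi$. Rearranging these yields \eqref{eq:psi_metric_comparison}. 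The argument is the same one used in Proposition~\ref{prop:mismatch}, but with a sum over the finite collection of shifts in place of an expectation.

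\textbf{Final bound.} From $e_G\le\varepsilon$ we get $e_E\le\chi\varepsilon$. The map $u\mapsto u/(1+u)$ is increasing on $[0,\infty)$, so $\psi_0\le\chi\varepsilon/(1+\chi\varepsilon)$. Combining this with $\mathrm{PSI}\le\psi_0$ from the first step completes the chain.

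\textbf{Main obstacle.} No step is hard; the only thing to check is the domain conditions. The metric bound must hold on every projected vector $P\boldsymbol\Delta_i$ and $Q\boldsymbol\Delta_i$, which is why the hypothesis is stated on the span of the projected shifts. The assumption $B>0$ is needed for both $e_E$ and $e_G$ to be well defined. The case $A=0$ needs no separate treatment, since then $\psi_0=\mathrm{PSI}=0$ and every claim holds trivially.
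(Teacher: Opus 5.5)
Your proposal is correct and follows the paper's proof essentially step for step. Orthogonality of $P$ and $Q$ gives $\|X\|_F^2=A+B$ and $\|X\mathcal U_\ell\|_F^2=A$; the row-wise metric bounds place the $G$-weighted energies between the $m_G$ and $M_G$ multiples of $A$ and $B$; and monotonicity of $u/(1+u)$ gives the final bound. Your explicit computation of the stabilizer gap and your check that the $G$-denominator is at least $m_GB>0$ simply spell out what the paper compresses into ``by substitution'' and ``division.''
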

\begin{proof}
Orthogonality gives $\|X\|_F^2=A+B$ and $\|X\mathcal U\|_F^2=A$, proving the energy identities by substitution.
The metric bounds give
$m_GA\le\sum_i\|P\boldsymbol\Delta_i\|_G^2\le M_GA$
and the corresponding bounds for $B$. Division proves Equation~(\ref{eq:psi_metric_comparison}); the last claim follows from monotonicity of $u/(1+u)$.
If $B=0$ and $A>0$, $\psi_0=1$ and the finite-ratio comparison is inapplicable. If $A+B=0$, the stabilized statistic is zero but contains no directional information.
\end{proof}

The shifts in this corollary are observed actor-to-base differences. To apply the bound from Parts~I--II requires a justified relationship to $\bDelta(x)$, matched distributions, and appropriate metric bounds; the notation does not assert that relationship. A small PSI is a small directional energy fraction and need not imply small absolute activation displacement.

\paragraph{Predictor and threshold calibration.}
During an initially stable warm-up, retain detached actor-to-base shift matrices at checks every $M=3$ updates, subject to $N>0$ and
$\|X_\ell^{(t)}\|_F^2/N>\eta_{\min}$. The latter is a floor on the average squared token-level shift norm, not the energy of an averaged shift.
After the reference bases are frozen, evaluate the retained matrices using Equation~(\ref{eq:psi}); let $\mathcal W_\ell$ be the valid warm-up PSI values, as in Algorithm~\ref{alpha_stabler}. Set
\begin{equation}
\begin{aligned}
m_\ell&=\operatorname{median}_{\psi\in\mathcal W_\ell}\psi,&
s_\ell&=1.4826\operatorname{median}_{\psi\in\mathcal W_\ell}|\psi-m_\ell|+\epsilon_{\mathrm{num}},\\
\tau_\ell^{\mathrm{on}}&=m_\ell+3s_\ell,&
\tau_\ell^{\mathrm{safe}}&=m_\ell+2s_\ell.
\end{aligned}
\label{eq:appendix_detector_calibration}
\end{equation}
Calibration is invalid if any $\mathcal W_\ell$ is empty or
$0<\tau_\ell^{\mathrm{safe}}<\tau_\ell^{\mathrm{on}}<1$ fails; training stops for recalibration rather than using such thresholds. After calibration, bases and thresholds are fixed and retained records can be released.

Every $M$ updates, valid observations update
$\bar\psi_\ell\leftarrow0.95\bar\psi_\ell+0.05\mathrm{PSI}_\ell$, initialized at $m_\ell$.
The flag $a_\ell\in\{0,1\}$ starts at zero. It activates after $p=3$ consecutive valid checks with $\bar\psi_\ell>\tau_\ell^{\mathrm{on}}$ and releases when a valid check gives $\bar\psi_\ell<\tau_\ell^{\mathrm{safe}}$.
A valid check with $\bar\psi_\ell\le\tau_\ell^{\mathrm{on}}$, or an invalid check, resets the consecutive-warning count; an invalid check leaves the moving average and flag unchanged. Flags are held between checks and fixed throughout the associated gradient accumulation, backward pass, and optimizer update.
These rules define a detector with smoothing, persistence, and hysteresis. The MAD multiplier does not provide a false-alarm probability for dependent training observations. An initially stable calibration window is an operational requirement, and useful lead times or error rates require empirical evaluation rather than following from Equation~(\ref{eq:appendix_detector_calibration}).

\paragraph{Controller: backward-only gradient projection.}
Let $G_\ell^{(t)}\in\mathbb R^{N\times d}$ be the incoming activation gradient at a monitored output before its local control operation. This is distinct from the sensitivity metric $G$ in Parts~I--II. When several sites are controlled, the incoming gradient already includes any downstream modifications. With a fixed basis and detached flag, pass upstream
\begin{equation}
\widetilde G_\ell^{(t)}
=G_\ell^{(t)}-a_\ell^{(t)}(G_\ell^{(t)}\mathcal U_\ell)\mathcal U_\ell^\top.
\label{eq:controller}
\end{equation}
This hook leaves the forward mapping unchanged at fixed model parameters. At an active site it applies $G_\ell^{(t)}Q_\ell$; at an inactive site it leaves the incoming gradient unchanged. Future forward activations can change through the resulting parameter updates.

\begin{proposition}[Minimal-change removal of principal gradients]
\label{prop:gradient_projection_optimality}
% Compatibility alias for the shortened main-text paragraph.
\label{prop:minimal-change}
For a Euclidean orthogonal projector $P$ and $Q=I-P$, the unique solution of
\begin{equation}
\underset{Z:\,ZP=0}{\operatorname{minimize}}\;\frac12\|Z-G_\ell\|_F^2
\label{eq:gradient_projection_problem}
\end{equation}
is $\widetilde G_\ell=G_\ell Q$, and
\begin{equation}
\widetilde G_\ell P=0,\qquad
\widetilde G_\ell Q=G_\ell Q,\qquad
\|\widetilde G_\ell\|_F^2=\|G_\ell\|_F^2-\|G_\ell P\|_F^2.
\label{eq:gradient_projection_properties}
\end{equation}
\end{proposition}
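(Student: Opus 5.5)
The plan is to treat the problem as a Euclidean projection in the Frobenius inner product onto the linear subspace $\mathcal Z=\{Z\in\R^{N\times d}: ZP=0\}$. The first step identifies $\mathcal Z$ explicitly. If $ZP=0$, then $Z=ZP+ZQ=ZQ$. Conversely, any matrix of the form $YQ$ satisfies $YQP=0$, since $QP=(I-P)P=P-P^2=0$. Hence $\mathcal Z=\{YQ:Y\in\R^{N\times d}\}$, the set of matrices whose rows lie in $\operatorname{range}(Q)$. The map $Z\mapsto ZQ$ is linear and idempotent. Because $Q$ is symmetric, it is also self-adjoint for $\langle A,B\rangle_F=\tr(A^\top B)$: indeed $\tr((AQ)^\top B)=\tr(QA^\top B)=\tr(A^\top BQ)$ by cyclicity. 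It is therefore the Frobenius-orthogonal projector onto $\mathcal Z$.

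The second step proves optimality and uniqueness by a Pythagorean split. For any feasible $Z$ write
\begin{equation*}
Z-G_\ell=(Z-G_\ell Q)-G_\ell P .
\end{equation*}
The first term lies in $\mathcal Z$, because $ZP=0$ and $G_\ell QP=0$. The second term is Frobenius-orthogonal to $\mathcal Z$: for $Z'\in\mathcal Z$ we have $\tr((G_\ell P)^\top Z')=\tr(PG_\ell^\top Z')=\tr(G_\ell^\top Z'P)=0$. It follows that
\begin{equation*}
\|Z-G_\ell\|_F^2=\|Z-G_\ell Q\|_F^2+\|G_\ell P\|_F^2 .
\end{equation*}
This is minimized exactly when $Z=G_\ell Q$, which gives existence and uniqueness at once, since the objective is strictly convex on the affine feasible set. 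As an alternative check, one can use the KKT conditions with a Lagrange multiplier $\Lambda$ for $ZP=0$: stationarity gives $Z=G_\ell-\Lambda P^\top$, and substituting into the constraint forces $\Lambda P=G_\ell P$, again yielding $Z=G_\ell Q$.

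The third step reads off the listed properties. The identity $\widetilde G_\ell P=G_\ell QP=0$ holds, and $\widetilde G_\ell Q=G_\ell Q^2=G_\ell Q$. The norm identity follows from $\|G_\ell\|_F^2=\|G_\ell P\|_F^2+\|G_\ell Q\|_F^2$, which is the same orthogonality as above applied rowwise to $G_\ell=G_\ell P+G_\ell Q$. Finally, $G_\ell Q=G_\ell-(G_\ell\mathcal U_\ell)\mathcal U_\ell^\top$ when $P=\mathcal U_\ell\mathcal U_\ell^\top$ with orthonormal columns, which matches Equation~(\ref{eq:controller}) at an active site.

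There is no genuine obstacle here. The only point needing care is the orthogonality of the cross term, which uses symmetry of $P$. That is exactly why the statement requires a Euclidean orthogonal projector: for an oblique projector, $G_\ell Q$ would still be feasible but would no longer be the minimizer.
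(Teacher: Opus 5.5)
Your proof is correct and takes essentially the same route as the paper. Both arguments observe that feasibility gives $Z=ZQ$, use the Frobenius-orthogonal split $\|Z-G_\ell\|_F^2=\|Z-G_\ell Q\|_F^2+\|G_\ell P\|_F^2$ to identify the unique minimizer, and read off the listed identities from $P^2=P$, $Q^2=Q$, and $PQ=0$; your trace-cyclicity check that the cross term vanishes, which uses the symmetry of $P$, simply spells out the step the paper states without detail.
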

\begin{proof}
Feasibility gives $Z=ZQ$. Principal and complement components are Frobenius-orthogonal, hence
$\|Z-G_\ell\|_F^2=\|Z-G_\ell Q\|_F^2+\|G_\ell P\|_F^2$.
The unique minimizer is $Z=G_\ell Q$. The remaining identities follow from $P^2=P$, $Q^2=Q$, and $PQ=0$.
\end{proof}

The proposition concerns the incoming gradient at one hook. It preserves that gradient's complement and cannot increase its local norm. It does not imply a smaller parameter-gradient norm, preservation of the uncontrolled network's complete gradient, or an analogous projection of future activation changes. Those quantities also depend on network Jacobians, other control sites, and the optimizer.

\paragraph{From projected gradients to activation changes.}
We analyze one post-warm-up SGD step on a fixed batch. This provides a local relation between a projected activation gradient and the forward displacement it induces, rather than an optimizer-independent stability guarantee.

\begin{lemma}[Local principal leakage under projected SGD]
\label{lem:projected_sgd_leakage}
% Compatibility alias for the shortened main-text paragraph.
\label{lem:activation-leakage}
Fix a token batch, one active control site, and a parameter block $\vartheta$. Suppose the loss depends on this block only through the selected activations, all other parameters are held fixed, and $h(\vartheta)$ is twice continuously differentiable locally. Stack token activations as $h=\operatorname{vec}(H^\top)$, and define
$J=\partial h/\partial\vartheta$, $\Pi=I_N\otimes P$, $\mathcal Q=I-\Pi$, and $g=\nabla_h\mathcal L$.
For a plain SGD step with $\alpha\ge0$ whose update segment stays within a neighborhood of bounded second derivative,
\begin{equation}
\delta\vartheta=-\alpha J^\top\mathcal Qg,\qquad
\delta h=-\alpha K\mathcal Qg+R,\qquad K=JJ^\top,
\label{eq:projected_sgd_activation}
\end{equation}
where $\|R\|_2\le c_h\alpha^2\|J^\top\mathcal Qg\|_2^2$ for a local constant $c_h$. Consequently,
\begin{equation}
\|\Pi\delta h\|_2
\le\alpha\|\Pi K\mathcal Q\|_2\|\mathcal Qg\|_2
+c_h\alpha^2\|J^\top\mathcal Qg\|_2^2.
\label{eq:principal_leakage_bound}
\end{equation}
\end{lemma}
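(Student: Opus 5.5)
The proof has three steps: compute the parameter gradient through the hook by the chain rule, Taylor-expand the activation map along the SGD step, and apply $\Pi$ with the triangle inequality.

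\emph{Gradient through the hook.} The loss depends on $\vartheta$ only through $h(\vartheta)$, so the uncontrolled gradient would be $J^\top g$. The active site replaces each token's incoming activation gradient by its complement component. By Equation~(\ref{eq:controller}), at the stacked level this is $g\mapsto(I_N\otimes Q)g=\mathcal Q g$, where $\mathcal Q=I-\Pi$ with $\Pi=I_N\otimes P$. The stacking convention $h=\operatorname{vec}(H^\top)$ makes each token's $d$-block contiguous, so right-multiplication of rows by $Q$ becomes the block-diagonal action $I_N\otimes Q$.

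Backpropagation then produces $J^\top\mathcal Q g$ for this block. The basis and flag are detached, so no extra terms arise. A plain SGD step with rate $\alpha$ therefore gives $\delta\vartheta=-\alpha J^\top\mathcal Qg$, which is the first identity in Equation~(\ref{eq:projected_sgd_activation}).

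\emph{Taylor expansion along the step.} Expand $h$ along the segment $\vartheta+s\,\delta\vartheta$, $s\in[0,1]$, using Taylor's theorem with integral (or mean-value) remainder:
\[
h(\vartheta+\delta\vartheta)-h(\vartheta)=J\delta\vartheta+R,\qquad \|R\|_2\le \tfrac12 L_2\|\delta\vartheta\|_2^2 .
\]
Here $L_2$ bounds the second derivative of $h$ in operator norm on the neighborhood; the segment is assumed to stay there, and $C^2$ regularity makes such a bound available. Substituting $\delta\vartheta$ gives $J\delta\vartheta=-\alpha JJ^\top\mathcal Qg=-\alpha K\mathcal Qg$. The remainder satisfies $\|R\|_2\le\tfrac12L_2\alpha^2\|J^\top\mathcal Qg\|_2^2$, which fixes $c_h=L_2/2$.

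\emph{Applying the principal projector.} Apply $\Pi$ to $\delta h$ and use the triangle inequality. The linear term contributes $\alpha\|\Pi K\mathcal Q g\|_2$. Because $\mathcal Q$ is idempotent, $\Pi K\mathcal Q g=\Pi K\mathcal Q(\mathcal Q g)$, so submultiplicativity gives $\alpha\|\Pi K\mathcal Q\|_2\|\mathcal Q g\|_2$. For the remainder, $\|\Pi R\|_2\le\|R\|_2$ since $\Pi$ is an orthogonal projector: it is a Kronecker product of orthogonal projectors. Together these give Equation~(\ref{eq:principal_leakage_bound}).

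\emph{Expected obstacle.} The only delicate point is making the stacking and projection bookkeeping rigorous. One must check that the hook's row-wise map $G\mapsto GQ$ is exactly $\mathcal Q$ acting on $\operatorname{vec}(G^\top)$, namely $\operatorname{vec}((GQ)^\top)=\operatorname{vec}(QG^\top)=(I_N\otimes Q)\operatorname{vec}(G^\top)$ using $Q=Q^\top$. One must also state the remainder constant cleanly. Everything else is routine.
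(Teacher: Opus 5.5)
Your proposal is correct and follows the paper's proof essentially step for step. You take the chain rule through the detached hook to get $J^\top\mathcal Qg$, apply Taylor's theorem with $c_h$ equal to half a second-derivative bound along the segment, and finish with $\|\Pi\|_2\le1$ plus submultiplicativity; your $\mathcal Q^2=\mathcal Q$ step and the $\operatorname{vec}$ bookkeeping just make explicit what the paper leaves implicit. One small correction: cite the integral form of the remainder rather than a mean-value equality, which does not hold for vector-valued maps, though the resulting bound is the same.
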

\begin{proof}
The single-path dependency gives the modified parameter gradient $J^\top\mathcal Qg$.
Taylor's theorem for the vector-valued map gives
$h(\vartheta+\delta\vartheta)-h(\vartheta)=J\delta\vartheta+R$,
with $\|R\|_2\le c_h\|\delta\vartheta\|_2^2$; one may take $c_h$ as half a uniform bilinear operator-norm bound on its second derivative along the segment.
Substitute the SGD step, apply $\Pi$, and use $\|\Pi\|_2\le1$ and submultiplicativity to obtain Equation~(\ref{eq:principal_leakage_bound}).
\end{proof}

\paragraph{What projection does and does not preserve.}
First-order principal displacement is controlled by $\Pi K\mathcal Q$, not by $\Pi\mathcal Q=0$ alone. Relative to an unprojected SGD step at the same state, batch, and gradient,
\begin{equation*}
\delta h_{\mathrm{proj}}-\delta h_{\mathrm{plain}}
=\alpha K\Pi g+(R_{\mathrm{proj}}-R_{\mathrm{plain}}).
\end{equation*}
Thus the first-order difference in complement displacement is $\alpha\mathcal QK\Pi g$, which need not vanish. If $\Pi K\mathcal Q=0$, symmetry of $K$ also gives $\mathcal QK\Pi=0$: the projected update then has no first-order principal displacement and preserves the plain step's complement displacement to first order. Without this additional decoupling, the lemma is a bound, not a guarantee of either preservation property or a comparison showing less leakage than the unprojected step.

\paragraph{Preconditioning and additional update terms.}
For a fixed realized parameter-space preconditioner $D$ and an additional direction $b$, write a particular update as
$\delta\vartheta=-\alpha(DJ^\top\mathcal Qg+b)$.
The same Taylor argument gives, provided its update segment satisfies the same local regularity condition,
\begin{equation*}
\begin{aligned}
\|\Pi\delta h\|_2
\le{}&\alpha\|\Pi JDJ^\top\mathcal Q\|_2\|\mathcal Qg\|_2
+\alpha\|\Pi Jb\|_2\\
&+c_h\alpha^2\|DJ^\top\mathcal Qg+b\|_2^2.
\end{aligned}
\end{equation*}
The first-order coupling is now $JDJ^\top$; $b$ may contain optimizer-state or other update contributions not represented by the current projected gradient. This is a conditional estimate for a specified step, not a bound on how an adaptive preconditioner or its state evolves. Applying it to a concrete optimizer requires its actual update decomposition. Multiple control sites likewise require the full modified gradient flow; the single-site SGD lemma does not supply a joint guarantee for the complete training algorithm.

\paragraph{Local leakage versus cumulative PSI and stability.}
The lemma bounds one update on a fixed batch, whereas PSI measures actor-to-base differences accumulated over training and is a ratio of energies. Even zero new principal displacement need not decrease PSI: for a one-token difference $(1,1)$ with the first coordinate principal, a complement-only change $(0,-\alpha)$, $0<\alpha<1$, raises the unstabilized PSI from $1/2$ to $1/(1+(1-\alpha)^2)$.
Long-run guarantees would additionally require uniform control of couplings, update magnitudes, remainders, optimizer state, and changing observation distributions. We therefore treat warning utility and training stabilization as empirical outcomes in Figures~\ref{fig7}(c) and \ref{fig8}, not consequences of a monotone-PSI or no-collapse theorem.

\paragraph{Implementation and overhead.}
The mathematical properties require aligned actor/base observations and flags fixed before the controlled backward pass; they do not require a particular graph-retention strategy. Reusing a compatible actor forward and using a separate detached monitoring pass have different runtime and memory costs. Neither the projection identity nor the absence of additional trainable parameters bounds end-to-end overhead. Such claims require measurements that include reference evaluation, any additional actor pass, covariance estimation, calibration storage, and communication.

\subsection{Summary of theoretical and empirical results}
\label{sec:summary}
Table~\ref{tab:theory_summary} records the object and scope of each conclusion. Parts~I--II characterize conditional target-energy recovery and alternative intervention classes. Part~III establishes finite-collection energy identities, exact local gradient-projection properties, and a conditional single-step relationship to activation displacement.

\begin{table}[t]
\caption{Summary of recovery, steering geometry, and gradient-control results, with their assumptions and empirical scope.}
\label{tab:theory_summary}
\centering\small
\renewcommand{\arraystretch}{1.18}
\begin{tabularx}{\textwidth}{@{}p{0.38\textwidth}p{0.26\textwidth}X@{}}
\toprule
\textbf{Statement} & \textbf{Reference} & \textbf{Scope / status}\\
\midrule
\multicolumn{3}{@{}l}{\emph{Part I --- fixed-vector recovery}}\\
Distillation admits a weighted least-squares approximation & Proposition~\ref{prop:wls} & Fixed contexts; controlled network and KL remainders\\
The regularized optimum is an exponential reward tilt & Lemma~\ref{lem:tilt} & Exact optimum of the stated objective, $\beta>0$\\
Next-token ratios follow continuation normalizers & Lemma~\ref{lem:next} & Exact under the tilt model\\
Shared profiles and consistent pullbacks give shared shifts & Lemmas~\ref{lem:profile}, \ref{lem:pullback} & Additional sufficient structural conditions\\
$\rho=\mathrm{SNR}/(1+\mathrm{SNR})$ & Theorem~\ref{thm:snr} & Constant-metric surrogate target energy\\
Leading spectral energy share upper-bounds $\rho$ & Corollary~\ref{cor:spectral} & Upper bound, not a general equality\\
$\rho\ge\kappa/[(1+\eta)(1+\varepsilon)]$ & Theorem~\ref{thm:bound} & Assumed localization, moments, and metric compatibility\\
\midrule
\multicolumn{3}{@{}l}{\emph{Part II --- geometry of steering interventions}}\\
Constrained constant-vector fitting is a metric projection & Theorem~\ref{thm:svd} & Fixed target family and metric; exact optimization\\
Adaptive subspace recovery is a spectral partial sum & Equation~(\ref{eq:adaptive_subspace_recovery}) & Unrestricted context-dependent coefficients\\
Later orthogonal directions can remain useful & Section~\ref{sec:multi} & Empirical standalone interventions; not a rank estimate\\
Euclidean and sensitivity-weighted intrusion are related & Proposition~\ref{prop:mismatch} & Same targets; explicit metric constants\\
Weight components bound first-order activation leakage & Lemma~\ref{lem:bridge} & Fixed input; basis-mismatch and mixing factors\\
\midrule
\multicolumn{3}{@{}l}{\emph{Part III --- monitoring and gradient control}}\\
PSI measures Euclidean principal energy of analyzed shifts & Corollary~\ref{cor:psi} & Exact finite-collection identity; stabilizer accounted for\\
Projection minimally removes incoming principal gradients & Proposition~\ref{prop:gradient_projection_optimality} & Exact at each active hook with fixed projector\\
Principal activation displacement obeys a coupling bound & Lemma~\ref{lem:projected_sgd_leakage} & Single-site, fixed-batch, local plain SGD\\
Warm-up specifies a fixed reference and thresholds & Algorithm~\ref{alpha_stabler} & Calibration procedure; no detection-error guarantee\\
PSI warns and control improves training stability & Figures~\ref{fig7}(c) and \ref{fig8} & Empirical findings in the evaluated settings\\
\bottomrule
\end{tabularx}
\end{table}

\paragraph{Summary.}
The analysis links three distinct questions while keeping their mathematical objects distinct. First, shared teacher targets can permit strong constant-vector recovery under local approximation, moment, and metric conditions. Second, constrained and adaptive fitting describe different intervention capacities; neither standalone orthogonality nor low learned-vector PC energy alone determines the teacher target's rank or localization. Third, Alpha-Stabler uses a fixed activation reference for monitoring and local gradient control. Its projection has exact minimal-change properties, while its effect on parameter-induced activation displacement depends on coupling and the optimizer. These conditional results organize the empirical capacity and localization observations and motivate the controller; they do not replace empirical evidence for task-score recovery, early warning, stability, or runtime overhead.

\end{document}